\newcommand*\laplace{\mathop{}\!\Delta}
\DeclareMathOperator{\diag}{diag}
\DeclareMathOperator{\trace}{tr}
\DeclareMathOperator{\vect}{vec}
\DeclareMathOperator{\lip}{Lip}
\theoremstyle{plain}
\newtheorem{theorem}{Theorem}
\newtheorem{proposition}[theorem]{Proposition}
\theoremstyle{definition}
\theoremstyle{remark}
\title{On the Lipschitz Constant of Deep Networks and Double Descent}
\author{
  Matteo Gamba \\
  KTH \\
  Sweden \\
  \texttt{mgamba@kth.se} \\
  \And
  Hossein Azizpour \\
  KTH \\
  Sweden \\
  \texttt{azizpour@kth.se} \\
  \And
  M\r{a}rten Bj\"{o}rkman \\
  KTH \\
  Sweden \\
  \texttt{celle@kth.se} \\
}
\begin{document}
\maketitle

\begin{abstract}
Existing bounds on the generalization error of deep networks assume some form of smooth or bounded dependence on the input variable, falling short of investigating the mechanisms controlling such factors in practice. In this work, we present an extensive experimental study of the empirical Lipschitz constant of deep networks undergoing double descent, and highlight non-monotonic trends strongly correlating with the test error. Building a connection between parameter-space and input-space gradients for SGD around a critical point, we isolate two important factors -- namely loss landscape curvature and distance of parameters from initialization -- respectively controlling optimization dynamics around a critical point and bounding model function complexity, even beyond the training data. Our study presents novel insights on implicit regularization via overparameterization, and effective model complexity for networks trained in practice.~\footnote{Source code available at \url{https://github.com/magamba/overparameterization}}\footnote{This is the uncompressed version of the final paper presented at BMVC 2023.}

\end{abstract}

\section{Introduction}
\label{sec:introduction}
A longstanding question towards understanding the remarkable generalization ability of deep networks is characterizing the hypothesis class of models \textit{trained in practice}, thus isolating properties of the networks' model function that capture generalization~\citep{hanin2019deep,neyshabur2015search}. Chiefly, a central problem is understanding the role played by overparameterization~\citep{arora18optimization,neyshabur2018role,zhang2018understanding} -- a key design choice of state of the art models -- in promoting regularization of the model function. 

Modern overparameterized networks can achieve good generalization while perfectly interpolating the training set~\citep{nakkiran2019deep}. This phenomenon is described by the \textit{double descent} curve of the test error~\citep{belkin2019reconciling,geiger2019jamming}: as model size increases, the error follows the classical bias-variance trade-off curve~\citep{geman1992biasvariance}, peaks when a model is large enough to interpolate the training data, and then decreases again as model size grows further~\citep{nakkiran2019deep,belkin2019reconciling}. 
Thus, a promising direction for capturing model complexity is studying regularity w.r.t.\ smooth interpolation of training data. 

Interestingly, many existing bounds on the generalization error of deep networks \textit{postulate} bounded dependence of the model function on the input variable, via bounded Lipschitz constant~\citep{kawaguchi2022robustness,ma2021linear,wei2019data,nagarajan2018deterministic,bartlett2017spectrally}, falling short of investigating the mechanisms controlling Lipschitz continuity and input-space smoothness in relation to interpolation.

Thus, a natural question to ask is \textit{whether bounded Lipschitzness -- a key theoretical assumption for representing well-behaved model functions for fixed-size architectures -- provides a faithful representation of the hypothesis class of networks trained in practice, when model size varies in the overparameterized setting}. Specifically, any notion of regularity of model functions capturing generalization should mirror the non-monotonic trend of the test error. 

\paragraph{Contributions} In this work, (1) we present an empirical investigation of input-space smoothness of deep networks through a lower bound on their Lipschitz constant, capturing smoothness of interpolation of the training data, as model size varies; (2) we observe non-monotonic trends for the empirical Lipschitz lower bound, showing strong correlation with double descent; (3) we provide an upper bound on the true Lipschitz constant, also mirroring double descent; (4) we establish a theoretical connection between the observed trends and parameter-space dynamics of SGD in terms of sharpness of the loss landscape; (5) we present several correlates of double descent, providing insights on the hypothesis class of networks trained in practice and their effective complexity.

\paragraph{Experimental Setup} We study deep networks under double descent, when model size is controlled by network width.
We reproduce the double descent curves of the test error~\citep{belkin2019reconciling} by training a family of ConvNets and ResNet18s~\citep{he2015delving} on the CIFAR datasets~\citep{krizhevsky2009learning} with up to $20\%$ training labels randomly perturbed. Following \citet{nakkiran2019deep}, we control model size by increasing the number $\omega$ of learned feature maps of each convolutional stage in both model families, following the progression ${[}\omega, 2\omega, 4\omega, 8\omega {]}$, for $\omega = 1, \ldots 64$.  To isolate the role of overparameterization, we remove potential confounders from the optimization process by training all networks with crossentropy loss and SGD with momentum and fixed learning rate, without any explicit regularization (e.g.\ batch norm, weight decay. Full details in appendix~\ref{sec:appendix:setup}).

Figure~\ref{fig:findings:lipschitz} (top) shows the double descent curve for the test error for our experimental setting, with the test error showing the classic U-shaped curve for small models, and a second descent as the degree of parameterization grows further.

Hereafter, we denote with \textit{interpolation threshold} the smallest model width perfectly classifying the training data. Furthermore, we refer to the Lipschitz lower bound (introduced in section~\ref{sec:findings}) as the \textit{empirical Lipschitz constant}. We emphasize that our study focuses on the trends presented by Lipschitz smoothness on the training data, rather than on precisely estimating the true Lipschitz constant of deep networks (which is NP-hard~\citep{jordan2020exactly,virmaux2018lipschitz}).

\paragraph{Outline of the Paper} Section~\ref{sec:findings} presents our main results, connecting input-smoothness with parameter-space curvature of the loss landscape and model function. Section~\ref{sec:implications} discusses broader implications of our results. Finally, section~\ref{sec:rel_work} discusses related works.

\section{Input-Smoothness Follows Double Descent}
\label{sec:findings}

\begin{figure*}[t]
    \centering
    \includegraphics[width=0.22\linewidth, trim={0cm 0cm 12cm 12cm}, clip]{./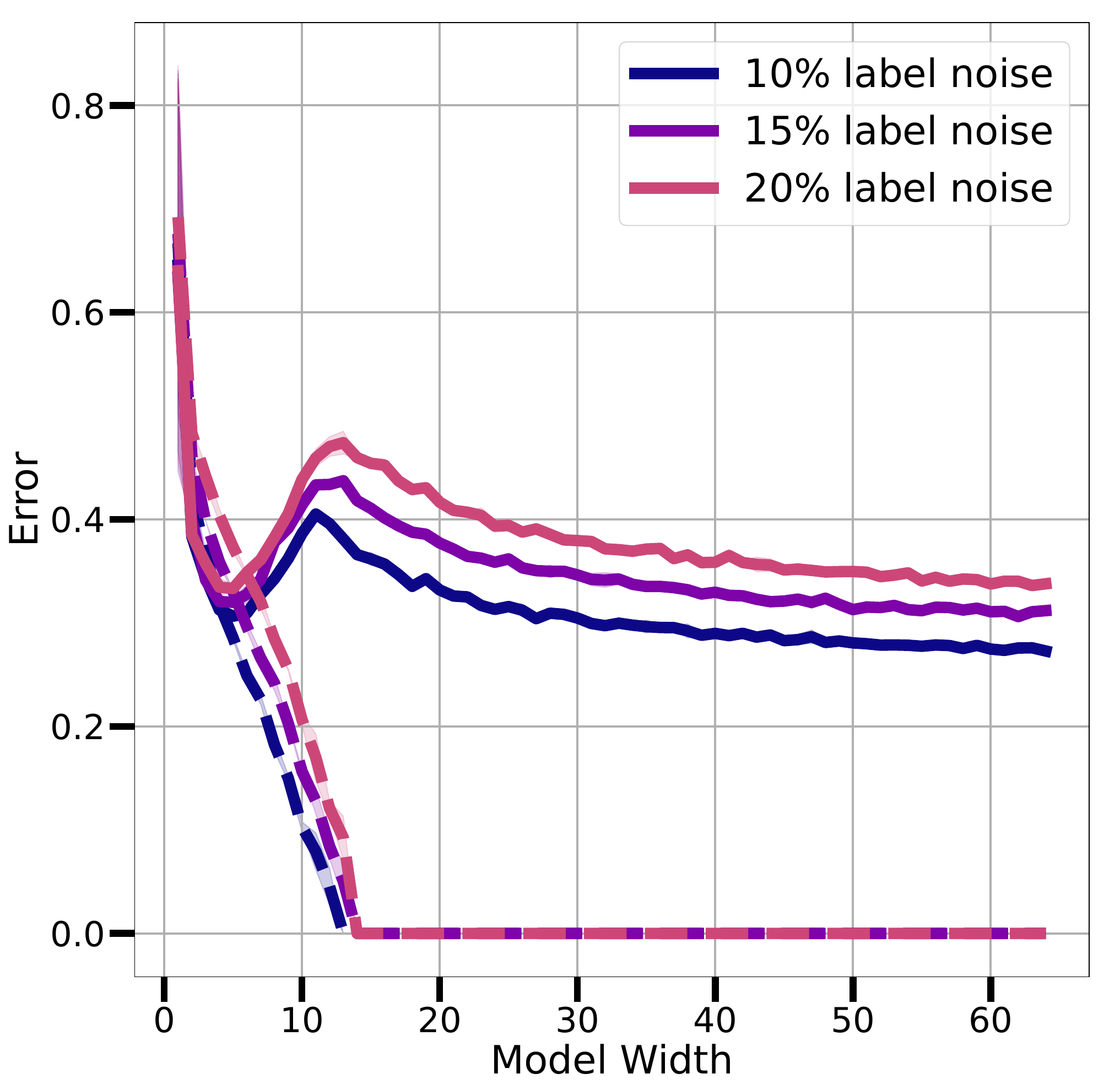}~
    \includegraphics[width=0.22\linewidth, trim={0cm 0cm 12cm 12cm}, clip]{./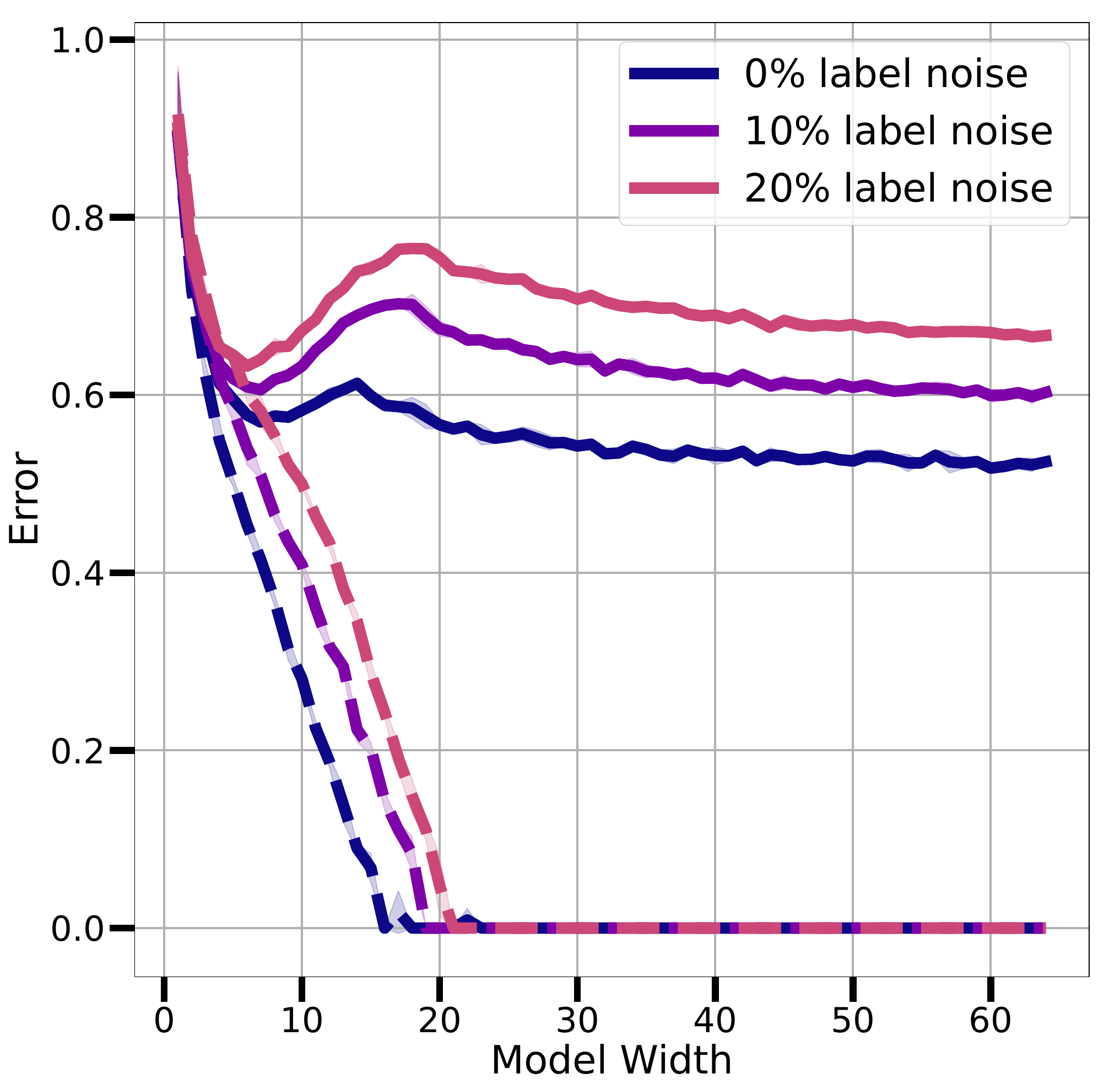}~
    \includegraphics[width=0.22\linewidth, trim={0cm 0cm 12cm 12cm}, clip]{./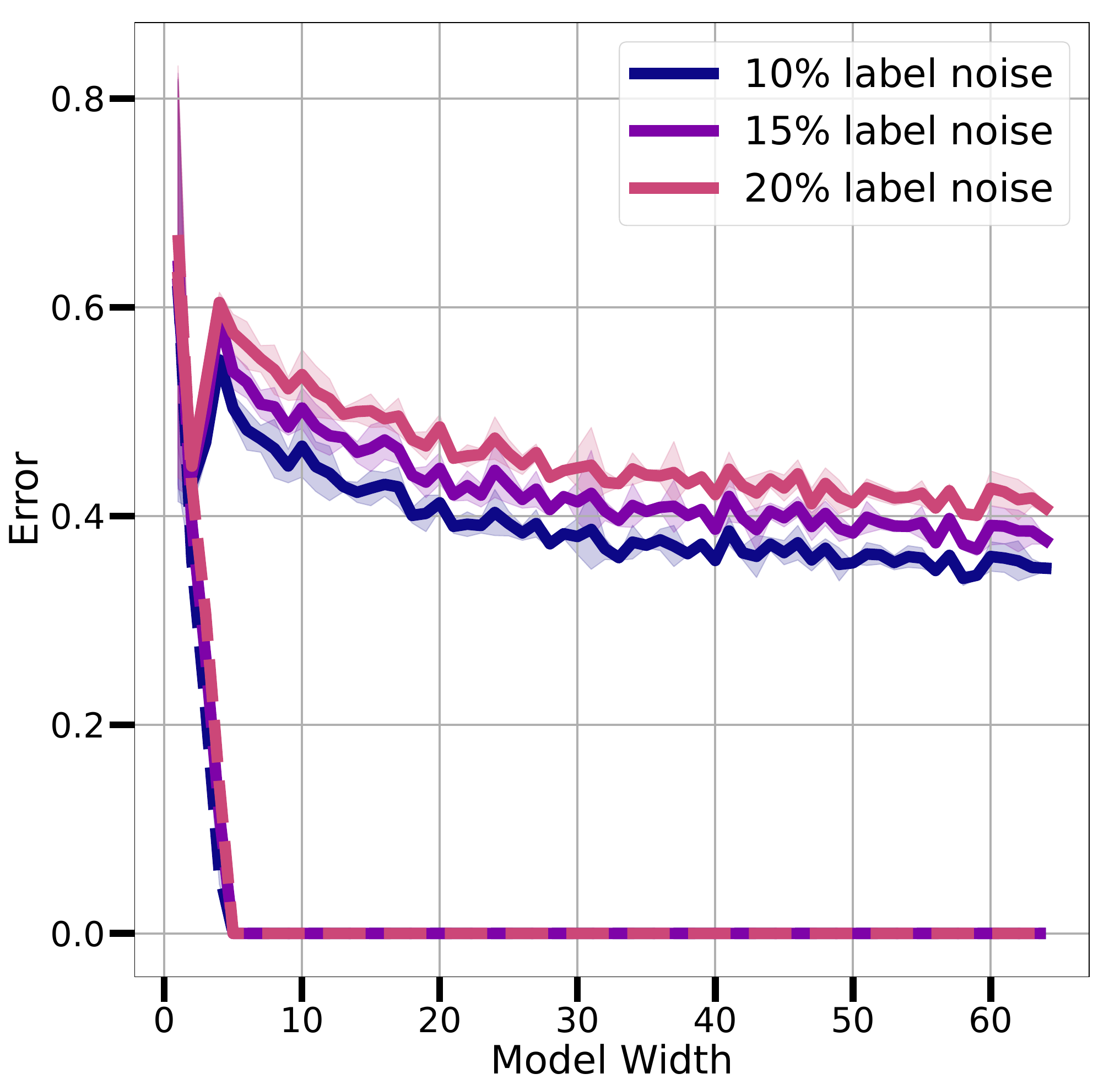}
    \includegraphics[width=0.22\linewidth, trim={0cm 0cm 12cm 12cm}, clip]{./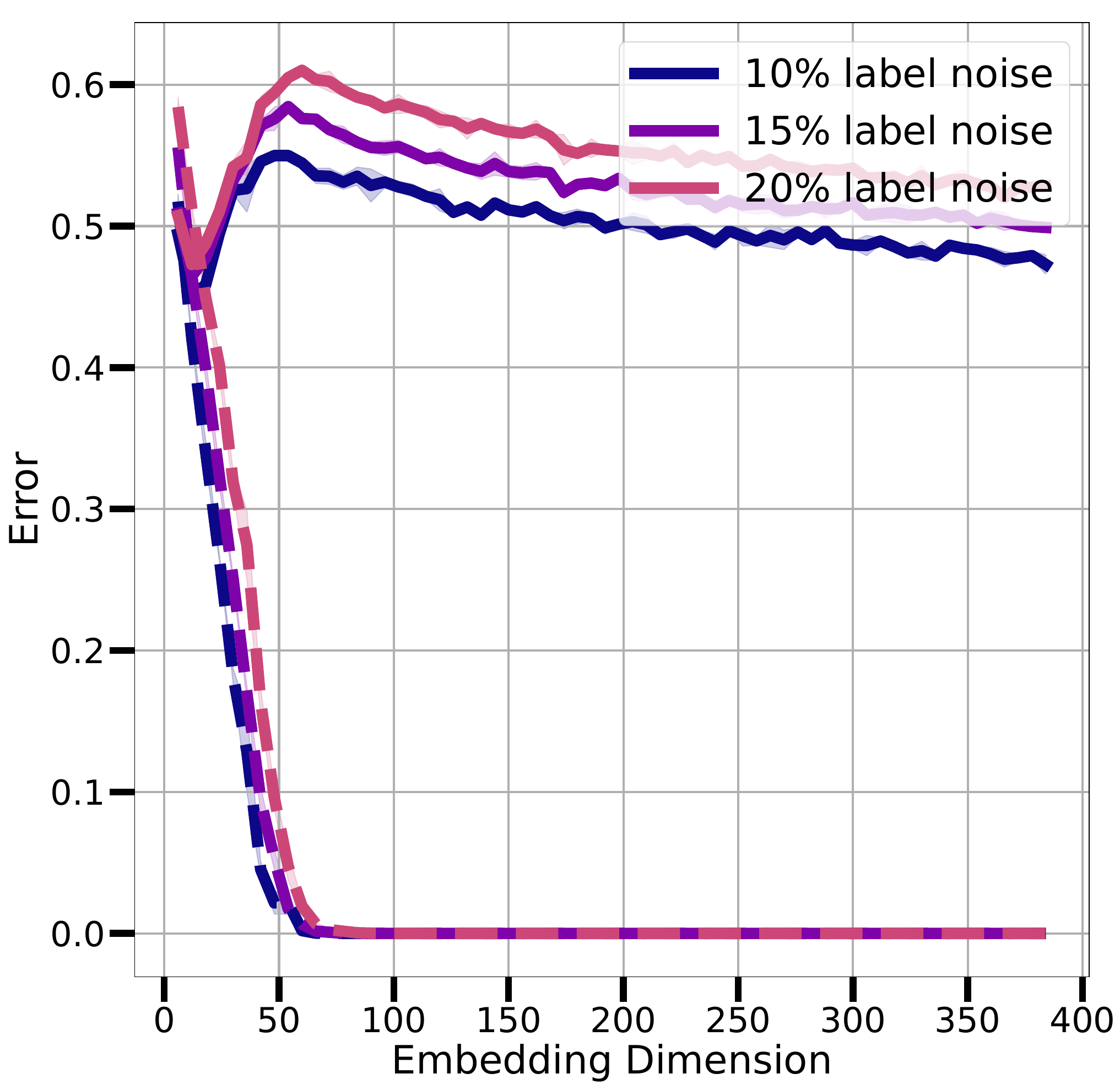}\\
    \includegraphics[width=0.22\linewidth, trim={0cm 0cm 12cm 12cm}, clip]{./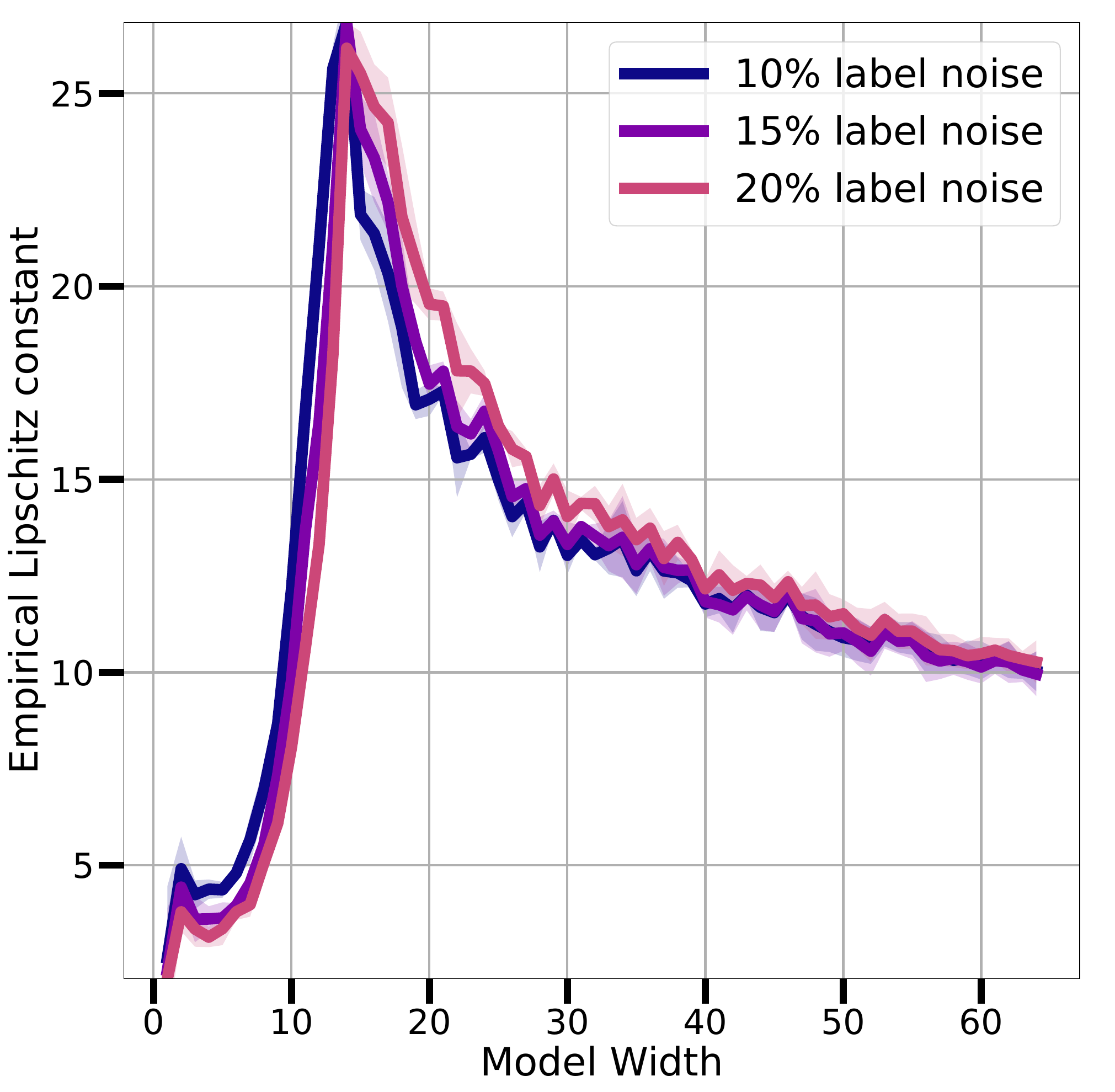}~
    \includegraphics[width=0.22\linewidth, trim={0cm 0cm 12cm 12cm}, clip]{./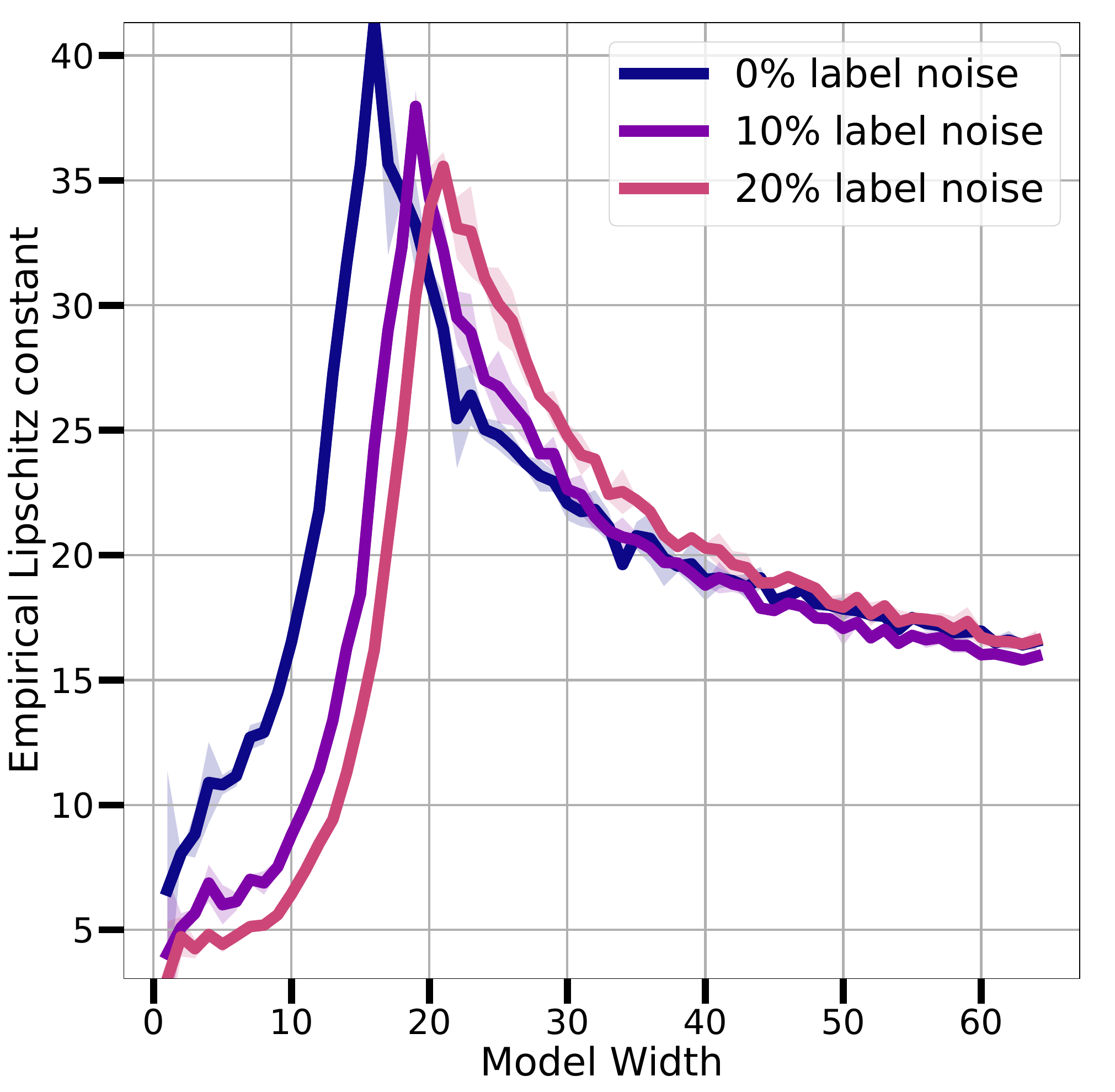}~
    \includegraphics[width=0.22\linewidth, trim={0cm 0cm 12cm 12cm}, clip]{./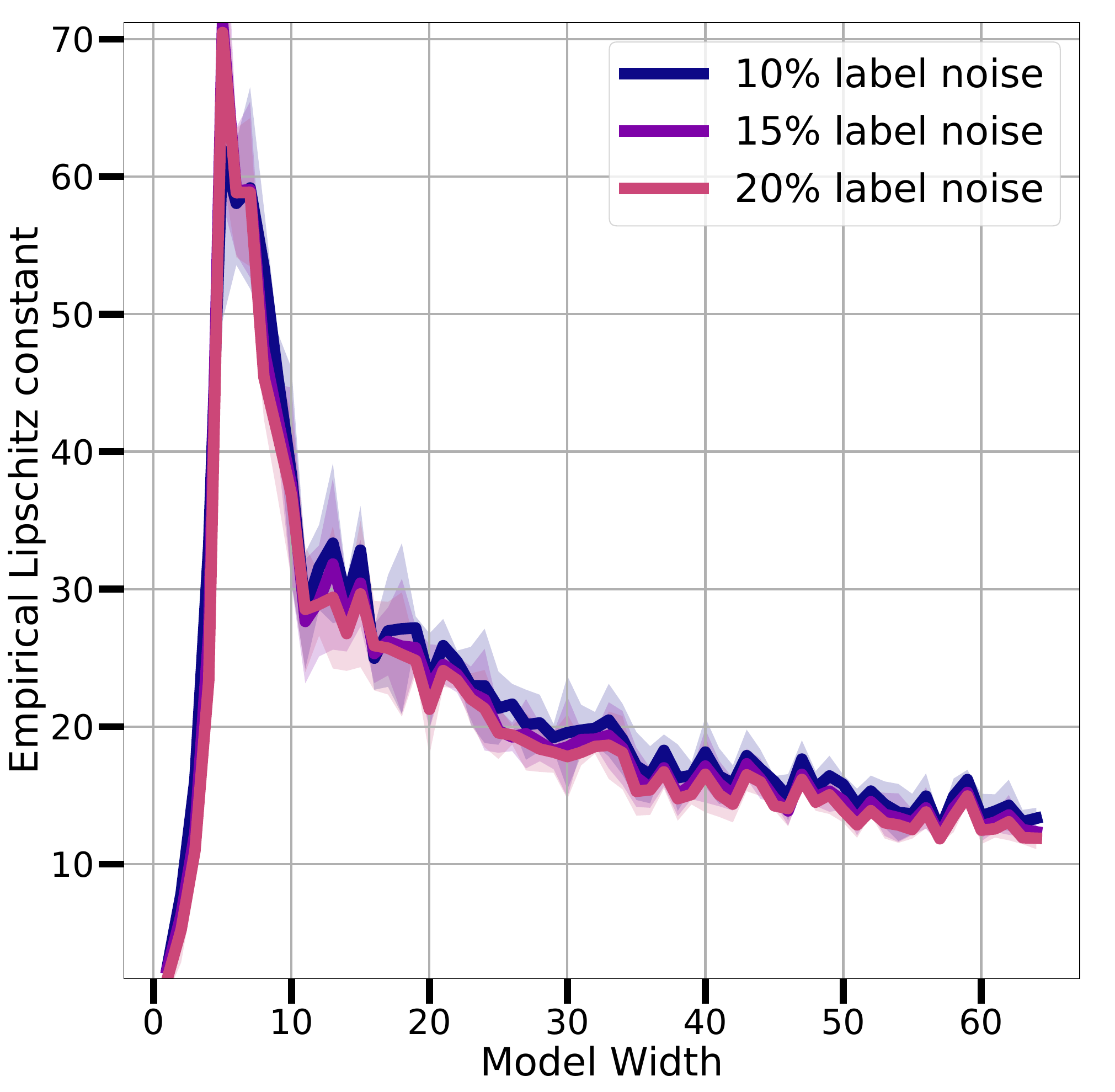}~
    \includegraphics[width=0.22\linewidth, trim={0cm 0cm 12cm 12cm}, clip]{./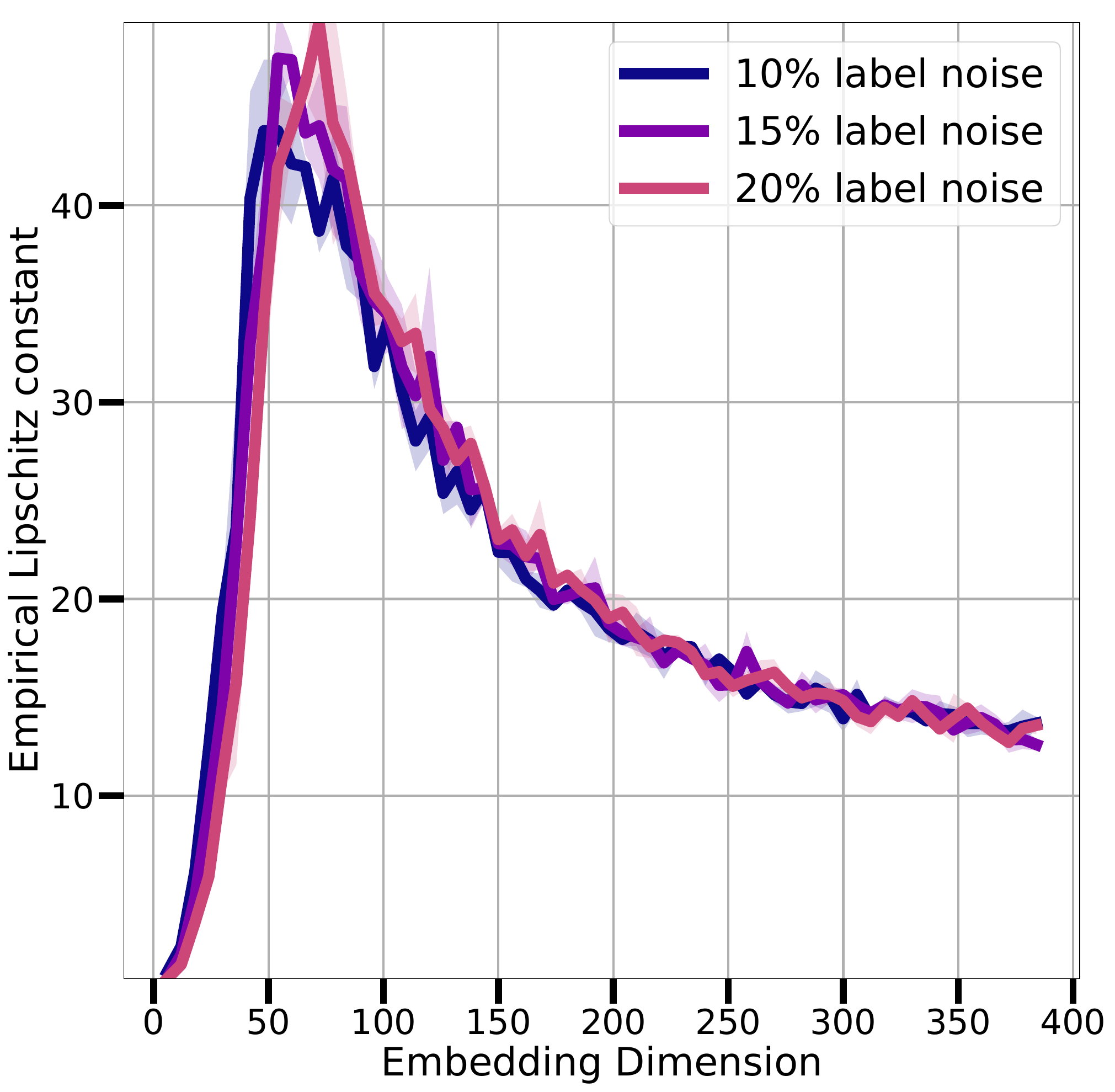}
    \caption{(Top) \textbf{Train error} (dashed) and \textbf{test error} (solid) for our experimental setting, with the test error undergoing double descent as model size increases. (Left to right) ConvNets trained on CIFAR-10 (left) and CIFAR-100 (mid-left), ResNets trained on CIFAR-10 (mid-right) and Vision Transformers on CIFAR-10 (right). (Bottom) \textbf{Empirical Lipschitz constant} for the same models. The Lipschitz lower bound depends non-monotonically on model size, strongly correlating with double descent, showing that overparameterization promotes regularization of the learned model functions via increased local Lipschitz continuity.}
    \label{fig:findings:lipschitz}
\end{figure*}

We begin by introducing the empirical Lipschitz constant for piece-wise linear networks. We consider feed-forward networks $\mathbf{f}(\mathbf{x},\bm{\theta}) : \Omega \times \mathbb{R}^p \to \mathbb{R}^K$, composing $L$ affine layers with the continuous piece-wise linear activation ReLU $\phi(x) = \max\{0, x\}$, interpreted as functions $$\mathbf{f}(\mathbf{x}, \bm{\theta}) = \bm{\theta}^L\phi(\bm{\theta}^{L-1}\phi(\cdots\phi(\bm{\theta}^1\mathbf{x} + \mathbf{b}^1)) + \mathbf{b}^{L-1}) + \mathbf{b}^L,$$ with $\bm{\theta} = (\vect(\bm{\theta}^1),\mathbf{b}^1, \ldots, \vect(\bm{\theta}^L),\mathbf{b}^L )$ representing the vectorized model parameter, and $\mathbf{x} \in \Omega \subseteq \mathbb{R}^d$ the input to the network~\footnote{typically $\Omega$ is a bounded domain, e.g.\ RGB pixels.}.

For each fixed value of $\bm{\theta}$, $\mathbf{f}_{\bm{\theta}}: \mathbb{R}^d \to \mathbb{R}^K$ corresponds to a fixed hypothesis in the space $\mathcal{H}$ of all functions expressible by the network architecture. Each model function $\mathbf{f}_{\bm{\theta}}$ is itself continuous piece-wise linear, and partitions its input space  $\Omega$ into disjoint convex polytopes $P_\epsilon$ known as activation regions~\citep{raghu2017expressive,montufar2014number}, on each of which a linear function is computed, with $\cup_\epsilon P_\epsilon = \Omega$. By piece-wise linearity, one can write \mbox{$\mathbf{f}_{\bm{\theta}}(\mathbf{x}) = \sum_\epsilon \mathbbm{1}_{P_\epsilon(\mathbf{x})} \big{[}\bm{\theta}_\epsilon \mathbf{x} + \mathbf{b}_\epsilon \big{]}$,} 
where the indicator function selects the activation region $P_\epsilon$ according to $\mathbf{x}$, and $\bm{\theta}_\epsilon$ represents conditioning the factorization $\bm{\theta}_\epsilon := \prod_{\ell=1}^L \diag(S^\ell_\mathbf{x})\bm{\theta}^\ell$ by the binary activation pattern $(S_\mathbf{x}^1, \ldots, S^L_{\mathbf{x}})$ associated with $P_\epsilon$ according to each ReLU activation, dependent on the input $\mathbf{x}$ to the network~\footnote{A similar conditioning is applied to compute the bias term $\mathbf{b}_\epsilon$.}. Formally, ${(S^\ell_\mathbf{x})}_i = \mathbbm{1}{[}\bm{\theta}^\ell_i \mathbf{x}^{\ell -1} + \mathbf{b}^\ell_i > 0{]}$, where $\bm{\theta}^\ell_i$ denotes the $i$:th row of $\bm{\theta}^\ell$, and $\mathbf{x}^{\ell -1}$ is the input to layer $\ell$.

Particularly, for any input $\overline{\mathbf{x}} \in \Omega$, evaluating the Jacobian $\nabla_\mathbf{x}\mathbf{f}_{\bm{\theta}}$ at $\overline{\mathbf{x}}$ yields $\bm{\theta}_\epsilon$, i.e.\ the linear function computed by $\mathbf{f}_{\bm{\theta}}$ on the activation region $\epsilon$ containing $\overline{\mathbf{x}}$. 
Hence, given a dataset $\mathcal{D} = \{ (\mathbf{x}_n, y_n)\}_{n=1}^N$, and denoting $\epsilon_n := \epsilon(\mathbf{x}_n)$, the empirical Lipschitz constant of $\mathbf{f}_{\bm{\theta}}$ on $\mathcal{D}$ can be estimated by computing the expected operator norm
\begin{equation}
\label{eq:findings:operator}
\left(\mathbb{E}_\mathcal{D}\|\nabla_{\mathbf{x}}\mathbf{f}_{\bm{\theta}} \|_2^2\right)^{\frac{1}{2}} := \left(\frac{1}{N} \sum\limits_{n=1}^N \sup\limits_{\mathbf{x} : \|\mathbf{x}\| \ne 0} \frac{\|\bm{\theta}_{\epsilon_n}\mathbf{x}\|^2_2}{\|\mathbf{x}\|^2_2}\right)^{\frac{1}{2}}
\end{equation}
representing the expected largest change propagated by the function on activation regions covering $\mathcal{D}$, and can be thought of as a measure of scale of $\mathbf{f}_{\bm{\theta}}$.
Appendix~\ref{sec:appendix:power_method} outlines a procedure for estimating the operator norm in practice via a power method.

\subsection{Input Smoothness of Piece-wise Linear Networks}
\label{sec:findings:pwl}

The empirical Lipschitz constant measures sensitivity of the model function around each training point. In the interpolating regime, it captures smoothness of interpolation. 

In Figure~\ref{fig:findings:lipschitz} (bottom), we compute Equation~\ref{eq:findings:operator} for deep networks trained in practice and present our main result: \textit{the empirical Lipschitz constant of deep networks is non-monotonic in model size, increasing until the interpolation threshold, and then decreasing afterward, strongly correlating with the test error. The trend is consistent across all architectures, datasets, and noise settings considered}.

Figure~\ref{fig:findings:lipschitz} (bottom right) extends the finding beyond piece-wise linear networks to Vision Transformers~\citep{dosovitskiy2021image, vaswani2017attention} trained on CIFAR-10, whereupon model size is controlled by changing the embedding dimension, as well as the width of MLP layers (see  appendix~\ref{sec:appendix:setup} for details). 

This finding sheds light on the effective complexity of \textit{trained networks} in relation to model size, complementing existing notions of Lipschitz continuity assumed in many theoretical works~\citep{kawaguchi2022robustness,ma2021linear,wei2019data,nagarajan2018deterministic,bartlett2017spectrally} which miss the observed non-monotonicity, and extending to the double descent setting the relevance of local Lipschitz continuity for generalization. The observed trends highlight a strong correlation between increased relative smoothness of $\mathbf{f}_{\bm{\theta}}$ and its generalization ability, as well as dependency of the phenomenon on model size. 


With the main message of this work established, in the following sections we draw formal connections between the empirical Lipschitz constant and parameter-space regularity (Section~\ref{sec:findings:parameters}); we discuss implications for the true Lipschitz constant (Figure~\ref{fig:findings:lipschitz_upper_bound}); finally, we present further experiments that offer broader insights on model complexity and double descent (Section~\ref{sec:implications}).

\subsection{Connection to Parameter-Space Dynamics}
\label{sec:findings:parameters}

In this section, we connect the empirical Lipschitz constant to parameter-space dynamics of SGD, by studying the relationship between input-space and parameter-space gradients of $\mathbf{f}_{\bm{\theta}}$. We defer all proofs to appendix~\ref{sec:appendix:proofs}. Let $\mathbf{x}^\ell := \phi(\bm{\theta}^\ell\mathbf{x}^{\ell-1} + \mathbf{b}^\ell)$ denote the output of the $\ell$-th layer, for $\ell = 1, \ldots, L$, with $\mathbf{x}^{0} := \mathbf{x} \in \Omega$. We begin by noting that linear layers, when composed hierarchically, enjoy a duality between their input and parameters, which ties parameter-space gradients at each layer to gradients w.r.t.\ its input. 

Formally, during backpropagation, computing the gradient $\frac{\partial \mathbf{f}(\mathbf{x}, \bm{\theta})}{\partial \bm{\theta}^\ell} = \frac{\partial\mathbf{f}(\mathbf{x}, \bm{\theta})}{\partial (\bm{\theta}^\ell\mathbf{x}^{\ell -1} + \mathbf{b}^\ell)}^T {\mathbf{x}^{\ell-1}}^T$ entails calculating the upstream gradient $\frac{\partial\mathbf{f}(\mathbf{x}, \bm{\theta})}{\partial (\bm{\theta}^\ell\mathbf{x}^{\ell -1} + \mathbf{b}^\ell)}$, which also appears in the computation of the partial derivative w.r.t.\ the $\ell$:th layer's input $\frac{\partial \mathbf{f}(\mathbf{x}, \bm{\theta})}{\partial \mathbf{x}^{\ell -1}} = \frac{\partial\mathbf{f}(\mathbf{x}, \bm{\theta})}{\partial (\bm{\theta}^\ell\mathbf{x}^{\ell -1} + \mathbf{b}^\ell)} {\bm{\theta}^\ell}$. The relationship ties the two gradients, providing the following statement.


\begin{restatable}{theorem}{sobolev}
\label{thm:findings:sobolev} Let $\mathbf{f}$ denote a neural network with a least one hidden layer, with $\|\bm{\theta}^1\| > 0$ and arbitrary weights $\bm{\theta}^2, \ldots, \bm{\theta}^L$. Let $x_{\min} := \min\limits_{\mathbf{x}_n \in \mathcal{D}}\|\mathbf{x}_n\|_2$. Then, parameter-space gradients bound input-space gradients of $\mathbf{f}$ from above:
\begin{equation}
    \label{eq:findings:sobolev}
    \begin{aligned}
        \frac{x_{\min}^2}{\|\bm{\theta}^1\|_2^2} \mathbb{E}_\mathcal{D}\|\nabla_\mathbf{x}\mathbf{f}\|^2_2 \le \mathbb{E}_\mathcal{D}\|\nabla_{\bm{\theta}}\mathbf{f} \|^2_2\,.
    \end{aligned}
\end{equation}
\end{restatable}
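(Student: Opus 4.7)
The plan is to exploit the shared upstream gradient that appears, via the chain rule, in both the first-layer parameter derivative and the input derivative. Since everything is piece-wise linear, it suffices to fix a training point $\mathbf{x}_n$ and work inside its activation region $P_{\epsilon_n}$, where $\mathbf{f}_{\bm{\theta}}$ is affine and all gradients are classical. Define the upstream Jacobian
\begin{equation*}
    \mathbf{g} \;:=\; \frac{\partial \mathbf{f}(\mathbf{x},\bm{\theta})}{\partial (\bm{\theta}^1 \mathbf{x} + \mathbf{b}^1)} \;\in\; \mathbb{R}^{K\times h_1},
\end{equation*}
which is well defined on the interior of $P_{\epsilon_n}$. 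The identity highlighted in the paragraph preceding the theorem then gives the two factorizations $\nabla_{\bm{\theta}^1}\mathbf{f} = \mathbf{g}\otimes \mathbf{x}^T$ (entrywise $\partial \mathbf{f}_k/\partial \bm{\theta}^1_{ij} = \mathbf{g}_{ki} x_j$) and $\nabla_\mathbf{x}\mathbf{f} = \mathbf{g}\,\bm{\theta}^1$.

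From here the estimate is essentially a two-line norm manipulation. First, the product structure of the first-layer parameter gradient immediately yields
\begin{equation*}
    \|\nabla_{\bm{\theta}^1}\mathbf{f}\|_F^{\,2} \;=\; \|\mathbf{g}\|_F^{\,2}\,\|\mathbf{x}\|_2^{\,2}.
\end{equation*}
Second, submultiplicativity of the operator norm applied to $\nabla_\mathbf{x}\mathbf{f} = \mathbf{g}\,\bm{\theta}^1$ gives $\|\nabla_\mathbf{x}\mathbf{f}\|_2 \le \|\mathbf{g}\|_2 \|\bm{\theta}^1\|_2$, and combined with $\|\mathbf{g}\|_F \ge \|\mathbf{g}\|_2$ this bounds $\|\mathbf{g}\|_F^{\,2}$ below by $\|\nabla_\mathbf{x}\mathbf{f}\|_2^{\,2}/\|\bm{\theta}^1\|_2^{\,2}$, where the hypothesis $\|\bm{\theta}^1\|_2>0$ is exactly what is needed to divide. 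Substituting back gives the pointwise inequality
\begin{equation*}
    \|\nabla_{\bm{\theta}^1}\mathbf{f}\|_F^{\,2} \;\ge\; \frac{\|\mathbf{x}\|_2^{\,2}}{\|\bm{\theta}^1\|_2^{\,2}}\,\|\nabla_\mathbf{x}\mathbf{f}\|_2^{\,2}.
\end{equation*}

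To finish, I would evaluate at each $\mathbf{x}_n\in\mathcal{D}$, lower-bound $\|\mathbf{x}_n\|_2^{\,2}\ge x_{\min}^2$ uniformly, and average over $n$; the constant $x_{\min}^2/\|\bm{\theta}^1\|_2^{\,2}$ then factors out of the expectation. A final monotonicity step, $\|\nabla_{\bm{\theta}}\mathbf{f}\|_2^{\,2} \ge \|\nabla_{\bm{\theta}^1}\mathbf{f}\|_F^{\,2}$ (the full parameter gradient includes the first-layer block), delivers the stated inequality. The only delicate point is norm bookkeeping: the Jacobian norm on the left of the theorem is the operator norm used in \cref{eq:findings:operator}, whereas on the right the parameter gradient is naturally measured in Frobenius norm, and the argument relies on $\|\cdot\|_F \ge \|\cdot\|_2$ to reconcile the two. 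A secondary caveat is differentiability at ReLU kinks, handled by restricting to the interior of $P_{\epsilon_n}$; since almost every training point lies in such an interior, the expectation is unaffected.
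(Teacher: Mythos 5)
Your argument matches the paper's proof in all essential respects: both exploit the shared upstream gradient $\mathbf{g}$ to obtain the factorizations $\nabla_{\bm{\theta}^1}\mathbf{f} = \mathbf{g}\,\mathbf{x}^T$ and $\nabla_\mathbf{x}\mathbf{f} = \mathbf{g}\,\bm{\theta}^1$, apply submultiplicativity and the rank-one identity to get a pointwise inequality, then bound by $x_{\min}$, average over $\mathcal{D}$, and finish with block domination $\|\nabla_{\bm{\theta}}\mathbf{f}\|\ge\|\nabla_{\bm{\theta}^1}\mathbf{f}\|$. The paper phrases this by combining the two chain-rule identities into a single equality $\nabla_\mathbf{x}\mathbf{f}\cdot\mathbf{x}^T = {\bm{\theta}^1}^T\nabla_{\bm{\theta}^1}\mathbf{f}$ and then taking norms, but that is a cosmetic reorganization of the same algebra, and your caveat about Frobenius versus operator norms on the parameter gradient is the same implicit choice the paper makes.
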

Crucially, the bound highlights an implicit regularization mechanism arising from hierarchical representations, whereby parameter-space gradients control input-space sensitivity by bounding the expected norm of input-space gradients $\mathbb{E}_\mathcal{D}\|\nabla_\mathbf{x}\mathbf{f}\|$, thus regularizing the empirical Lipschitz constant. We note that, while an analogous bound was first observed by~\citet{ma2021linear} (Theorem 3) for the first layer's preactivation, the authors propose a uniform bound $\mathbb{E}_\mathcal{D}\|\nabla_{\bm{\theta}}\mathbf{f}\| \le \alpha p$ that linearly increases with the number of model parameters $p$, with constant $\alpha$ depending on learning rate and batch size. In contrast, we generalize the bound to any layer beyond the first, and study it in connection to double descent, as $p$ varies with network width. Specifically, in section~\ref{sec:findings:curvature} we provide an upper bound to Theorem~\ref{thm:findings:sobolev} that captures double descent in practical settings.

Interestingly, by recalling that $\nabla_\mathbf{x} \mathbf{f}_{\bm{\theta}} = \prod_{\ell=1}^L \diag(S_\mathbf{x}^\ell)\bm{\theta}^\ell$ for $\mathbf{x} \in \Omega$, we note that the empirical Lipschitz constant is intimately tied to the model's parameters, and thus the bound in Theorem~\ref{thm:findings:sobolev} controls the expected growth of all layers. 
Additionally, by noting that the operator norm $\|\diag(S^\ell_{\mathbf{x}}) \|_2 = 1$, for $\ell = 1, \ldots, L$, the factorization $\nabla_\mathbf{x} \mathbf{f}_{\bm{\theta}} = \prod_{\ell=1}^L \diag(S_\mathbf{x}^\ell)\bm{\theta}^\ell$ allows to derive an upper bound on the \textit{true Lipschitz constant} $\lip(\mathbf{f})$ of $\mathbf{f}_{\bm{\theta}}$ on the whole domain $\Omega$.
\begin{equation}
    \label{eq:findings:lipschitz_upper_bound}
    \lip(\mathbf{f}) := \sup\limits_{\mathbf{x} \in \Omega}\|\nabla_\mathbf{x} \mathbf{f}_{\bm{\theta}}\| \le \sup\limits_{\mathbf{x} \in \Omega} \prod_{\ell=1}^L \|\diag(S_\mathbf{x}^\ell)\bm{\theta}^\ell\| \le \sup\limits_{\mathbf{x} \in \Omega} \prod_{\ell=1}^L \|\bm{\theta}^\ell\| = \prod_{\ell=1}^L \|\bm{\theta}^\ell\|_2
\end{equation}
Figure~\ref{fig:findings:lipschitz_upper_bound} presents the upper bound on the true Lipschitz constant for ConvNets trained on CIFAR-10, CIFAR-100, and ResNets trained on CIFAR-10. Similarly to the empirical Lipschitz lower bound, the upper bound closely follows double descent for the test error, peaking near the interpolation threshold. We note that, since the upper bound is independent of the binary activation pattern of ReLU, it bounds global worst-case sensitivity of the network on the whole domain $\Omega$ of $\mathbf{f}$, suggesting that the non-monotonic dependency of Lipschitz continuity on model size holds also beyond the training set $\mathcal{D}$. This observation is substantiated experimentally in section~\ref{sec:implications}.


We conclude this section by extending Theorem~\ref{thm:findings:sobolev} to exponential losses $\mathcal{L}(\bm{\theta}, \mathbf{x}, \mathbf{y})$, such as crossentropy and Mean Squared Error (MSE).

\begin{figure}[t]
    \centering
    \includegraphics[width=0.295\linewidth, trim={0cm 0cm 12cm 10cm}, clip]{./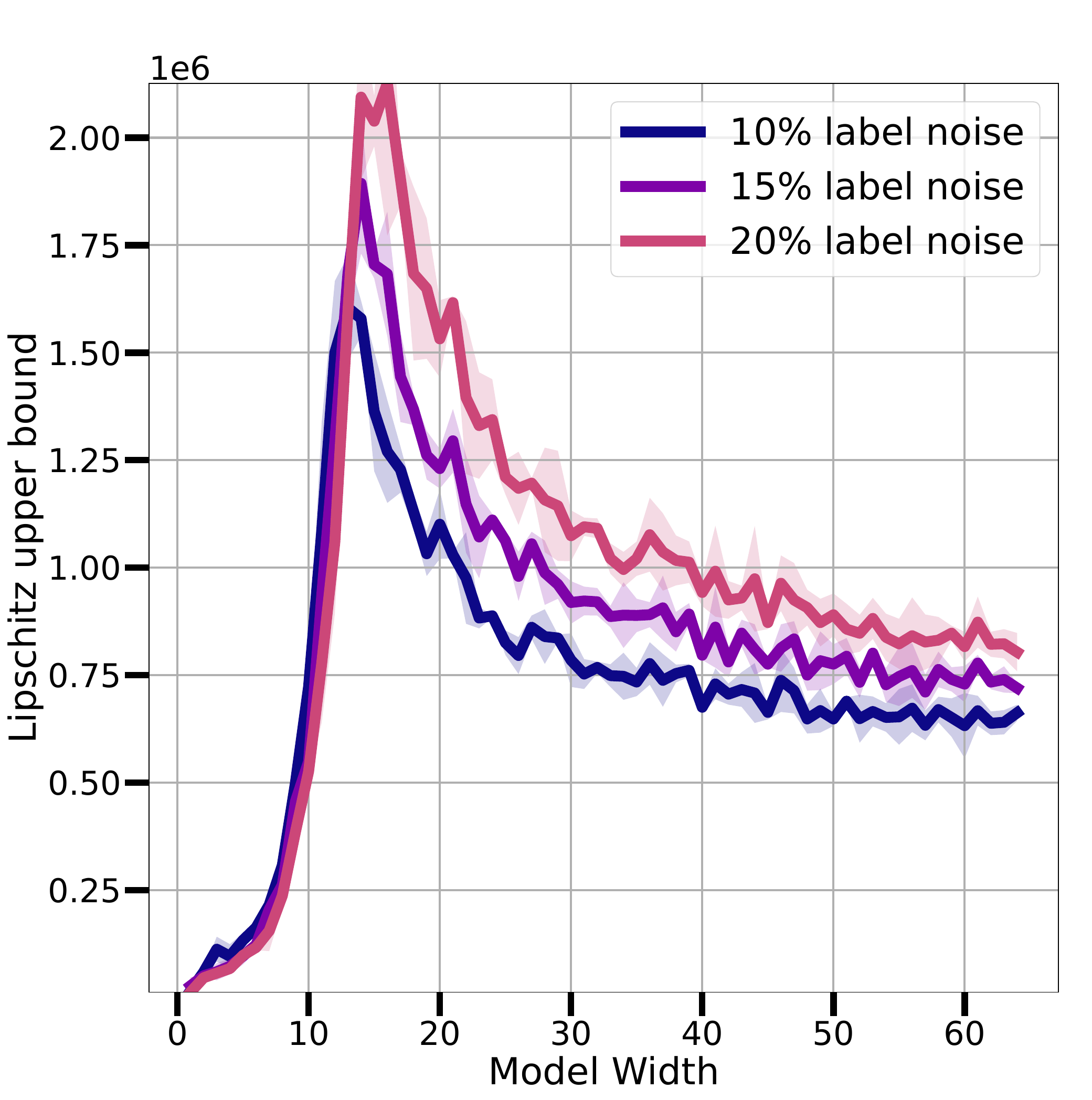}~
    \includegraphics[width=0.28\linewidth, trim={0cm 0cm 12cm 10cm}, clip]{./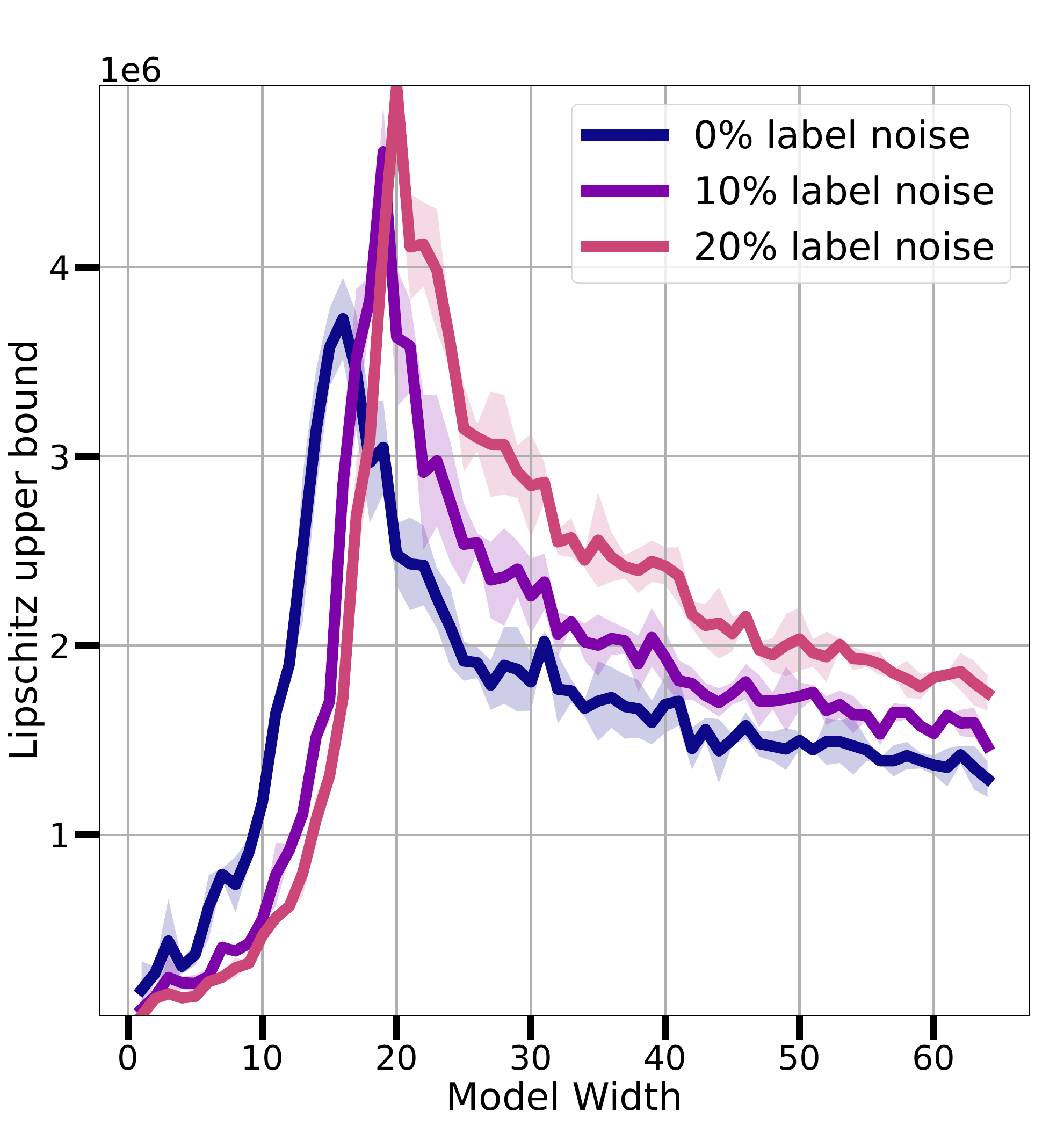}~ 
    \includegraphics[width=0.29\linewidth, trim={0cm 0cm 12cm 12cm}, clip]{./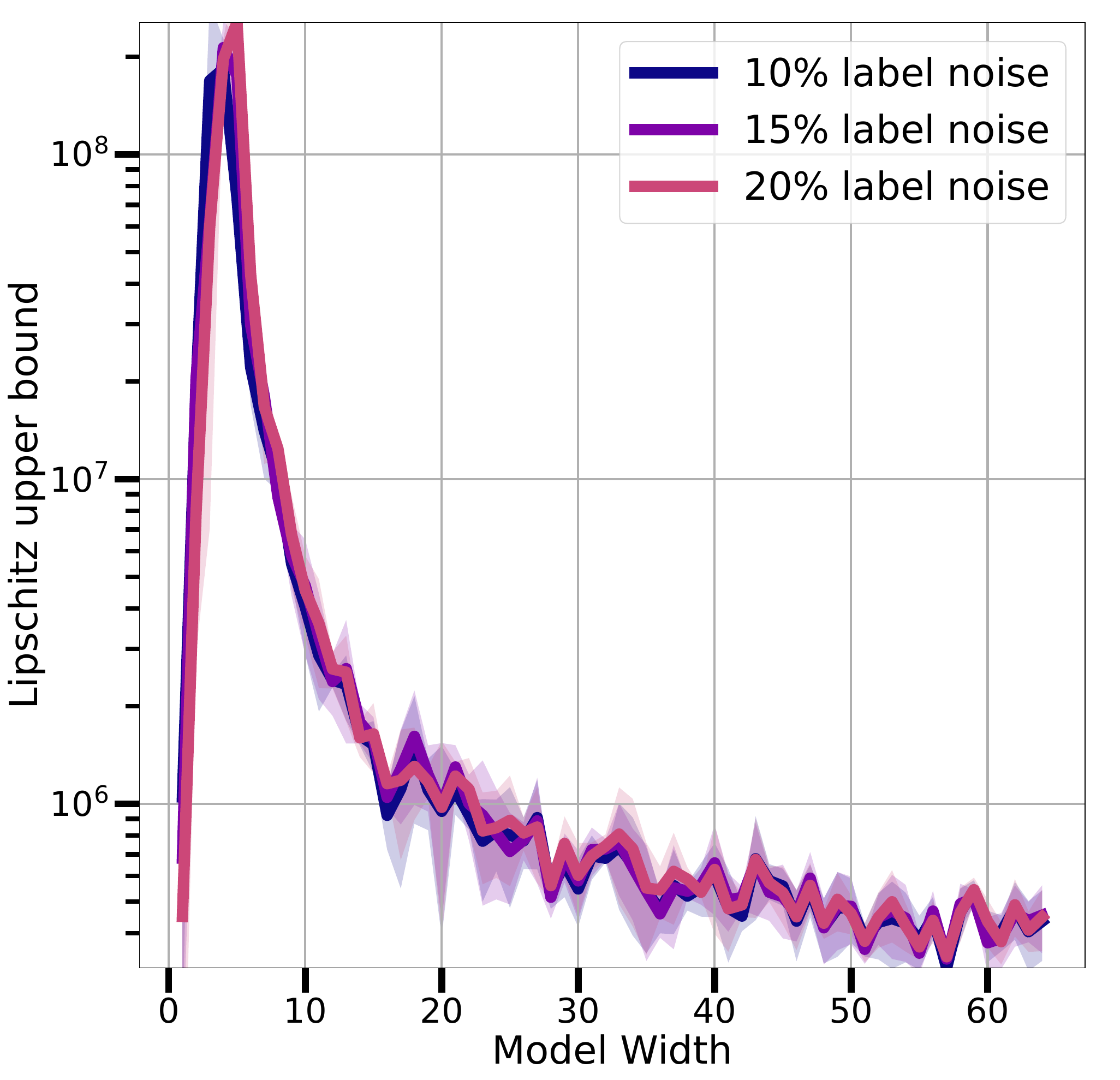}
    \caption{\textbf{Upper bound on the true Lipschitz constant}, undergoing double descent as model size increases. From left to right: ConvNets trained on CIFAR-10 (left), CIFAR-100 (middle) and ResNets trained on CIFAR-10 (right).}
    \label{fig:findings:lipschitz_upper_bound}
\end{figure}

\begin{restatable}{corollary}{sobolevloss}
\label{thm:findings:sobolev_loss} Consider the composition of a loss function $\mathcal{L}$ with a neural network $\mathbf{f}$ with a least one hidden layer, with $\|\bm{\theta}^1\| > 0$ and arbitrary weights $\bm{\theta}^2, \ldots, \bm{\theta}^L$. Then,
\begin{equation}
    \label{eq:findings:sobolev_loss}
    \begin{aligned}
        \frac{x_{\min}^2}{\|\bm{\theta}^1\|_2^2}\mathbb{E}_\mathcal{D}\|\nabla_\mathbf{x}\mathcal{L}\|^2_2 \le \mathbb{E}_\mathcal{D}\|\nabla_{\bm{\theta}}\mathcal{L} \|^2_2\,.
    \end{aligned}
\end{equation}
\end{restatable}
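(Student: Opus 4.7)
The plan is to reduce Corollary~\ref{thm:findings:sobolev_loss} directly to Theorem~\ref{thm:findings:sobolev} by observing that the proof of the theorem hinges on a purely local chain-rule identity at the first layer, which is insensitive to what is composed on top of the network. Concretely, for any differentiable scalar- or vector-valued function $g$ built from the first-layer preactivation $\bm{\theta}^1\mathbf{x} + \mathbf{b}^1$, the chain rule yields
\begin{equation*}
    \frac{\partial g}{\partial \bm{\theta}^1} \;=\; \frac{\partial g}{\partial (\bm{\theta}^1\mathbf{x} + \mathbf{b}^1)}\, \mathbf{x}^T, \qquad \frac{\partial g}{\partial \mathbf{x}} \;=\; {\bm{\theta}^1}^T \frac{\partial g}{\partial (\bm{\theta}^1\mathbf{x} + \mathbf{b}^1)},
\end{equation*}
so both gradients share the same upstream factor. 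This is precisely the structural ingredient behind Theorem~\ref{thm:findings:sobolev}, and it continues to hold if we substitute $g \leftarrow \mathcal{L}(\bm{\theta},\mathbf{x},\mathbf{y}) = \mathcal{L}(\mathbf{f}(\mathbf{x},\bm{\theta}),\mathbf{y})$, since the loss simply prepends $\nabla_\mathbf{f}\mathcal{L}$ to the upstream gradient.

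First, I would replay the pointwise step of Theorem~\ref{thm:findings:sobolev} with $g=\mathcal{L}$: using submultiplicativity of the operator norm on the second identity and the outer-product structure of the first, I would derive $\|\mathbf{x}_n\|_2^2\,\|\nabla_\mathbf{x}\mathcal{L}\|_2^2 \le \|\bm{\theta}^1\|_2^2\,\|\nabla_{\bm{\theta}^1}\mathcal{L}\|_2^2$ at every sample $\mathbf{x}_n \in \mathcal{D}$ where $\mathbf{f}$ is differentiable (i.e.\ off the measure-zero boundaries between activation regions). Second, since $\nabla_{\bm{\theta}^1}\mathcal{L}$ is a subvector of $\nabla_{\bm{\theta}}\mathcal{L}$, the right-hand side is dominated by $\|\bm{\theta}^1\|_2^2\,\|\nabla_{\bm{\theta}}\mathcal{L}\|_2^2$. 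Finally, taking the empirical average over $\mathcal{D}$ and uniformly lower-bounding $\|\mathbf{x}_n\|_2 \ge x_{\min}$ gives Equation~\ref{eq:findings:sobolev_loss}.

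Since the corollary is essentially a rewriting of Theorem~\ref{thm:findings:sobolev} for the composed map $\mathcal{L}\circ\mathbf{f}$, there is no substantive obstacle; the whole content is verifying that the loss does not break the layer-$1$ duality. The only mildly delicate point is making explicit that the argument is loss-agnostic: it uses differentiability of $\mathcal{L}$ w.r.t.\ its first argument but neither convexity nor a specific exponential form, so crossentropy, MSE, and any other smooth surrogate inherit the bound from Theorem~\ref{thm:findings:sobolev} automatically. Accordingly, the remark about ``exponential losses'' in the statement is illustrative rather than restrictive.
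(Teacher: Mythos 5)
Your argument is correct and coincides with the paper's: the appendix proof likewise notes that $\nabla_\mathbf{x}\mathcal{L}$ and $\nabla_{\bm{\theta}}\mathcal{L}$ share the common upstream factor $\frac{\partial\mathcal{L}}{\partial\mathbf{f}}$ (written out as $\mathbf{p}_n - \mathbf{e}_{y_n}$ for crossentropy and MSE) and then invokes Theorem~\ref{thm:findings:sobolev}, which you make more explicit by replaying the layer-1 duality identity for the composed map $\mathcal{L}\circ\mathbf{f}$. You are also right that the paper's restriction to exponential-family losses is not used in the derivation itself but only in the accompanying discussion of the confidence term, so the bound holds for any loss that is differentiable in $\mathbf{f}$.
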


Importantly, the proof of Corollary~\ref{thm:findings:sobolev_loss} is based on the factorizations $\nabla_\mathbf{x} \mathcal{L}$ = $\frac{\partial \mathcal{L}}{\partial \mathbf{f}} \nabla_{\mathbf{x}} \mathbf{f}_{\bm{\theta}}$, and $\nabla_{\bm{\theta}} \mathcal{L}$ = $\frac{\partial \mathcal{L}}{\partial \mathbf{f}} \nabla_{\bm{\theta}} \mathbf{f}_{\bm{\theta}}$, with the common factor $\frac{\partial \mathcal{L}}{\partial \mathbf{f}}$ contributing a bounded monotonic rescaling of the model function gradients (visualized in Figure~\ref{fig:appendix:confidence} in the appendix). Thus, the double descent trend of the empirical Lipschitz constant can be observed also in the loss landscape, by tracking the input-space loss Jacobian $\nabla_{\mathbf{x}} \mathcal{L}$. Figure~\ref{fig:findings:hessian} presents non-monotonic trends for input-space loss Jacobian norm, as model size increases.

In the following, building on Corollary~\ref{thm:findings:sobolev_loss}, we draw an explicit connection between $\|\nabla_\mathbf{x}\mathcal{L} \|_2$ and the parameter-space geometry of the loss landscape.

\subsection{Connection to Parameter-Space Curvature}
\label{sec:findings:curvature}

\begin{figure*}[t]
    \centering
    \includegraphics[width=0.25\linewidth, trim={0cm 0cm 17cm 12cm}, clip]{./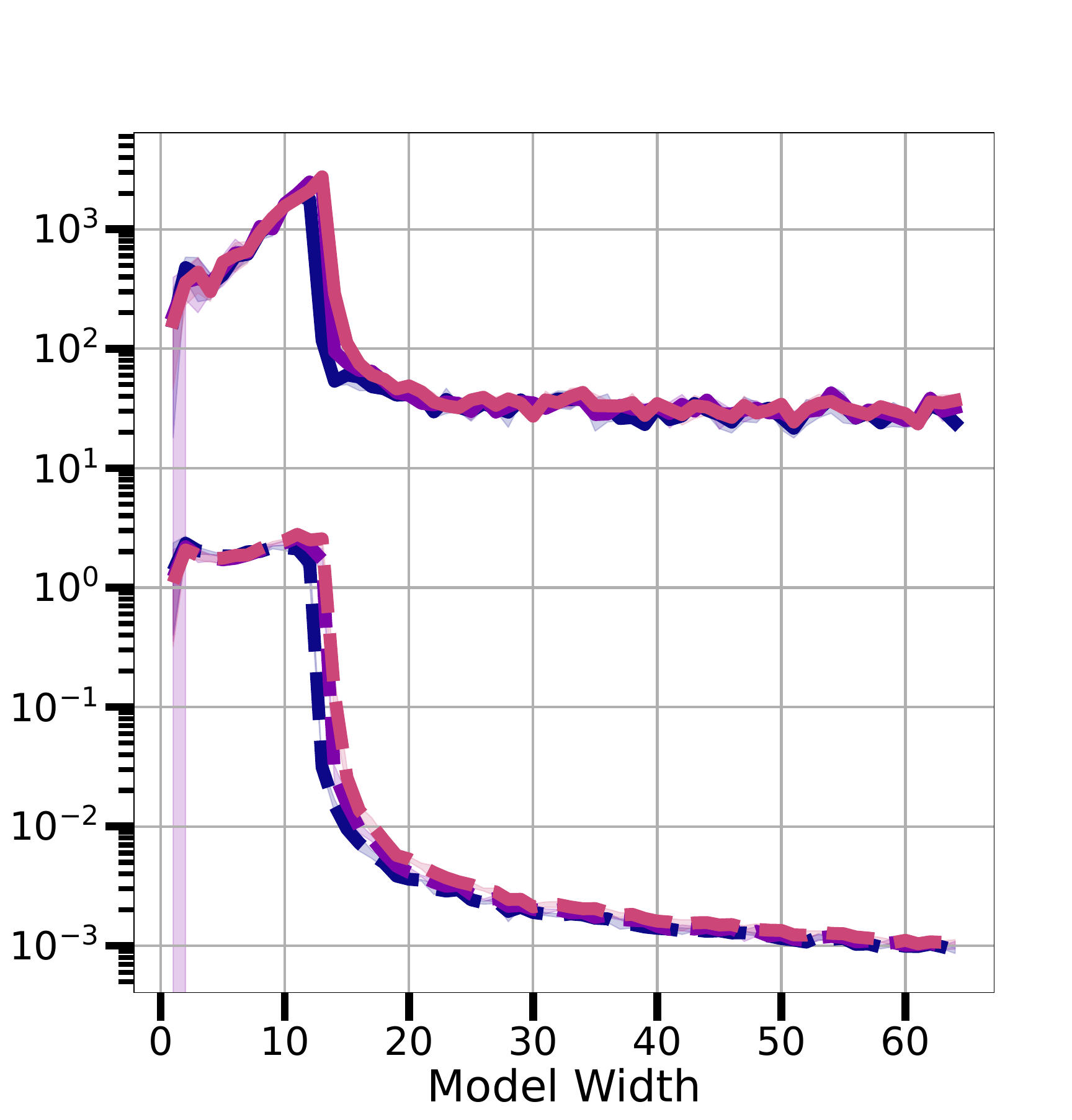}~
    \includegraphics[width=0.25\linewidth, trim={0cm 0cm 17cm 12cm}, clip]{./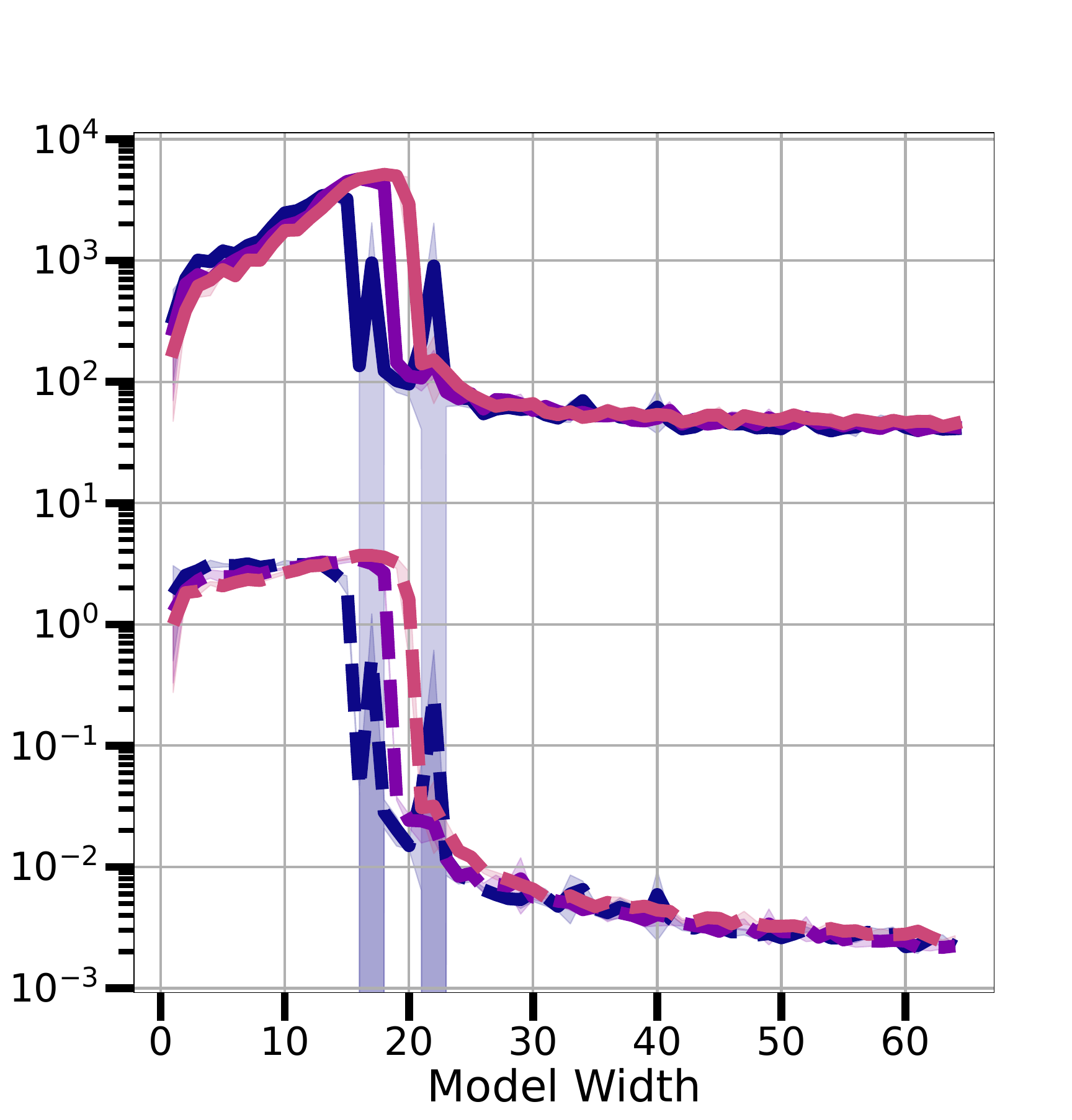}~
    \includegraphics[width=0.39\linewidth, trim={0cm 0cm 0cm 12cm}, clip]{./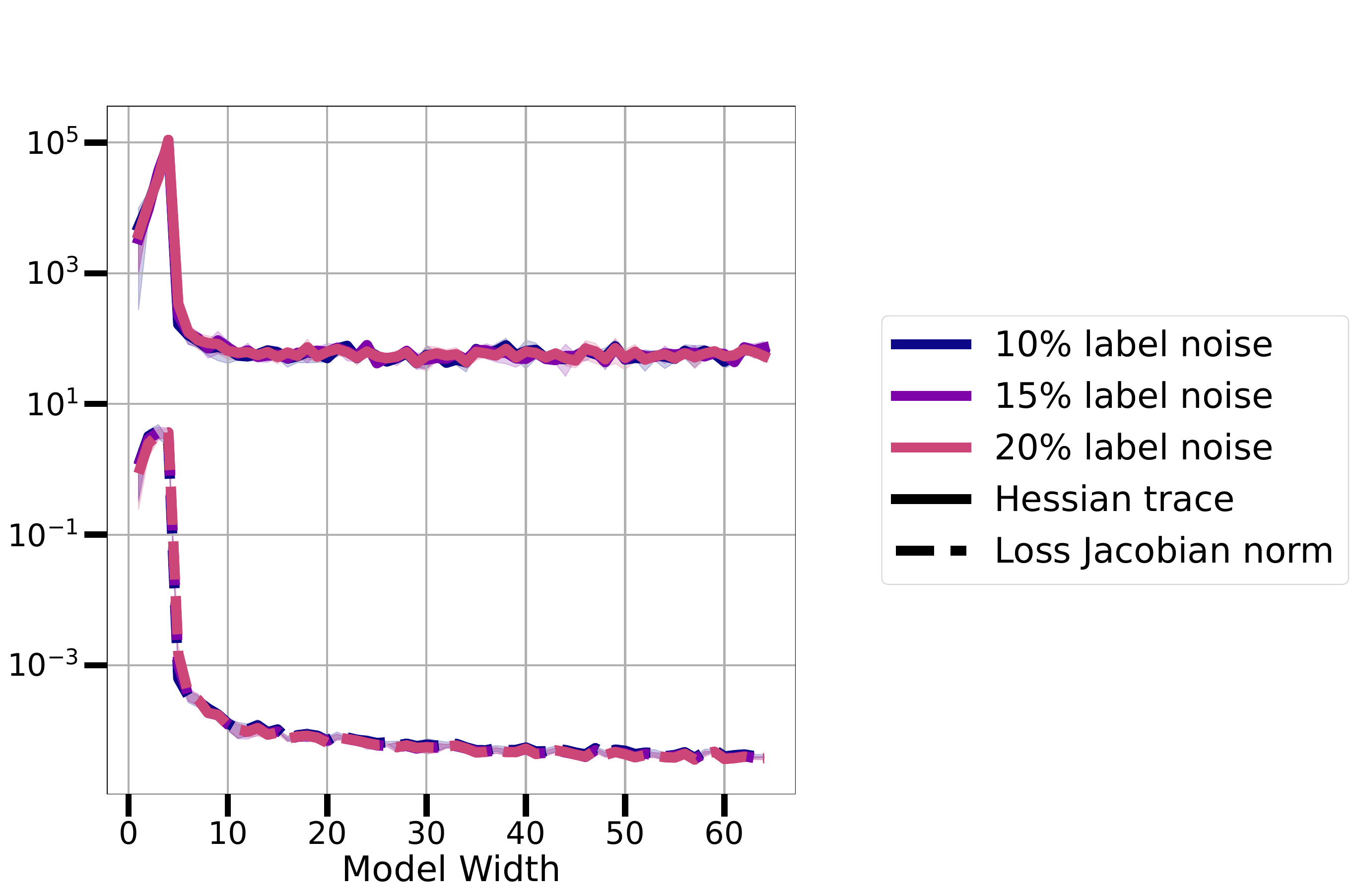} \\
    \caption{(Top) \textbf{Loss Jacobian norm} $\|\nabla_{\mathbf{x}}\mathcal{L} \|$ in \textit{input space} and \textbf{Mean loss curvature} (Hessian trace) in \textit{parameter space}. (From left to right) ConvNets trained on CIFAR-10 (left), CIFAR-100 (middle) and ResNets trained on CIFAR-10 (right). In all settings, mean parameter-space curvature strongly correlate with double descent, peaking at the interpolation threshold, and highlighting a nonlinear dependence on model size.} 
    \label{fig:findings:hessian}
\end{figure*}

To study the implications of Theorem~\ref{thm:findings:sobolev} and Equation~\ref{eq:findings:sobolev_loss}, we consider the dynamics of SGD in proximity of a minimum $\bm{\theta}^* \in \mathbb{R}^p$ of the loss $\mathcal{L}$. We adopt a linear stability perspective~\citep{hosoe2022second,wu2018sgd}, and approximate the loss in a neighbourhood of $\bm{\theta}^*$ via a second-order Taylor expansion in $\bm{\theta}$
\begin{equation}
    \label{eq:findings:taylor}
    \mathbb{E}_\mathcal{D}\mathcal{L}(\bm{\theta},\mathbf{x},y) = \frac{1}{2}(\bm{\theta} - \bm{\theta}^*)^TH(\bm{\theta} - \bm{\theta}^*) + o(\|\bm{\theta} - \bm{\theta}^*\|^2)
\end{equation}
where the first order term vanishes at the critical point $\bm{\theta}^*$, as does the zeroth order term for interpolating models, and $H$ represents the expected Hessian of the training loss.

In the next result, we derive upper bounds on \textit{input-space} smoothness of the loss (Equation~\ref{eq:findings:sobolev_loss}), in connection to \textit{parameter-space} geometry, focusing on the mean squared error \mbox{$\mathcal{L} = \frac{1}{2N}\sum\limits_{n=1}^N (f_{\bm{\theta}}(\mathbf{x}_n) - y_n)^2$}. In the following, let $\mathcal{L}(\bm{\theta}) := \mathbb{E}_\mathcal{D}{[}\mathcal{L}(\bm{\theta}, x, y){]}$.

\begin{restatable}{theorem}{curvature}
\label{thm:findings:curvature}
Let $\bm{\theta}^*$ be a critical point for the loss $\mathcal{L}(\bm{\theta}, \mathbf{x}, y)$ on $\mathcal{D}$. Let $\mathbf{f}_{\bm{\theta}}$ denote a neural network with at least one hidden layer, with $\|\bm{\theta}^1\| > 0$. Then,
\begin{equation}
    \label{eq:findings:curvature}
    \frac{x_{\min}^2}{\|\bm{\theta}^1\|_2^2}\mathbb{E}_\mathcal{D}\|\nabla_\mathbf{x}\mathcal{L}\|_2^2 \le 2\mathcal{L}_{\max}(\bm{\theta})\laplace{(\mathcal{L}(\bm{\theta}))} + o(\mathcal{L}(\bm{\theta}))
\end{equation}
with $\laplace{(\mathcal{L}(\bm{\theta}))} := \trace{(H)}$ denoting the Laplace operator, $H := \mathbb{E}_\mathcal{D}{[}\frac{\partial^2\mathcal{L}}{\partial\bm{\theta}\partial\bm{\theta}^T}{]}$ denoting the expected parameter-space Hessian of $\mathcal{L}$, and $$\mathcal{L}_{\max}(\bm{\theta}) := \max\limits_{(\mathbf{x}_n,y_n) \in \mathcal{D}}\mathcal{L}(\bm{\theta},\mathbf{x}_n,y_n)$$
\end{restatable}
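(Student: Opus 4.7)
The plan is to chain Corollary~\ref{thm:findings:sobolev_loss} with a Gauss--Newton style decomposition of the MSE Hessian, so that the parameter-space gradient norm appearing on the right-hand side of the corollary is re-expressed in terms of the Hessian trace modulo vanishing corrections. Applying the corollary immediately reduces the task to showing $\mathbb{E}_\mathcal{D}\|\nabla_{\bm{\theta}}\mathcal{L}\|_2^2 \le 2\mathcal{L}_{\max}(\bm{\theta})\laplace{(\mathcal{L}(\bm{\theta}))} + o(\mathcal{L}(\bm{\theta}))$, at which point the critical-point hypothesis and Taylor expansion from Equation~\ref{eq:findings:taylor} enter.

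The first substantive step is to use the structure of MSE to factor out a per-sample loss: writing $\nabla_{\bm{\theta}}\mathcal{L}_n = (f_{\bm{\theta}}(\mathbf{x}_n) - y_n)\nabla_{\bm{\theta}}f_{\bm{\theta}}(\mathbf{x}_n)$ yields $\|\nabla_{\bm{\theta}}\mathcal{L}_n\|_2^2 = 2\,\mathcal{L}(\bm{\theta},\mathbf{x}_n,y_n)\,\|\nabla_{\bm{\theta}}f_{\bm{\theta}}(\mathbf{x}_n)\|_2^2$, and then bounding the pointwise loss factor by $\mathcal{L}_{\max}(\bm{\theta})$ gives $\mathbb{E}_\mathcal{D}\|\nabla_{\bm{\theta}}\mathcal{L}\|_2^2 \le 2\mathcal{L}_{\max}(\bm{\theta})\,\mathbb{E}_\mathcal{D}\|\nabla_{\bm{\theta}}f_{\bm{\theta}}\|_2^2$. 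The second step is to recognize the remaining factor inside the Hessian: direct differentiation of MSE produces $\nabla^2_{\bm{\theta}}\mathcal{L}_n = \nabla_{\bm{\theta}}f_{\bm{\theta}}(\mathbf{x}_n)\nabla_{\bm{\theta}}f_{\bm{\theta}}(\mathbf{x}_n)^T + (f_{\bm{\theta}}(\mathbf{x}_n) - y_n)\nabla^2_{\bm{\theta}}f_{\bm{\theta}}(\mathbf{x}_n)$, and taking trace gives the identity $\trace(\nabla^2_{\bm{\theta}}\mathcal{L}_n) = \|\nabla_{\bm{\theta}}f_{\bm{\theta}}(\mathbf{x}_n)\|_2^2 + (f_{\bm{\theta}}(\mathbf{x}_n) - y_n)\trace(\nabla^2_{\bm{\theta}}f_{\bm{\theta}}(\mathbf{x}_n))$. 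Rearranging in expectation expresses $\mathbb{E}_\mathcal{D}\|\nabla_{\bm{\theta}}f_{\bm{\theta}}\|_2^2$ as $\laplace{(\mathcal{L}(\bm{\theta}))}$ plus a residual term involving the product of the network's residuals and its per-sample parameter-space Laplacian.

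The residual is then controlled via Cauchy--Schwarz: $|\mathbb{E}_\mathcal{D}[(f_{\bm{\theta}} - y)\trace(\nabla^2_{\bm{\theta}}f_{\bm{\theta}})]| \le \sqrt{2\mathcal{L}(\bm{\theta})}\cdot\bigl(\mathbb{E}_\mathcal{D}\trace(\nabla^2_{\bm{\theta}}f_{\bm{\theta}})^2\bigr)^{1/2}$. After being multiplied by $2\mathcal{L}_{\max}(\bm{\theta})$ in the final bound, this contribution scales like $\mathcal{L}(\bm{\theta})^{3/2}$ as $\bm{\theta}\to\bm{\theta}^*$ along a near-interpolating sequence and is therefore $o(\mathcal{L}(\bm{\theta}))$. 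Putting everything together yields the stated inequality.

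The main obstacle is making the $o(\mathcal{L}(\bm{\theta}))$ control rigorous: one needs $\trace(\nabla^2_{\bm{\theta}}f_{\bm{\theta}})$ and $\mathcal{L}_{\max}(\bm{\theta})$ to remain bounded in a neighbourhood of $\bm{\theta}^*$, and one implicitly leverages the near-interpolation regime in which the second-order Taylor expansion of Equation~\ref{eq:findings:taylor} is accurate and per-sample losses vanish at commensurate rates. A secondary subtlety is multi-output networks, where $f_{\bm{\theta}}$ is vector-valued; this requires replacing the scalar identities above by their Jacobian analogues, with $\|\nabla_{\bm{\theta}}f_{\bm{\theta}}\|^2$ becoming the squared Frobenius norm of the parameter-space Jacobian, but the argument structure is unchanged.
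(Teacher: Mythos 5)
Your argument follows the paper's own proof essentially step for step: apply Corollary~\ref{thm:findings:sobolev_loss} to reduce to bounding $\mathbb{E}_\mathcal{D}\|\nabla_{\bm{\theta}}\mathcal{L}\|_2^2$, use the MSE structure $\nabla_{\bm{\theta}}\mathcal{L}_n = (f_{\bm{\theta}}(\mathbf{x}_n)-y_n)\nabla_{\bm{\theta}}\mathbf{f}_n$ to write $\|\nabla_{\bm{\theta}}\mathcal{L}_n\|_2^2 = 2\mathcal{L}_n\|\nabla_{\bm{\theta}}\mathbf{f}_n\|_2^2 \le 2\mathcal{L}_{\max}(\bm{\theta})\|\nabla_{\bm{\theta}}\mathbf{f}_n\|_2^2$, and then identify $\mathbb{E}_\mathcal{D}\|\nabla_{\bm{\theta}}\mathbf{f}\|_2^2$ with the Gauss--Newton part of $\trace(H)$ modulo the residual cross term; the paper states the same chain in trace-of-outer-product form, which is identical since $\mathbb{E}_\mathcal{D}\|\nabla_{\bm{\theta}}\mathbf{f}\|_2^2 = \trace\!\big(\tfrac{1}{N}\sum_n \nabla_{\bm{\theta}}\mathbf{f}_n\nabla_{\bm{\theta}}\mathbf{f}_n^T\big)$. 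Your Cauchy--Schwarz control of the residual, with the explicit assumptions that $\trace(\nabla^2_{\bm{\theta}}\mathbf{f})$ stays bounded and per-sample losses vanish at commensurate rates near $\bm{\theta}^*$, is if anything more careful than the paper, which asserts the cross term is $\mathcal{O}(\mathcal{L}(\bm{\theta}))$ (the residual $\tfrac{1}{N}\sum_n\mathcal{L}_n'\nabla^2_{\bm{\theta}}\mathbf{f}_n$ actually scales like $\sqrt{\mathcal{L}_{\max}}$, as your analysis correctly reflects) and then drops it from the final display without further bookkeeping.
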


Theorem~\ref{thm:findings:curvature} links input-space smoothness of the loss to the geometry of the loss landscape in parameter space, via mean curvature $\laplace{(\mathcal{L}(\bm{\theta}))}$ in a neighbourhood of $\bm{\theta}^*$.

Figure~\ref{fig:findings:hessian} shows mean curvature of the loss landscape (solid line) in parameter space for our experimental setup (see appendix~\ref{sec:appendix:power_method} for algorithmic details), as well as input-space smoothness of the loss (dashed line). Mean curvature mirrors the \textit{input-space} loss Jacobian as model width increases, peaking near the interpolation threshold and decreasing afterward. This substantiates the bound in Equation~\ref{eq:findings:curvature}, and provides a characterization of the empirical Lipschitz constant in the loss landscape in terms of fundamental quantities in parameter space (Hessian trace). Figure~\ref{fig:appendix:hessian} complements our observations, by tracking the largest and smallest non-zero Hessian eigenvalues in parameter space, and showing that both quantities track double descent.

\paragraph{Connection to Stochastic Noise}~~~SGD is known to fluctuate around critical points due to stochastic noise arising from the discretization of the dynamic (finite learning rate)~\citep{mori2022power}, as well as the use of mini-batches to estimate model gradients~\citep{ziyin2022strength,thomas2020interplay}. At iteration $t$, an estimate of the noise $\bm{\epsilon}_t$ is given by $\bm{\epsilon}_t = \frac{1}{B}\sum_{b=1}^B\nabla_{\bm{\theta}}\mathcal{L}(\bm{\theta}_t, \mathbf{x}_{\xi_b}, y_{\xi_b}) -\mathbb{E}_{\bm{\xi}}\nabla_{\bm{\theta}}\mathcal{L}(\bm{\theta}_t)$, where $B$ denotes the batch size, and the indices $\bm{\xi} = (\xi_1, \ldots, \xi_B)$ represent sampling of mini-batches.

For models trained without weight decay, the parameter-space gradient covariance $C = \mathbb{E}_{\bm{\xi}}{[}\bm{\epsilon_t}\bm{\epsilon}_t^T{]}$ is closely related to the mean Hessian $H$~\citep{ziyin2022strength}, producing the following corollary.

\begin{restatable}{corollary}{covariance}
\label{thm:findings:covariance}
Let $\bm{\theta}^*$ be a critical point for the loss $\mathcal{L}(\bm{\theta}, \mathbf{x}, y)$ on $\mathcal{D}$. Let $\mathbf{f}_{\bm{\theta}}$ denote a neural network with at least one hidden layer, with $\|\bm{\theta}^1\| > 0$. Then,
\begin{equation}
    \label{eq:findings:covariance}
    \frac{x_{\min}^2}{\|\bm{\theta}^1\|_2^2} \mathbb{E}_\mathcal{D}\|\nabla_\mathbf{x}\mathcal{L}\|_2^2 \le \trace{(S)} + o(\mathcal{L}(\bm{\theta}))
\end{equation}
with $S = C + \frac{1}{B}\nabla_{\bm{\theta}}\mathcal{L}(\bm{\theta})^T\nabla_{\bm{\theta}}\mathcal{L}(\bm{\theta})$ denoting the gradient uncentered covariance.
\end{restatable}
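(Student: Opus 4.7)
The plan is to combine Theorem~\ref{thm:findings:curvature} with an MSE-specific algebraic identity that converts $2\mathcal{L}_{\max}(\bm{\theta})\trace(H)$ into the trace of the gradient uncentered covariance $S$. This identity is essentially the empirical-Fisher-to-Hessian correspondence invoked by \citet{ziyin2022strength} for models trained without weight decay.

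First, I would invoke Theorem~\ref{thm:findings:curvature} to bound $\frac{x_{\min}^2}{\|\bm{\theta}^1\|_2^2}\mathbb{E}_\mathcal{D}\|\nabla_\mathbf{x}\mathcal{L}\|_2^2$ by $2\mathcal{L}_{\max}(\bm{\theta})\laplace(\mathcal{L}(\bm{\theta}))+o(\mathcal{L}(\bm{\theta}))$, reducing the task to relating $2\mathcal{L}_{\max}\trace(H)$ to $\trace(S)$. For MSE the per-sample gradient factorizes as $\nabla_{\bm{\theta}}\mathcal{L}_n = (f_{\bm{\theta}}(\mathbf{x}_n) - y_n)\nabla_{\bm{\theta}}f$, so
\begin{equation*}
\nabla_{\bm{\theta}}\mathcal{L}_n(\nabla_{\bm{\theta}}\mathcal{L}_n)^T \;=\; (f-y_n)^2\,\nabla_{\bm{\theta}}f\,(\nabla_{\bm{\theta}}f)^T \;=\; 2\mathcal{L}_n\,\nabla_{\bm{\theta}}f\,(\nabla_{\bm{\theta}}f)^T.
\end{equation*}
For the piece-wise linear architectures studied in Section~\ref{sec:findings:pwl}, $\nabla^2_{\bm{\theta}}f = 0$ almost everywhere, so the per-sample MSE Hessian collapses to the Gauss--Newton factor $H_n = \nabla_{\bm{\theta}}f(\nabla_{\bm{\theta}}f)^T$, yielding the pointwise identity $\nabla_{\bm{\theta}}\mathcal{L}_n(\nabla_{\bm{\theta}}\mathcal{L}_n)^T = 2\mathcal{L}_n H_n$ (the residual-weighted correction is $o(\mathcal{L})$ for smooth activations near interpolation).

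Next, I would take traces and expectation over $\mathcal{D}$ to get $\mathbb{E}_\mathcal{D}\|\nabla_{\bm{\theta}}\mathcal{L}_n\|_2^2 = 2\mathbb{E}_\mathcal{D}[\mathcal{L}_n\trace(H_n)]$. Using that, in a neighbourhood of an interpolating critical point, every per-sample loss decays jointly, one replaces $\mathcal{L}_n$ by $\mathcal{L}_{\max}$ with an $o(\mathcal{L}(\bm{\theta}))$ remainder to obtain $\mathbb{E}_\mathcal{D}\|\nabla_{\bm{\theta}}\mathcal{L}_n\|_2^2 = 2\mathcal{L}_{\max}\trace(H) + o(\mathcal{L}(\bm{\theta}))$. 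Finally, expanding the definition $C = \mathbb{E}_{\bm{\xi}}[\bm{\epsilon}_t\bm{\epsilon}_t^T]$ with $\bm{\epsilon}_t = \bar{g} - \nabla_{\bm{\theta}}\mathcal{L}$ gives $\trace(S) = \trace(C) + \tfrac{1}{B}\|\nabla_{\bm{\theta}}\mathcal{L}\|_2^2$, which coincides (up to the batch-size normalization) with the expected squared per-sample gradient norm; the correction $\tfrac{1}{B}\|\nabla_{\bm{\theta}}\mathcal{L}\|_2^2$ vanishes at $\bm{\theta}^*$ and is of order $o(\mathcal{L})$ nearby, so it folds into the existing error term. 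Substituting back into Theorem~\ref{thm:findings:curvature} completes the chain.

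The hardest step is the replacement $\mathbb{E}_\mathcal{D}[\mathcal{L}_n\trace(H_n)] \approx \mathcal{L}_{\max}\trace(H)$ up to $o(\mathcal{L})$, since the per-sample curvature $\trace(H_n)$ need not be small near $\bm{\theta}^*$; the argument relies on a uniform smallness of the dispersion $\mathcal{L}_{\max} - \mathcal{L}_n$ across the dataset, which is guaranteed in the interpolating-overparameterized regime where all per-sample losses shrink to zero together. The remaining ingredients---the MSE gradient factorization, the piecewise-linear Hessian collapse, and the algebraic identity for $\trace(S)$---are routine given the machinery developed in Sections~\ref{sec:findings:pwl} and~\ref{sec:findings:curvature}.
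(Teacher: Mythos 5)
Your route detours through Theorem~\ref{thm:findings:curvature}, and the step you yourself flag as the ``hardest'' is where the chain breaks. You want to conclude from $\frac{x_{\min}^2}{\|\bm{\theta}^1\|_2^2}\mathbb{E}_\mathcal{D}\|\nabla_\mathbf{x}\mathcal{L}\|_2^2 \le 2\mathcal{L}_{\max}\trace(H) + o(\mathcal{L})$ by converting $2\mathcal{L}_{\max}\trace(H)$ into $\trace(S)$. But by the very computation in the proof of Theorem~\ref{thm:findings:curvature}, the relation is one-sided: $\trace(S) = \mathbb{E}_\mathcal{D}\|\nabla_{\bm{\theta}}\mathcal{L}\|_2^2 \le 2\mathcal{L}_{\max}\trace(H)$. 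The slack is $\frac{2}{N}\sum_n (\mathcal{L}_{\max}-\mathcal{L}_n)\trace\!\big(\nabla_{\bm\theta}\mathbf{f}_n\nabla_{\bm\theta}\mathbf{f}_n^T\big)\ge 0$, and this is multiplied by per-sample curvature terms that do \emph{not} vanish near $\bm{\theta}^*$, so it is not $o(\mathcal{L}(\bm{\theta}))$ unless you impose the extra hypothesis that $\mathcal{L}_{\max}/\mathbb{E}_\mathcal{D}\mathcal{L}_n\to 1$ uniformly, which is not in the corollary's statement. In short, you are trying to pass from the larger side of an inequality to the smaller side and absorb the gap into the error term, and the gap is not of the right order.

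The paper avoids this entirely by not using Theorem~\ref{thm:findings:curvature} as an intermediary. Its argument is: invoke Corollary~\ref{thm:findings:sobolev_loss} to get $\frac{x_{\min}^2}{\|\bm{\theta}^1\|_2^2}\mathbb{E}_\mathcal{D}\|\nabla_\mathbf{x}\mathcal{L}\|_2^2 \le \mathbb{E}_\mathcal{D}\|\nabla_{\bm{\theta}}\mathcal{L}\|_2^2$, then observe directly that
\begin{equation*}
\mathbb{E}_\mathcal{D}\|\nabla_{\bm{\theta}}\mathcal{L}\|_2^2 = \mathbb{E}_\mathcal{D}\trace\!\big(\nabla_{\bm{\theta}}\mathcal{L}\,\nabla_{\bm{\theta}}\mathcal{L}^T\big) = \trace\Big(\tfrac{1}{N}\textstyle\sum_n {\mathcal{L}_n'}^2\,\nabla_{\bm\theta}\mathbf{f}_n\nabla_{\bm\theta}\mathbf{f}_n^T\Big) = \trace(S),
\end{equation*}
the last equality being the defining property of the uncentered gradient covariance. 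This sidesteps both $\mathcal{L}_{\max}$ and $\trace(H)$ altogether. Your closing algebraic observations about $\trace(S)=\trace(C)+\frac{1}{B}\|\nabla_{\bm\theta}\mathcal{L}\|^2$ and the MSE gradient factorization are fine as far as they go, but they should be plugged into the short Corollary~\ref{thm:findings:sobolev_loss} route rather than the curvature bound, which only goes one way.
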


Figure~\ref{fig:appendix:covariance} tracks the largest principal component of $C$ for increasing model size, showing that stochastic noise peaks near the interpolation threshold, and then decreases in the overparameterized regime. In section~\ref{sec:appendix:covariance} we conclude our theoretical analysis by discussing the role of hyperparameters in controlling mean curvature and, in turn, input-space smoothness.

\begin{figure*}[t]
    \begin{minipage}{0.8\textwidth}
        \vspace*{\fill}
        \centering
        \includegraphics[width=0.45\linewidth,trim={0 0 40cm 11cm}, clip]{./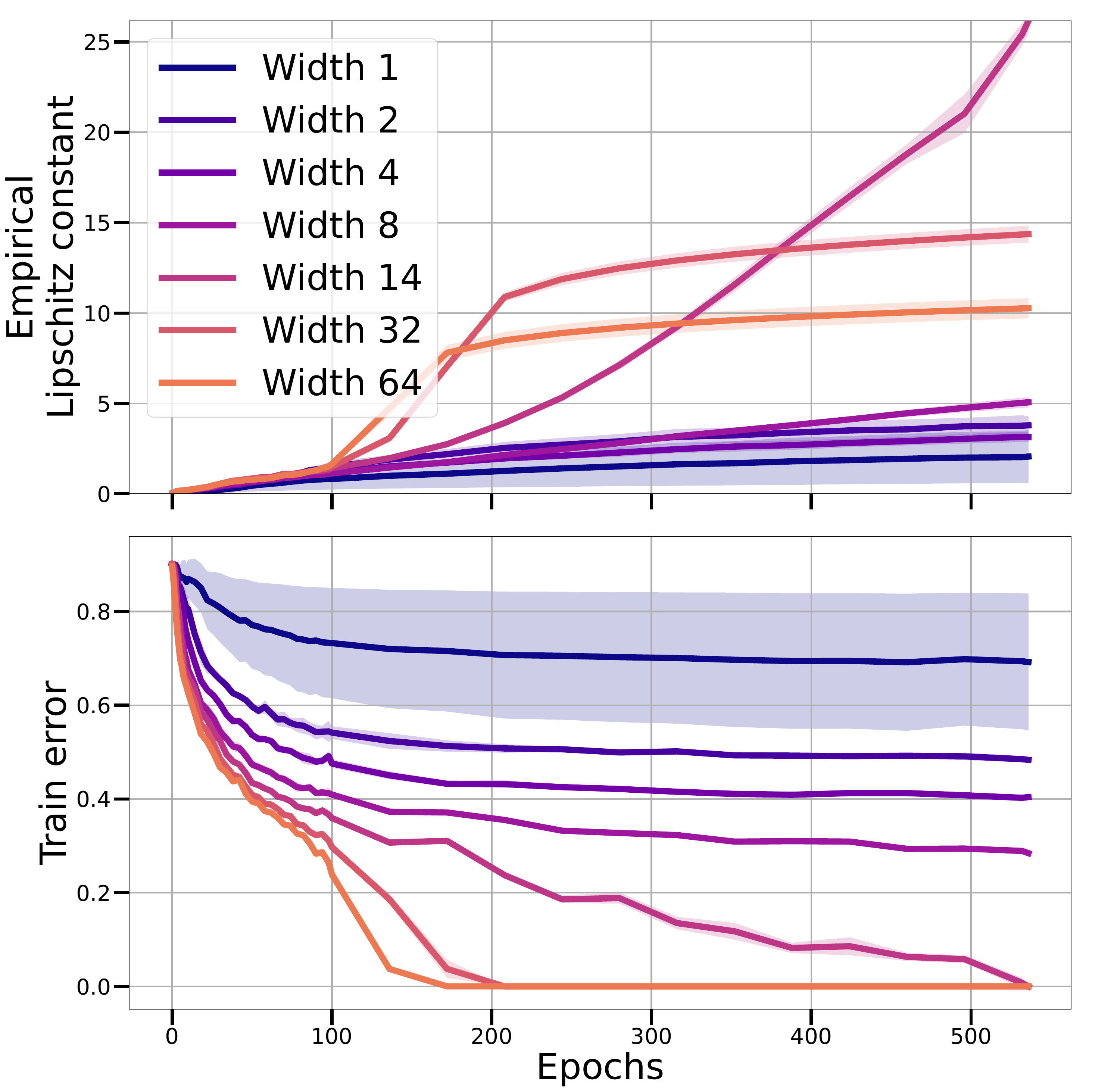} \
        \includegraphics[width=0.45\linewidth,trim={0 0 40cm 12cm}, clip]{./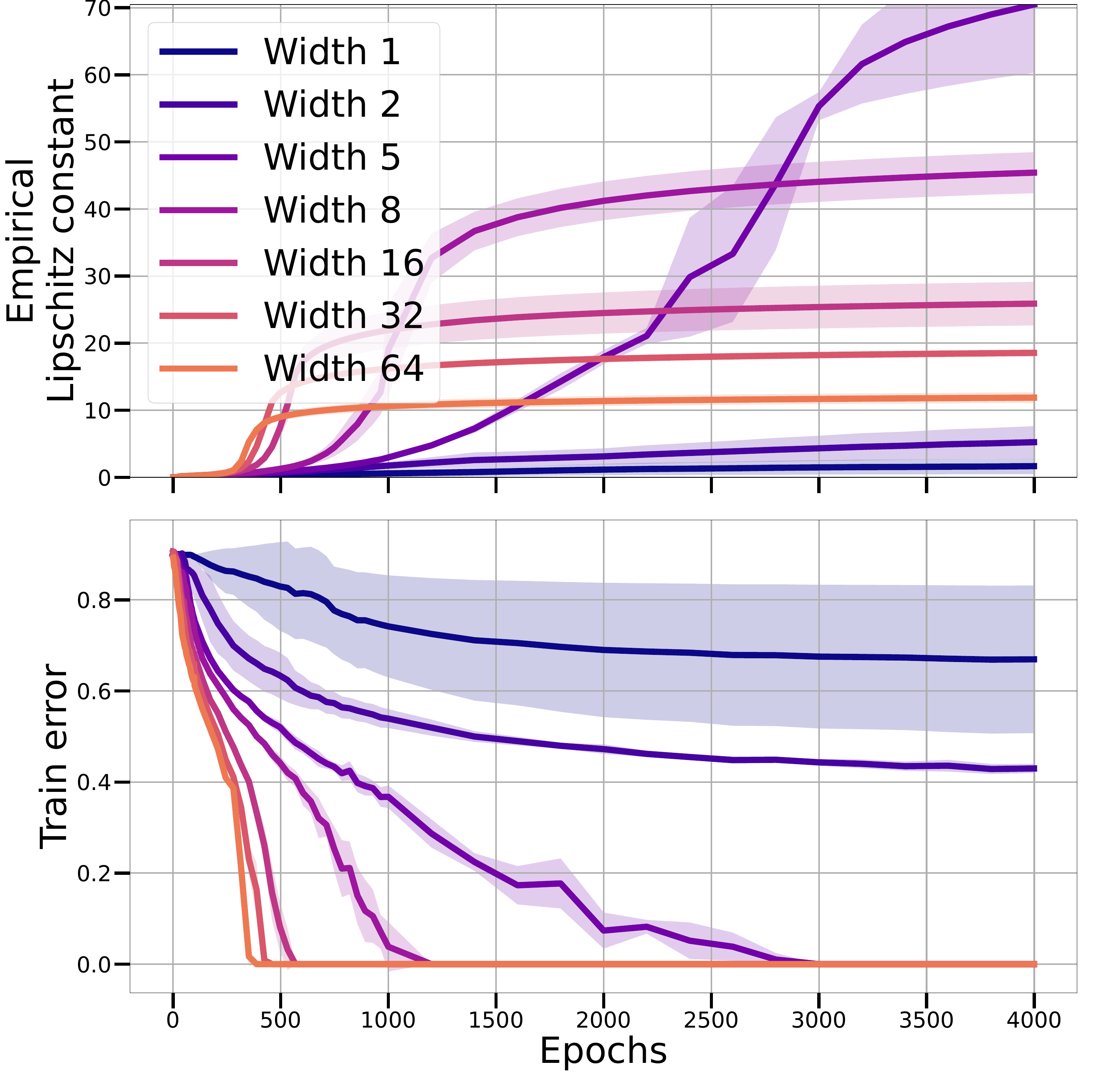}
    \end{minipage}\hspace*{\fill}%
    \begin{minipage}{0.2\textwidth}
        \vspace*{\fill}
        \centering
        \includegraphics[width=\linewidth,trim={0 0 10cm 10cm}, clip]{./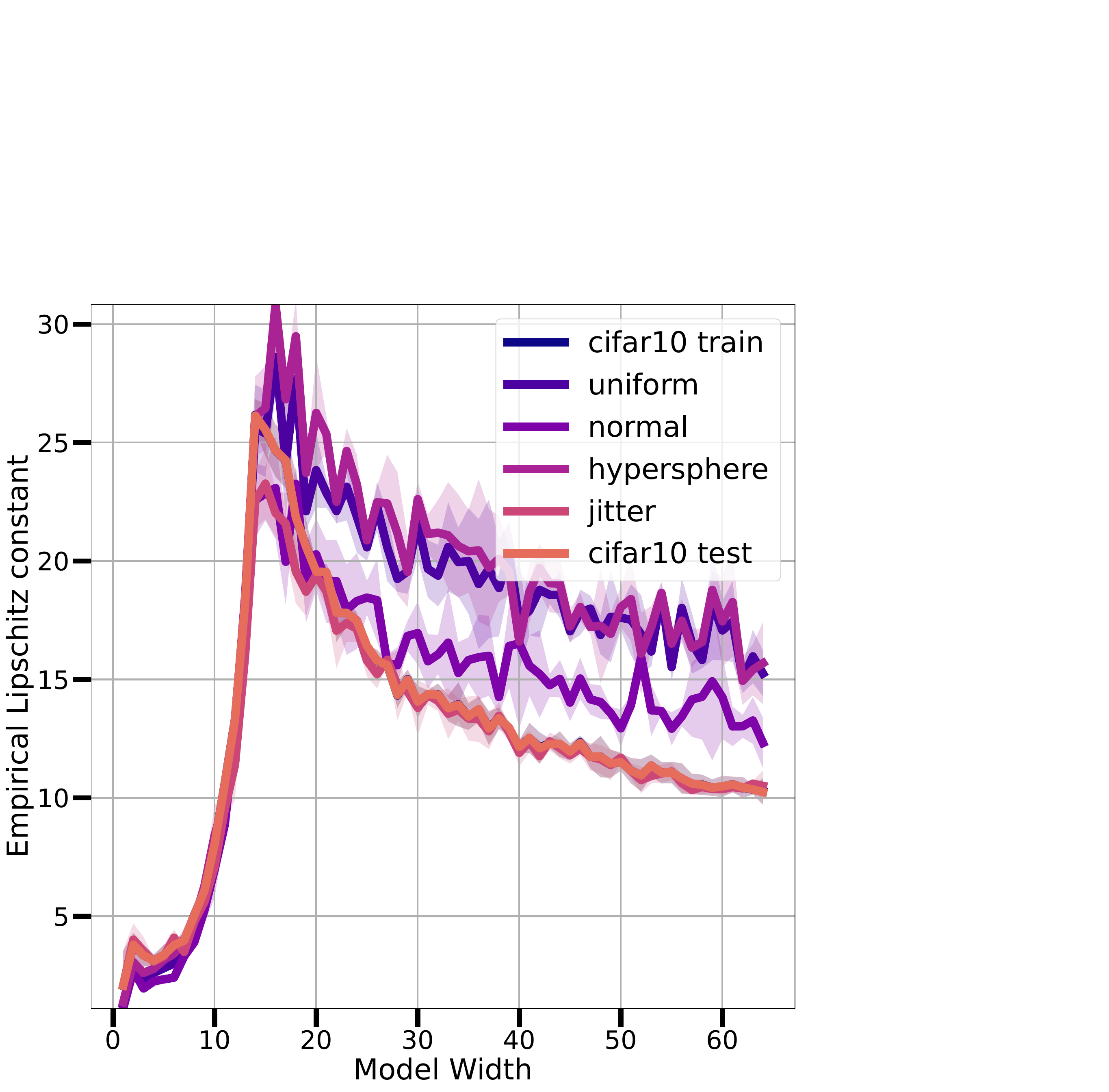}\par\vfill
        \includegraphics[width=\linewidth,trim={0 0 10cm 10cm}, clip]{./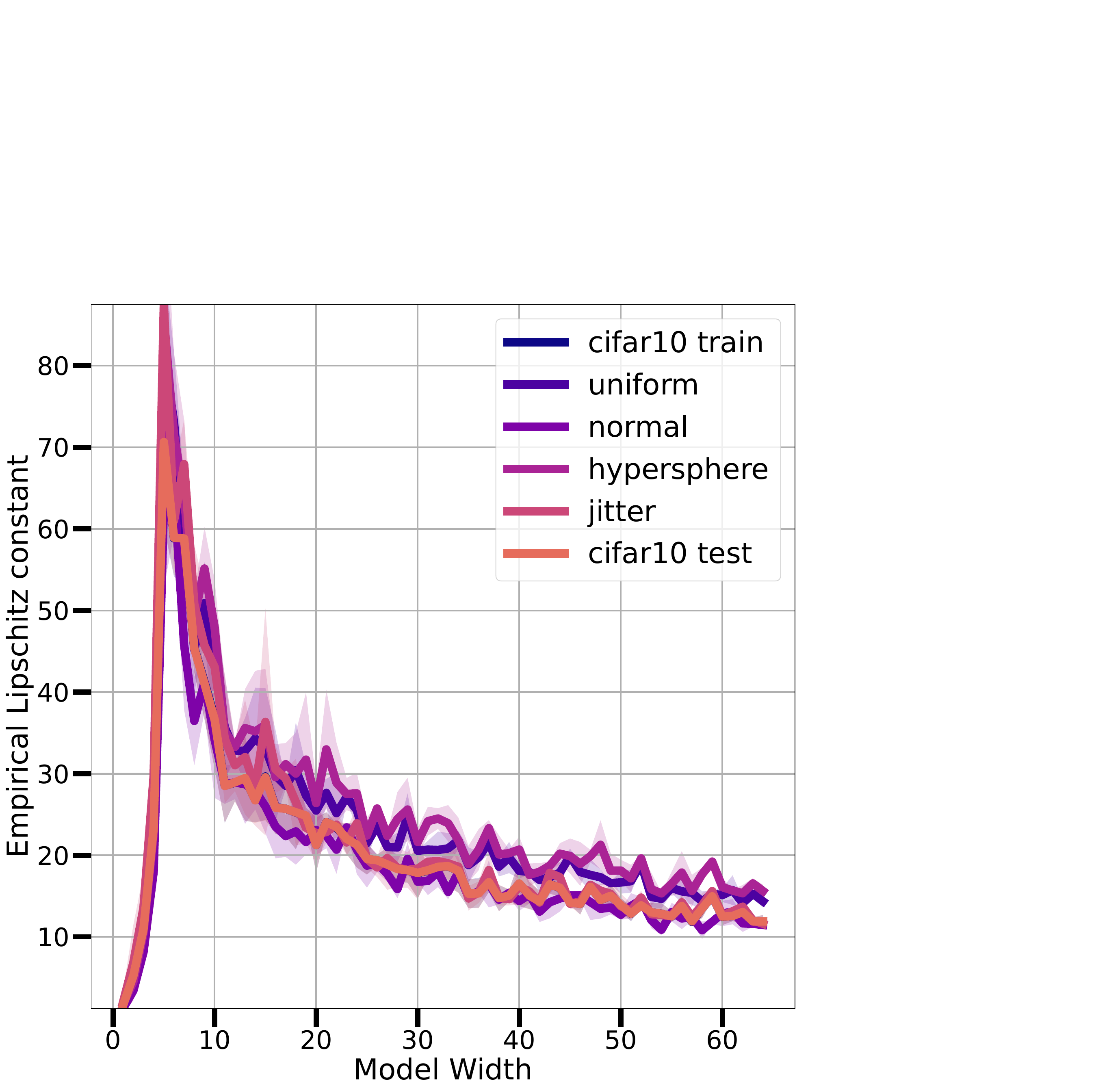}
    \end{minipage}
    \caption{(Left and middle panels) \textbf{Empirical Lipschitz constant over epochs} (top) and \textbf{Train error} (bottom) for ConvNets (left) and ResNets (middle) trained on CIFAR-10. (Right) Empirical Lipschitz constant for the same models on test data and random noise, for ConvNets (right, top panel) and ResNets (right, bottom panel).}
    \label{fig:implications:epochwise}
\end{figure*}

\paragraph{Summary}~~~By relating the empirical Lipschitz constant to the mean curvature of parameter space, we highlight a mechanism by which optimization implicitly controls sensitivity of $\mathbf{f}_{\bm{\theta}}$ on the training data $\mathcal{D}$ in proximity of a critical point $\bm{\theta}^*$.




\section{Implications for Implict Regularization}
\label{sec:implications}
We conclude our study by exploring broader implications of the trends observed in section~\ref{sec:findings}. We begin by studying the empirical Lipschitz constant throughout epochs, then explore implications of our findings for understanding effective complexity of trained networks.

\paragraph{Overparameterization Accelerates Interpolation} 
Figure~\ref{fig:implications:epochwise} shows the empirical Lipschitz constant (top) of ConvNets (left) and ResNet18s (right) trained on CIFAR-10 with $20\%$ noisy training labels, for representative model widths, together with the respective training error (bottom). Heatmaps for all model widths are presented in appendix~\ref{sec:appendix:epochwise}, connecting to the test error in Figure~\ref{fig:appendix:epochwise}. We recall that the model-wise interpolation threshold~\citep{belkin2019reconciling} denotes the smallest model width $\omega_0$ that perfectly classifies the training set, in our experiment corresponding to $\omega_0 = 14$ for ConvNets, and $\omega_0 = 5$ for ResNets.

During training, we observe three distinct behaviours. Small models ($\omega \ll \omega_0$) are unable to interpolate the entire training set, and their training error as well as empirical Lipschitz constant quickly plateau, remaining stable therefrom. Increasing size among small models reduces their training error, and correspondingly increases the empirical Lipschitz constant. 

At the same time, models near the interpolation threshold $\omega_0$ -- peaking in test error and empirical Lipschitz constant (cfr.\ Figure~\ref{fig:findings:lipschitz}) -- are able to achieve interpolation, \textit{only when given considerable training budget}. Correspondingly, the empirical Lipschitz constant monotonically increases over training as the training error is reduced, resulting in models achieving worst sensitivity and worst test error. In contrast, consistently with the double descent trends reported in section~\ref{sec:findings}, large models ($\omega \gg \omega_0$) are able to quickly interpolate the training set, with the largest models requiring fewer epochs to achieve interpolation.

The seemingly unbounded empirical Lipschitz constant of models near $\omega_0$ suggests that the observations reported in~\citet{hardt2016train} -- for which prolonged training budgets may hurt generalization performance -- are pertaining only to models near the threshold. In fact, larger models can be trained for considerably long without a comparable increase in complexity.

\begin{figure*}[t]
    \centering
    \includegraphics[width=0.32\linewidth, trim={0cm 0cm 0cm 0cm}, clip]{./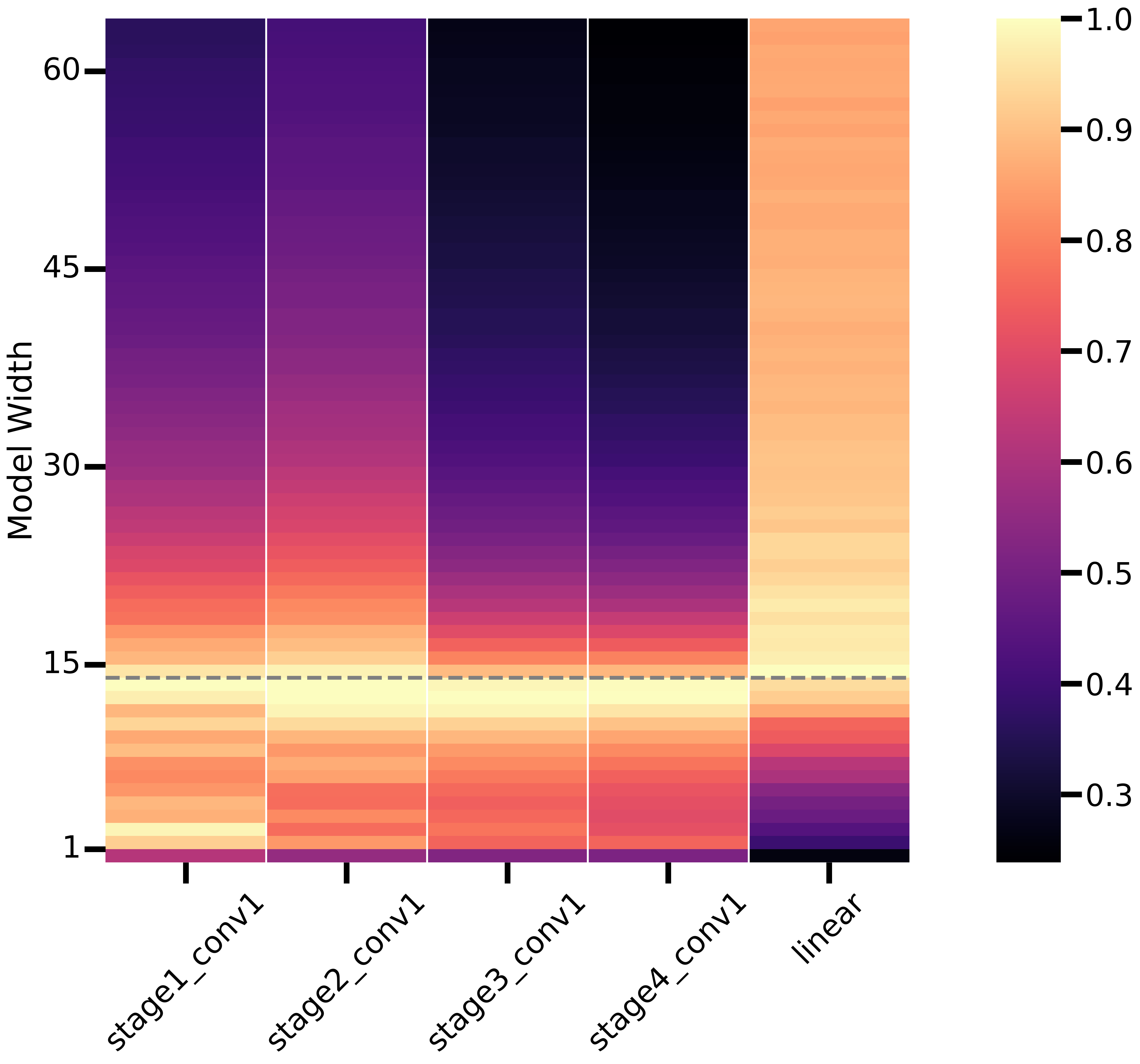}~
    \includegraphics[width=0.32\linewidth, trim={0cm 0cm 0cm 0cm}, clip]{./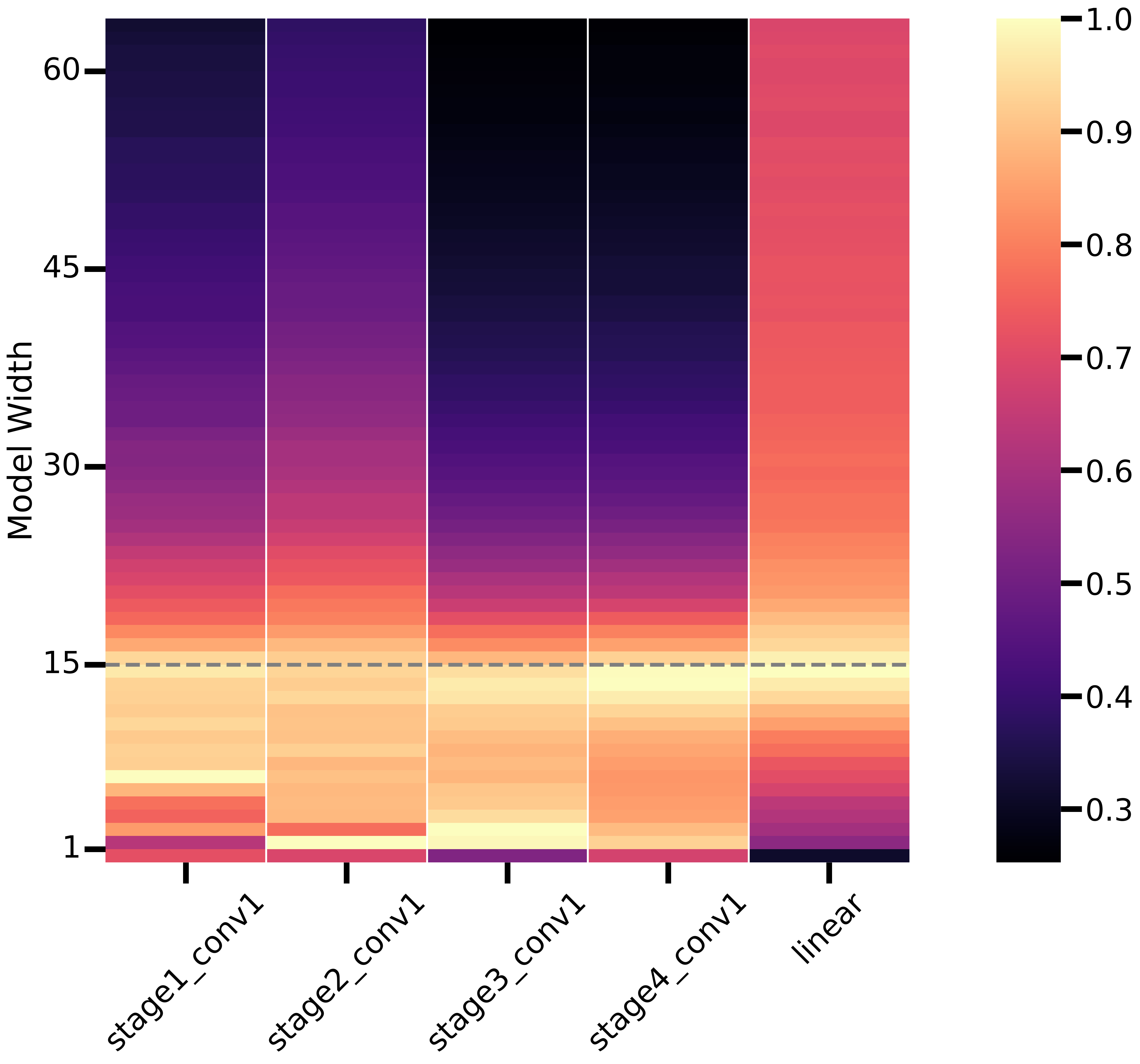}~
    \includegraphics[width=0.305\linewidth]{./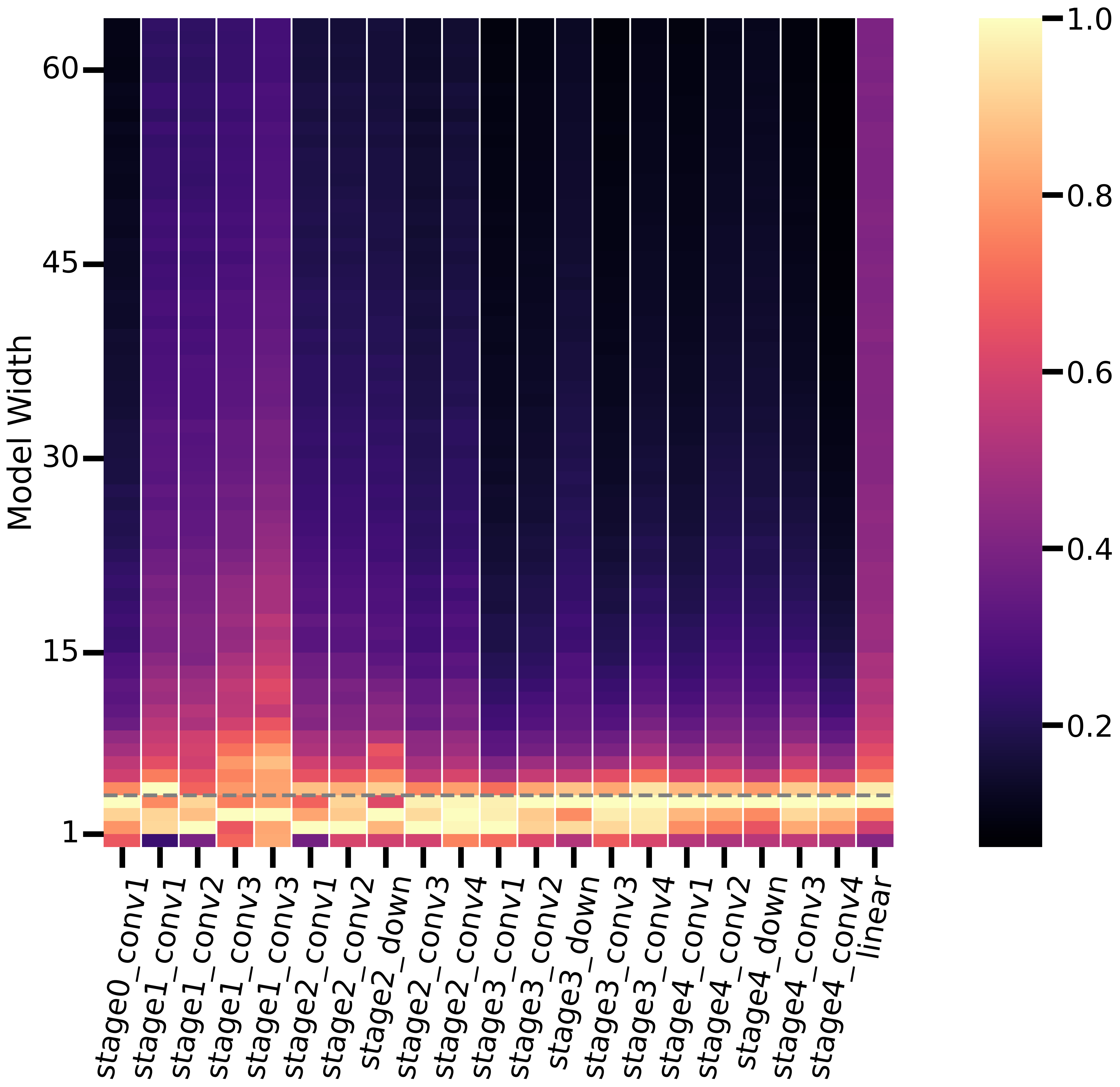}
    \caption{\textbf{Distance from initialization for each layer} of ConvNets trained on noisy CIFAR-10 (left), CIFAR-100 (middle), and ResNet18s trained on noisy CIFAR-10 (right). For each ConvNet and most ResNet layers, distance from initialization follows double descent, peaking at the interpolation threshold (dashed), suggesting global boundedness of the model function beyond training data for large models.}
    \label{fig:implications:init_distance}
\end{figure*}

\paragraph{Overparameterization Constrains Complexity} Referring again to Figure~\ref{fig:implications:epochwise}, we now consider implications for effective complexity of trained networks. First, since model weights are typically initialized to small values around zero~\citep{he2015delving,glorot2010understanding}, the empirical Lipschitz constant of all models is close to zero at the beginning of training. This corresponds to each model expressing a very simple function (low empirical Lipschitz constant), albeit with low generalization performance (typically close to random chance). Second, during training, fitting the dataset requires all models' Lipschitz constant to grow, with corresponding increase in model complexity (as measured by Equation~\ref{eq:findings:sobolev}). When zero error is reached ($\omega \ge \omega_0$), the empirical Lipschitz constant approximately plateaus, thereafter only slowly increasing over epochs. Recalling that large models interpolate faster, this finding suggest that large models may achieve interpolation via least meaningful deviation from initialization, realizing an overall smooth function even beyond the training set. 

To assess our hypothesis, we study the normalized distance from initialization $\frac{\|\bm{\theta}^\ell_T - \bm{\theta}^\ell_0\|_F}{\|\bm{\theta}^\ell_0\|_F}$ of each layer $\ell$, with $\bm{\theta}^\ell_0$ and $\bm{\theta}^\ell_T$ respectively denoting the layers' weights at initialization and convergence. Figure~\ref{fig:implications:init_distance} presents distance from initialization (colour) as model width increases ($y$-axis), for each layer ($x$-axis), for ConvNets (left) and ResNets (right) trained on CIFAR-10 with $20\%$ label noise, and ConvNets on CIFAR-100 with no label noise (middle).

For almost all layers, the quantity follows double descent as model width increases, peaking near the interpolation threshold (dashed line), and matching the epoch-wise trend reported in Figure~\ref{fig:implications:epochwise}.

This exciting finding supports our interpretation that faster interpolation, as promoted by overparameterization, results in model functions which are overall low-complexity, due to least (but meaningful) deviation from initialization. 

Our findings extend \citet{neyshabur2018role}, who initially reported that distance from initialization decreases for overparameterized models. Importantly, we show that the statistic is non-monotonic in model size, and that it strongly correlates with double descent for the test error. Together with the observed low mean curvature of large models shown in section~\ref{sec:findings:curvature}, this finding shares potential connection to the linear mode connectivity phenomenon~\citep{garipov2018loss}, by which low-loss paths that connect solutions obtained by optimization of the same model and task have been found in practice. Indeed, deeper layers of large models can be rewound to their value at initialization without considerably affecting model performance~\citep{chatterji2020intriguing,zhang2019all}, supporting our observations.

\begin{figure*}[t]
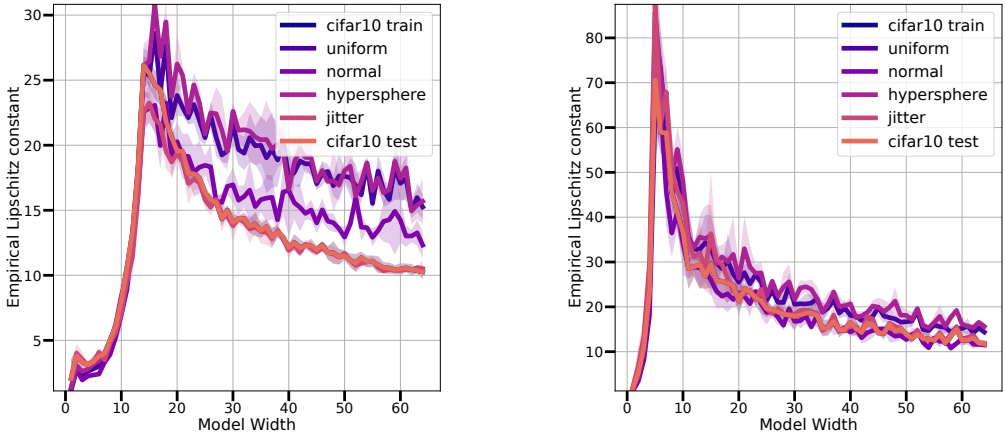

    \centering
    \hspace{\fill} \includegraphics[width=.4\linewidth,trim={0 0 10cm 10cm}, clip]{./img/cnn/cifar10/jacobian_operator_norm-ablation.pdf} \hspace{\fill}
    \includegraphics[width=.4\linewidth,trim={0 0 10cm 10cm}, clip]{./img/resnet18/cifar10/jacobian_operator_norm-ablation.pdf} \hspace{\fill}
    \caption{Empirical Lipschitz constant for ConvNets (left) and ResNet18s (right) test data and random noise.}
    \label{fig:implications:ablation}
\end{figure*}

\paragraph{Bounded Complexity Beyond Training Data} To conclude, in Figure~\ref{fig:implications:ablation} we estimate the empirical Lipschitz constant of ConvNets (left) and ResNets (right) trained on CIFAR-10, probing the networks by computing Equation~\ref{eq:findings:operator} on unseen test data as well as on random noise lying far from the support of the data distribution (experimental details in appendix~\ref{sec:appendix:random}). Intriguingly, the empirical Lipschitz constant remains bounded even far from $\mathcal{D}$, and the model-wise trend follows double descent, peaking at the interpolation threshold. This finding further strengthens the view that reduced distance from initialization via acceleration may essentially control complexity of the \textit{whole} model function.

\section{Related Work and Discussion}
\label{sec:rel_work}
Deep networks are able to express a rich family of functions as their model size increases~\citep{zhang2018understanding,telgarsky16benefits,cybenko1989approximation}. However, the complexity of generalizing models appears to be constrained in practice~\citep{neyshabur2018role,zhang2019all,neyshabur2015search}. Developing a formal characterization of the phenomenon is still a challenging open problem. Theoretical studies hinge upon finding a parameterization of the hypothesis class of trained networks that meaningfully constrains their expressivity. Importantly, several works rely on uniform bounds on the Lipschitz constant to constrain model function variation~\citep{kawaguchi2022robustness,ma2021linear,wei2019data,nagarajan2018deterministic,bartlett2017spectrally}. Moreover, in practical settings, explicitly regularizing the Lipschitz constant yields improved performance~\citep{gouk2021regularisation,moosavi2019robustness,novak2018sensitivity}.

Recently, the study of the Lipschitz constant has received renewed attention, with \citet{bubeck2021universal} prescribing overparameterization as a necessary condition for smooth interpolation, for a generic class of learners. Our work corroborates their findings, by also presenting an upper bound on the constant in relation to the geometry of the loss landscape.

While tightly estimating the Lipschitz constant is NP-hard for deep networks~\citep{jordan2020exactly,virmaux2018lipschitz}, we focus on complexity w.r.t.\ training data. Crucially, our Theorem~\ref{thm:findings:sobolev} extends a uniform bound on input-space sensitivity~\citep{ma2021linear} with a novel one that experimentally captures double descent.

Interestingly, a concurrent work uses Sobolev seminorms of ReLU networks on the training set to propose a complexity measure~\cite{dherin2022neural}. In line with our findings, their proposed measure mirrors the test error. Our works differ in that they focus on studying regularization of the metric, while instead we build a theoretical connection to several fundamental quantities capturing double descent in connection to the geometry of the loss landscape.

\section{Conclusions}
\label{sec:conclusions}
We carry out an extensive study of the empirical Lipschitz constant of deep networks undergoing double descent, presenting implications for Lipschitz continuity and its implicit regularization via overparameterization. By building a theoretical connection with the loss landscape geometry, we present several correlates of double descent in terms of fundamental notions, that we hope will inspire further theoretical studies. We isolate two important quantities -- namely loss landscape curvature and distance of parameters from initialization -- respectively controlling optimization dynamics around a critical point and bounding model function complexity beyond training data. We believe understanding the structure and singularity of the overparameterized mapping from parameters to model functions is a fundamental open problem, which might reveal the true latent factors driving generalization.

\subsubsection*{Broader Impact Statement}
Our work aims at improving understanding of the impact of overparameterization in promoting regularisation in deep learning. We hope that the insights presented in our work and the extensive experimental verification of our findings will help guide and inspire theoretical works in understanding generalisation in deep learning, and eventually help guide the design of improved learning algorithms. While we do not see a direct societal impact of our work, our extensive empirical verification required extensive computation, amounting to several GPU-years of computation, and thus carrying a non-negligible carbon footprint.

\subsubsection*{Acknowledgments}
The authors thank David Lopez-Paz for many fruitful discussions on smoothness priors of deep networks and Kevin Scaman for technical feedback on an earlier draft of the paper.
The work was partially funded by Swedish Research Council project 2017-04609. Scientific computation was enabled by the supercomputing resource Berzelius provided by National Supercomputer Centre at Link\"{o}ping University and the Knut and Alice Wallenberg foundation, as well as by resources provided by the National Academic Infrastructure for Supercomputing in Sweden (NAISS) at Alvis partially funded by the Swedish Research Council through grant agreement no. 2022-06725.

\bibliographystyle{sty/tmlr}  
\bibliography{references}  

\newpage
\appendix
\section{Organization of the Appendix}

\begin{itemize}
    \item Section~\ref{sec:appendix:setup} fully details our experimental setup, as well as the hardware infrastructure used for our experiments.
    \item Section~\ref{sec:appendix:power_method} presents the algorithms used for estimating the empirical Lipschitz constant and to measure parameter space curvature.
    \item Section~\ref{sec:appendix:extra} presents additional figures supporting the experiments in sections~\ref{sec:findings} and \ref{sec:implications}.
    \item[] \begin{itemize}
        \item Section~\ref{sec:appendix:upper_bound} discusses an upper bound on the true Lipschitz constant of piece-wise linear networks, undergoing double descent as mode size increases.
        \item Section~\ref{sec:appendix:hessian} presents additional results on parameter-space curvature of the loss landscape under double descent.
        \item Section~\ref{sec:appendix:covariance} discusses our results in relationship to training hyperparameters.
        \item Section~\ref{sec:appendix:transformers} extends our results to Transformers trained on machine translation tasks.
        \item Section~\ref{sec:appendix:epochwise} extends the epoch-wise trends reported for selected models in Figure~\ref{fig:implications:epochwise} to all model widths considered in our study.
        \item Section~\ref{sec:appendix:correlation} validates Theorem~\ref{thm:findings:curvature} in the interpolating regime for the models considered.
    \end{itemize}
    \item Section~\ref{sec:appendix:random} describes the distributions used for generating random validation data for Figure~\ref{fig:implications:epochwise}.
    \item Finally, section~\ref{sec:appendix:proofs} presents proofs of the formal statements appearing in section~\ref{sec:findings}.
\end{itemize}

\section{Experimental Setup}
\label{sec:appendix:setup}

We train a family of ConvNets composed of $4$ convolutional stages -- each corresponding to a \texttt{{[}Conv, ReLU{]}} block followed by maxpooling with stride $2$ -- and $1$ dense classification layer. We also train a family of ResNet18s~\citep{he2015delving} without batch normalization layers. Both network architectures are composed of $4$ convolutional stages, in which each spatial dimension is reduced by factor of $2$ and the number of learned feature maps doubles. More precisely, the convolutional stages respectively follow the progression ${[}\omega, 2\omega, 4\omega, 8\omega{]}$, where $\omega$ is the base width of the network, i.e.\ the number of feature maps learned at the first layer. 

In our experiments, following~\citet{nakkiran2019deep}, we vary the base width in the range $\omega = 1, \ldots, 64$. By controlling the network size through the network width, we produce a range of models presenting model-wise double descent in the test error, which captures the essence of the \textit{benign overfitting} phenomenon\citep{bartlett2020benign} observed for large interpolating networks, while also presenting \textit{malign overfitting} for models near the interpolation threshold. Furthermore, controlling model size through base width allows us to keep the network depth fixed, and focus our study on effective complexity of fixed-depth networks, for two network architecture families (ConvNets and ResNets).

To tune hyperparameters, we take a random validation split of size $1000$ from each CIFAR training set. We train all networks with SGD with momentum $0.9$, batch size $128$, and fixed learning rate, set at $\eta = 5\rm{e}-3$ for the ConvNets and $\eta = 1\rm{e}-4$ for the ResNets. We train the ConvNets for $500$ epochs, and the ResNets for $4000$ epochs. To stabilize prolonged training, we use learning rate warmup over the first $5$ epochs of training, starting from a learning rate $\eta_0 = 10^{-1} \times \eta$.

\noindent\textbf{Transformers on Machine Translation tasks}~~~We also train multi-head attention-based Transformers~\citep{vaswani2017attention} for neural machine translation tasks. We vary model size by controlling the embedding dimension $d_e$, as well as the width $h$ of all fully connected layers, which we set to $h = 4d_e$ following the architecture described in~\citet{vaswani2017attention}. We train the transformer networks on the WMT'14 En-Fr task~\citep{machavcek2014results}, as well as ISWLT'14 De-En~\citep{cettolo2012wit3}. The training set of WMT'14 is reduced by randomly sampling $200$k sentences, fixed for all models. The networks are trained for $80$k gradient steps, to optimize per-token perplexity, with $10\%$ label smoothing, and no dropout, gradient clipping or weight decay.

\noindent\textbf{Hardware specifications}~~~Our codebase is implemented in Pytorch version 1.11, running on a local cluster equipped with NVIDIA A100 GPUs with 40GB onboard memory. Our experiments involve training $64$ ConvNets and ResNets (each corresponding to a base width $\omega$) for up to $4000$ epochs, producing $72$ model checkpoints per network. We use $3$ random seeds for the ConvNets and $5$ for the ResNets, controlling network initialization and the shuffling and sampling of mini-batches from the training set. We use a dedicated random seed for generating the validation split used for hyperparameter tuning, fixed for all networks, as well as a fixed seed for corrupting the CIFAR training labels. The empirical Lipschitz constant is estimated and averaged on every training point for each of the reported configurations.

\noindent\textbf{Number of model parameters}~~~Our main empirical finding is that, while network size increases -- causing uniform upper bounds like \citet{ma2021linear} (Theorem 3) to monotonically increase -- the empirical Lipschitz constant of the models decreases past the interpolation threshold. To better frame our observations, we report in Table~\ref{tab:appendix:params} the number of parameters for a few representative models in our experiments.

\begin{table}[t]
    \caption{Number of model parameters $p$ for representative widths $\omega$ on CIFAR-10. Models near the interpolation threshold are marked in bold.}
    \label{tab:appendix:params}
    \centering
    \begin{tabular}{@{}ccc@{}}
    \toprule
    $\omega$ & ConvNet     & ResNet18     \\ \midrule
    1        & $510$       & $2,902$      \\
    2        & $1,766$     & $11,242$     \\
    4        & $6,546$     & $\mathbf{44,266}$     \\
    8        & $25,178$    & $175,690$    \\
    16       & $\mathbf{98,730}$    & $700,042$    \\
    32       & $390,986$   & $2,794,762$  \\
    64       & $1,556,106$ & $11,168,266$ \\ \bottomrule
    \end{tabular}
\end{table}

\section{Operator Norm Estimation}
\label{sec:appendix:power_method}

For linear operators $\mathbf{A}: (\mathbb{R}^d, \|\cdot\|_p) \to (\mathbb{R}^K, \|\cdot\|_q)$, the operator norm is defined as 
\begin{equation}
\label{eq:methodology:operator_norm}
\| A \|_{\text{op}} := \sup_{\mathbf{x} : \|\mathbf{x}\|_p \ne 0} \frac{\|A\mathbf{x}\|_q}{\|\mathbf{x}\|_p},
\end{equation}
where the norms $\|\cdot\|_p$ and $\|\cdot\|_q$ are respectively taken in input and logit space. Crucially, if $p = q = 2$, then the operator norm can be estimated by computing the largest singular value of $\mathbf{A}$. For any data point $\overline{\mathbf{x}} \in \mathbb{R}^d$, evaluating the Jacobian at $\overline{\mathbf{x}}$ yields $\nabla_\mathbf{x}\mathbf{f}(\mathbf{x}, \bm{\theta})|_{\mathbf{x} = \overline{\mathbf{x}}} = \bm{\theta}_\epsilon$, i.e. the linear function computed by $\mathbf{f}$ on the activation region $\epsilon$ of $\overline{\mathbf{x}}$. Hence, at each point, $\|\bm{\theta}_\epsilon\|_{\text{op}}$, provides an estimate of worst-case sensitivity of the corresponding linear ``piece'' of $\mathbf{f}$. We note that, while the supremum $\|\mathbf{A}\|_{\text{op}}$ may not be attained within the activation region of $\overline{\mathbf{x}}$, the operator norm upper bounds worst-case sensitivity within the region. Furthermore, activation regions neighbouring training data tend to compute approximately the same linear function~\citep{gamba2022all, roth2020adversarial}.

\paragraph{Computing the operator norm} Computing the operator norm of $\bm{\theta}_\epsilon \in \mathbb{R}^{K \times d}$ entails two steps. First, computing the gradient $\nabla_\mathbf{x} \mathbf{f}|_{ \mathbf{x} = \overline{\mathbf{x}}} = \bm{\theta}_\epsilon$  (via automatic differentiation), and then estimating its largest singular value. To perform the latter, we use a standard power method. Starting at iteration $t=0$ with randomly initialized vectors $\tilde{\mathbf{u}}_0 \in \mathbb{R}^K$, $\tilde{\mathbf{v}}_0 \in \mathbb{R}^d$, and corresponding normalized vectors $\mathbf{u}_0 = \frac{\tilde{\mathbf{u}}_0}{\| \tilde{\mathbf{u}}_0\|_q}$, \mbox{$\mathbf{v}_0 = \frac{\tilde{\mathbf{v}}_0}{ \| \tilde{\mathbf{v}}_0\|_p}$}, at step $t$ we compute 
\begin{equation}
\begin{aligned}
    \tilde{\mathbf{u}}_t   &\gets \nabla_\mathbf{x}\mathbf{f}~\mathbf{v}_{t-1} \\
    \tilde{\mathbf{v}}_{t} &\gets \mathbf{u}_t^T \nabla_\mathbf{x}\mathbf{f} \\
    \sigma_t               &\gets \mathbf{u}_t^T \nabla_\mathbf{x}\mathbf{f}~\mathbf{v}_t
\end{aligned}
\end{equation}
with $\sigma_t$ storing the largest singular value at convergence, defined based on a relative tolerance $1\rm{e}-6$ on the size of the increments of $\sigma_t$.

In our experiments, we estimate the Lipschitz constant of the network by its empirical constant, $\mathbb{E}_{\mathcal{D}}\| \nabla_\mathbf{x}\mathbf{f}|_{\mathbf{x} = \mathbf{x}_n}\|_{\text{op}}$, for all training points $\mathbf{x}_n \in \mathcal{D}$. We extend the empirical Lipschitz constant estimation to validation data in Figure~\ref{fig:implications:epochwise}. 

\subsection{Hessian Eigenvalue Estimation}
\label{sec:appendix:hessian_estimation}

The power method detailed in section~\ref{sec:appendix:power_method} can be used to estimate the largest eigenvalue of the parameter-space loss Hessian (Figure~\ref{fig:appendix:hessian}), as well as the first principal component of the gradient noise covariance (Figure~\ref{fig:appendix:covariance}). Importantly, for large networks, direct computation of any of the two matrices is infeasible due to the large number of parameters. Instead, we use efficient Jacobian-vector products for estimating the noise covariance (which entails accumulating the true gradient $\mathbb{E}_{\bm{\xi}}\nabla_{\bm{\theta}}\mathcal{L}$ at each iteration of the algorithm. For the Hessian matrix, Jacobian-vector products can be turned into Hessian-vector products using Pearlmutter's trick~\citep{pearlmutter1994fast}.

\subsection{Hessian Trace Estimation}
\label{sec:appendix:trace}

To estimate the Hessian trace in Figure~\ref{fig:findings:hessian}, we use Hutchinson's algorithm~\citep{hutchinson1990stochastic}, which provides an unbiased estimator of the trace. At each iteration $t$, the algorithm generates a set of $V$ random test vectors, $\mathbf{v}_n \in \mathbb{R}^p$ with zero mean $\mathbb{E}{\mathbf{v}_n} = \frac{1}{p}\sum\limits_{i = 1}^p v_n^i = 0$ and variance $\mathbb{E}{[}\mathbf{v}_n\mathbf{v}_n^T{]} = I_p$, by sampling each $\mathbf{v}_n$ from the Rademacher distribution. At iteration $t$, the algorithm computes $\trace_t = \frac{1}{V}\sum\limits_{n = 1}^V\mathbf{v}_n^TH\mathbf{v}_n$, where $H$ is the expected loss Hessian. Notably, the trace is obtained by computing $\frac{1}{V}\sum\limits_{i=1}^p\mathbf{v}_n^TH\mathbf{v}_n = \trace{(\mathbf{v}_n^TH\mathbf{v}_n)}$, where the Hessian is never instantiated and is implicitly computed via Hessian-vector products~\citep{pearlmutter1994fast}. In our work, we estimate the trace using $V=100$ test vectors. 

\section{Additional Experiments}
\label{sec:appendix:extra}

\subsection{Upper Bounding the Lipschitz Constant}
\label{sec:appendix:upper_bound}

We complement our analysis of the empirical Lipschitz lower bound of Equation~\ref{eq:findings:operator} by studying an upper bound on the true Lipschitz constant $\lip(\mathbf{f})$, described by Equation~\ref{eq:findings:lipschitz_upper_bound}. Figure~\ref{fig:findings:lipschitz_upper_bound} presents the upper bound for ConvNets trained on CIFAR-10, CIFAR-100, and ResNets trained on CIFAR-10. Similarly to the empirical Lipschitz lower bound, the upper bound closely follows double descent for the test error, peaking near the interpolation threshold. We note that, since the upper bound is independent of the binary activation pattern of ReLU, it captures global worst-case sensitivity of the network on the whole domain $\Omega$ of $\mathbf{f}$, suggesting that the non-monotonic dependency of Lipschitz continuity on model size holds also beyond the training set $\mathcal{D}$. This observation is substantiated by experimentally extending the lower Lipschitz bound in Equation~\ref{eq:findings:operator} to validation as well as random data in Figure~\ref{fig:implications:ablation}, as well as by observing that distance from initialization of trained weights also undergoes double descent (Figure~\ref{fig:implications:init_distance}). Together, with Theorem~\ref{thm:findings:curvature}, these observations suggest that the main factor controlling double descent when the number of model parameters varies is the loss landscape curvature, and which in turn controls input-space sensitivity on the training set through the empirical Lipschitz lower bound. We explore parameter-space curvature in more detail in the next section.

\begin{figure*}[t]
    \centering
    \includegraphics[width=0.27\linewidth, trim={0cm 0cm 17cm 10cm}, clip]{./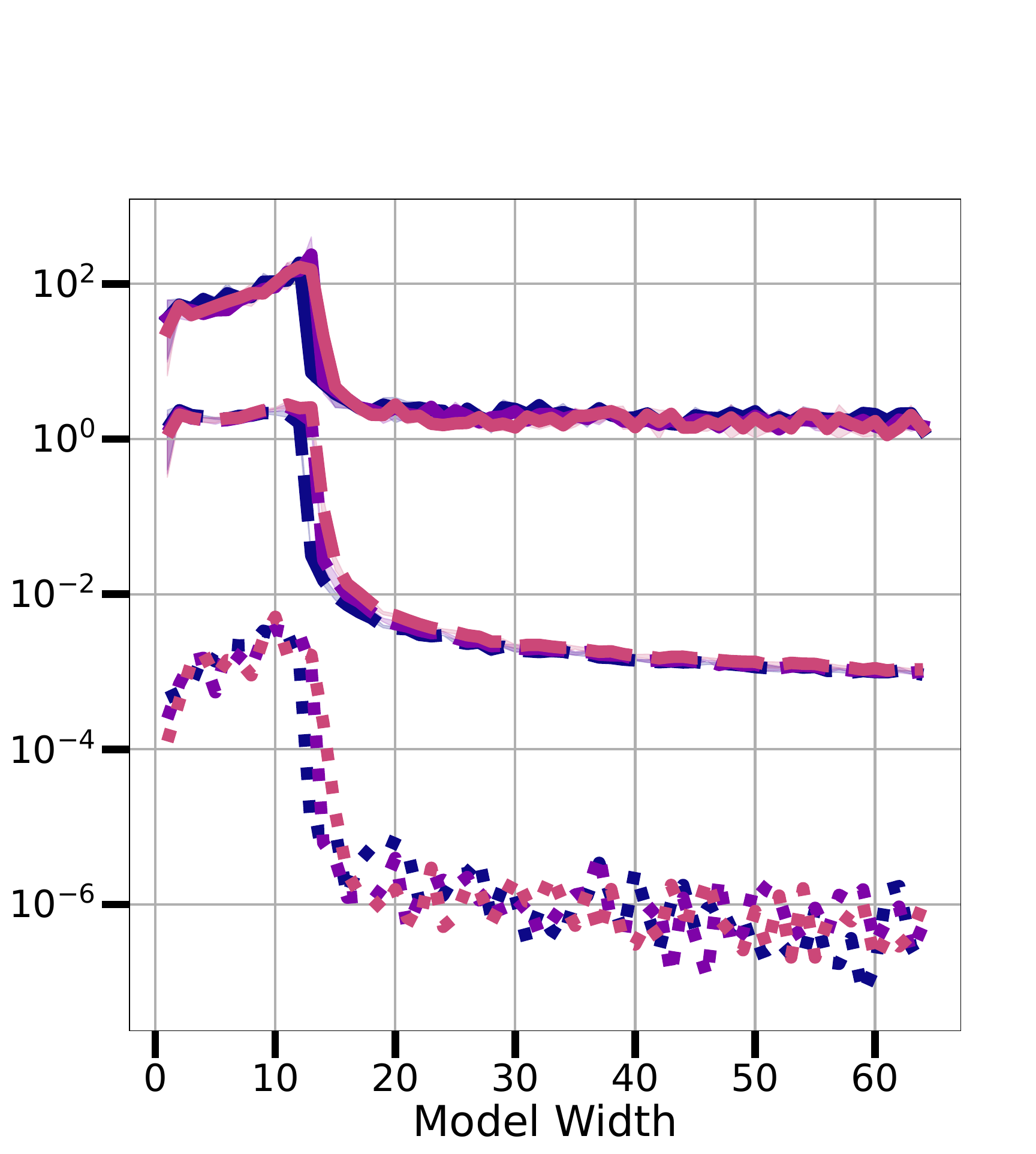}~
    \includegraphics[width=0.27\linewidth, trim={0cm 0cm 17cm 10cm}, clip]{./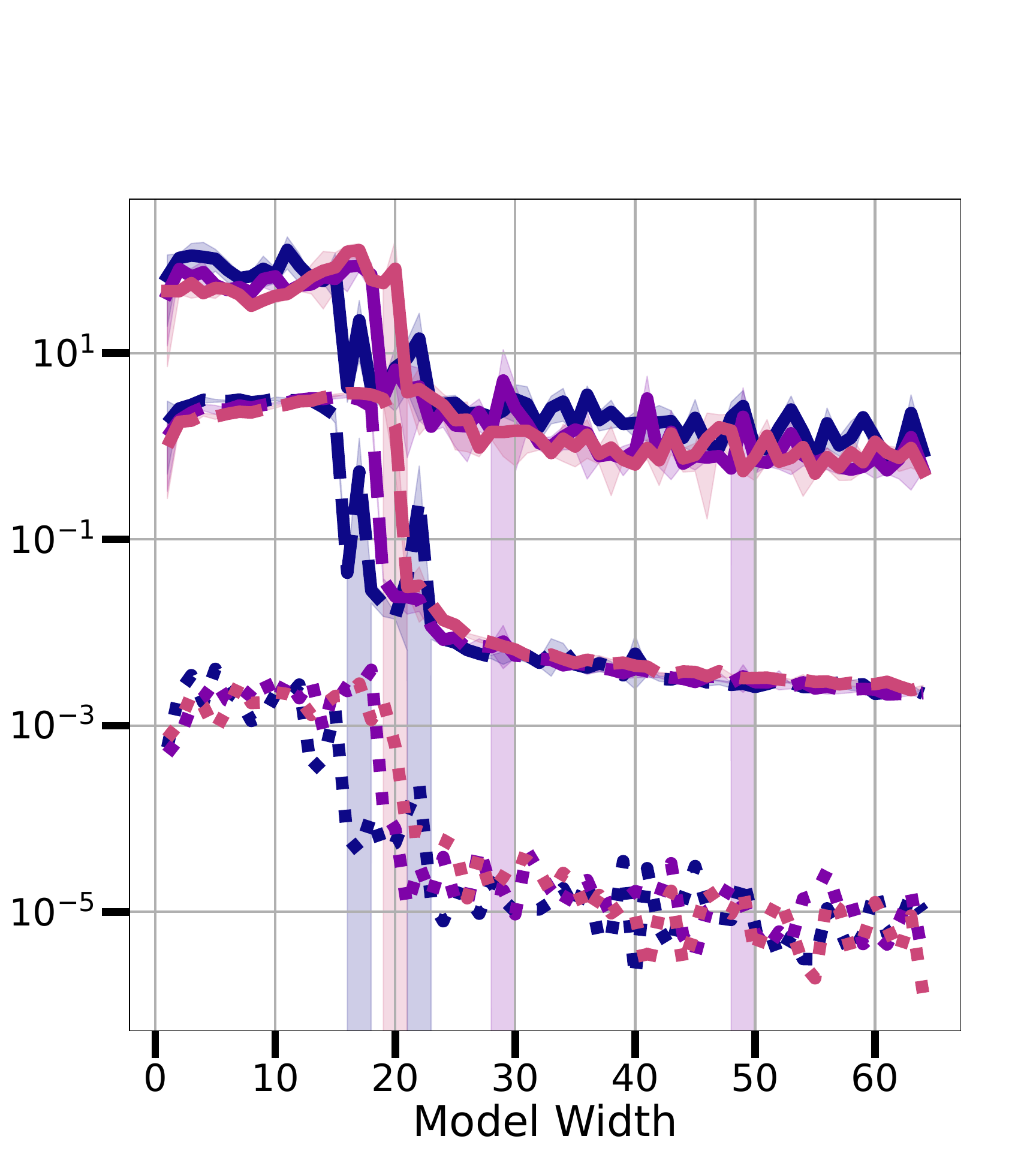}~
    \includegraphics[width=0.43\linewidth, trim={0cm 0cm 0cm 10cm}, clip]{./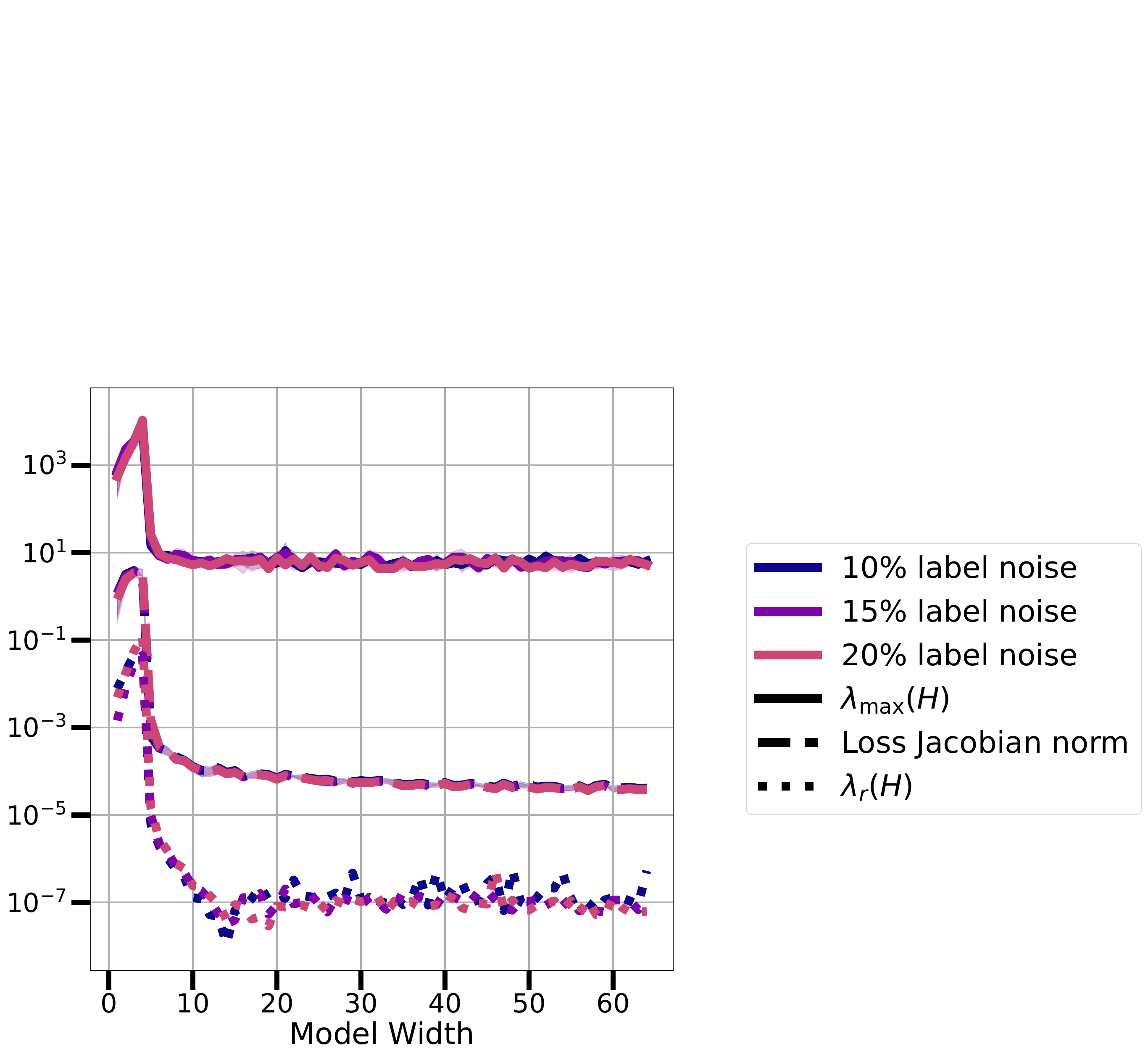}
    \caption{\textbf{Maximum and minimum curvature} for the loss in parameter space, and \textbf{input-space loss Jacobian norm}. From left to right: ConvNets trained on CIFAR-10 (left), CIFAR-100 (middle) and ResNets trained on CIFAR-10 (right). In all settings, minimum and maximum parameter-space curvature strongly correlate with double descent, peaking at the interpolation threshold, and highlighting a nonlinear dependence on model size.}
    \label{fig:appendix:hessian}
\end{figure*}

\subsection{Parameter Space Curvature}
\label{sec:appendix:hessian}

Theorem~\ref{thm:findings:curvature} provides a bound on input-space sensitivity via mean curvature of the loss in parameter space, connecting parameter-space dynamics to input-space sensitivity under double descent. In Figure~\ref{fig:appendix:hessian}, we explore parameter-space curvature in more detail, by plotting the largest and smallest non-zero Hessian eigenvalues, together with the input-space loss Jacobian norm studied in Corollary~\ref{thm:findings:sobolev_loss}. We observe that maximum and minimum \textit{parameter space} curvature mirror \textit{input space} sensitivity, as measured by the loss Jacobian norm, peaking near the interpolation threshold, and then decreasing. Our observations support the hypothesis that overparameterization non-monotonically controls flatness of the parameter space, which in turn controls sensitivity of the model function.

\subsection{Mean Curvature, Stochastic Noise and Linear Stability}
\label{sec:appendix:covariance}

In this section, we study Theorem~\ref{thm:findings:curvature} in relation to the training dynamics in proximity of a critical point $\bm{\theta}^*$. Finally, we discuss the influence of training hyperparameters on curvature. 

First, we draw a connection between the mean loss Hessian $H$ and gradient noise covariance $C$, as defined in Corollary~\ref{thm:findings:covariance}. Then, we study reachability of the critical point $\bm{\theta}^*$ by SGD, in relation to training hyperparameters. In turn, this allows us to draw a connection between hyperparameters, their influence on mean curvature, and input-space sensitivity.

At iteration $t$, the update rule of SGD with batch size $B$, and learning rate $\eta$, is given by
\begin{equation}
\label{eq:appendix:sgd}
\bm{\theta}_{t+1} = \bm{\theta}_t - \frac{\eta}{B}\sum\limits_{b=1}^B \nabla_{\bm{\theta}}\mathcal{L}(\bm{\theta}_t, \mathbf{x}_{\xi_b}, y_{\xi_b})
\end{equation}
with random variables $\bm{\xi} = (\xi_1, \ldots, \xi_B)$ representing sampling of mini-batches. At step $t$, the stochastic noise $\bm{\epsilon}_t$ of SGD is given by 
\begin{equation}
\label{eq:appendix:noise}
\bm{\epsilon}_t = \frac{1}{B}\sum\limits_{b=1}^B\nabla_{\bm{\theta}}\mathcal{L}(\bm{\theta}_t, \mathbf{x}_{\xi_b}, y_{\xi_b}) -\mathbb{E}_{\bm{\xi}}\nabla_{\bm{\theta}}\mathcal{L}(\bm{\theta}_t)
\end{equation}
dependent both on the current parameter $\bm{\theta}_t$ and $\bm{\xi}_t$~\citep{ziyin2022strength,mori2022power}. Importantly, the noise covariance $C = \mathbb{E}_{\bm{\xi}}{[}\bm{\epsilon_t}\bm{\epsilon}_t^T{]}$ accounts for fluctuations of the training dynamics around $\bm{\theta}^*$.

For loss functions without Tikhonov regularization terms such as weight decay, the noise covariance matrix has been shown by several works to be equivalent to the mean Hessian~\citep{ziyin2022strength, mori2022power}. Hence, the bound in Theorem~\ref{thm:findings:curvature} can be expressed in terms of fluctuations of the parameter gradients around $\bm{\theta}^*$, providing Corollary~\ref{thm:findings:covariance}, restated below. A proof of the statement is given in section~\ref{sec:appendix:proofs}.

\covariance*

\begin{figure*}[t]
    \centering
    \includegraphics[width=0.32\linewidth, trim={0cm 0cm 13cm 12cm}, clip]{./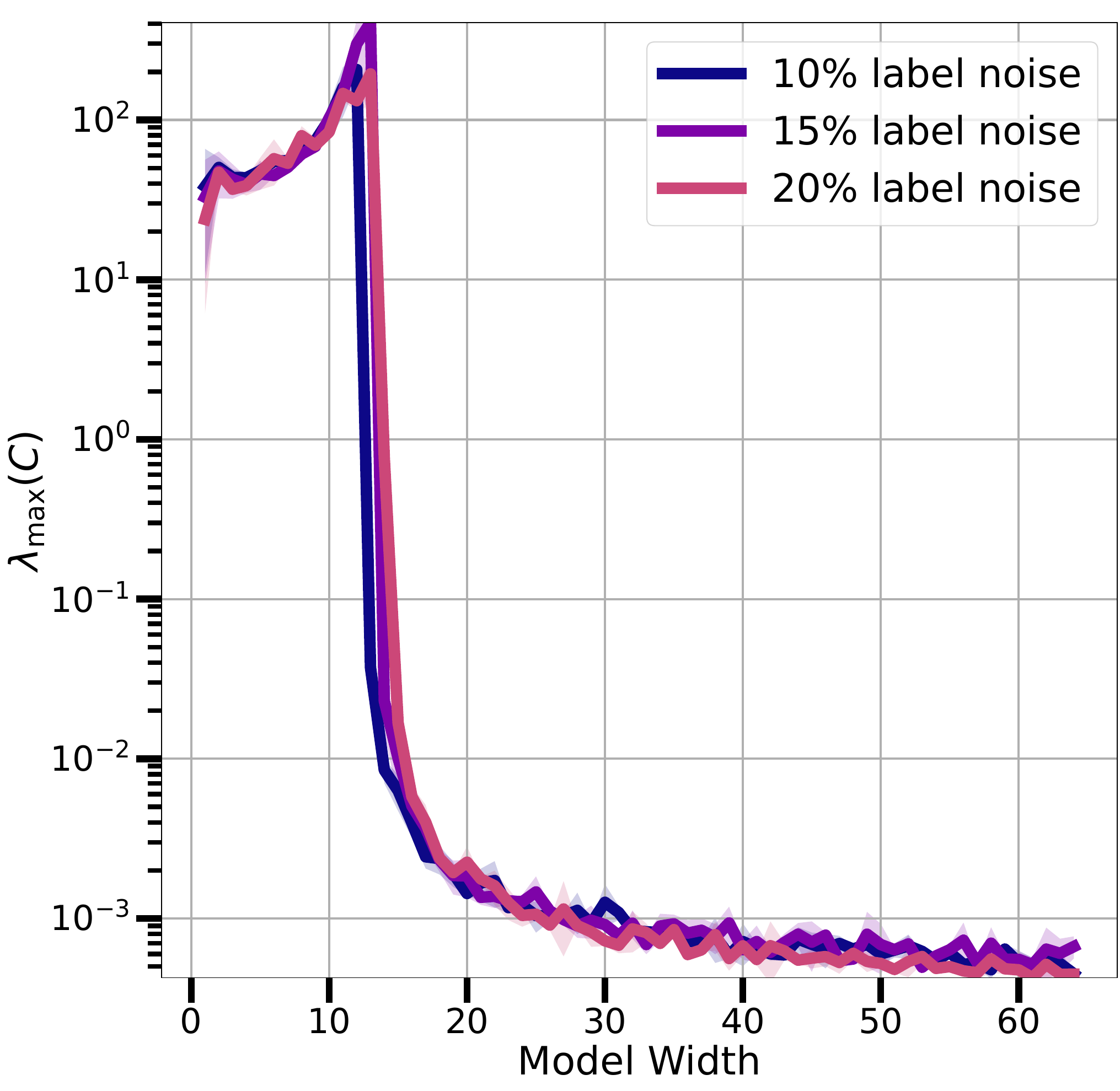}~
    \includegraphics[width=0.315\linewidth, trim={0cm 0cm 13cm 12cm}, clip]{./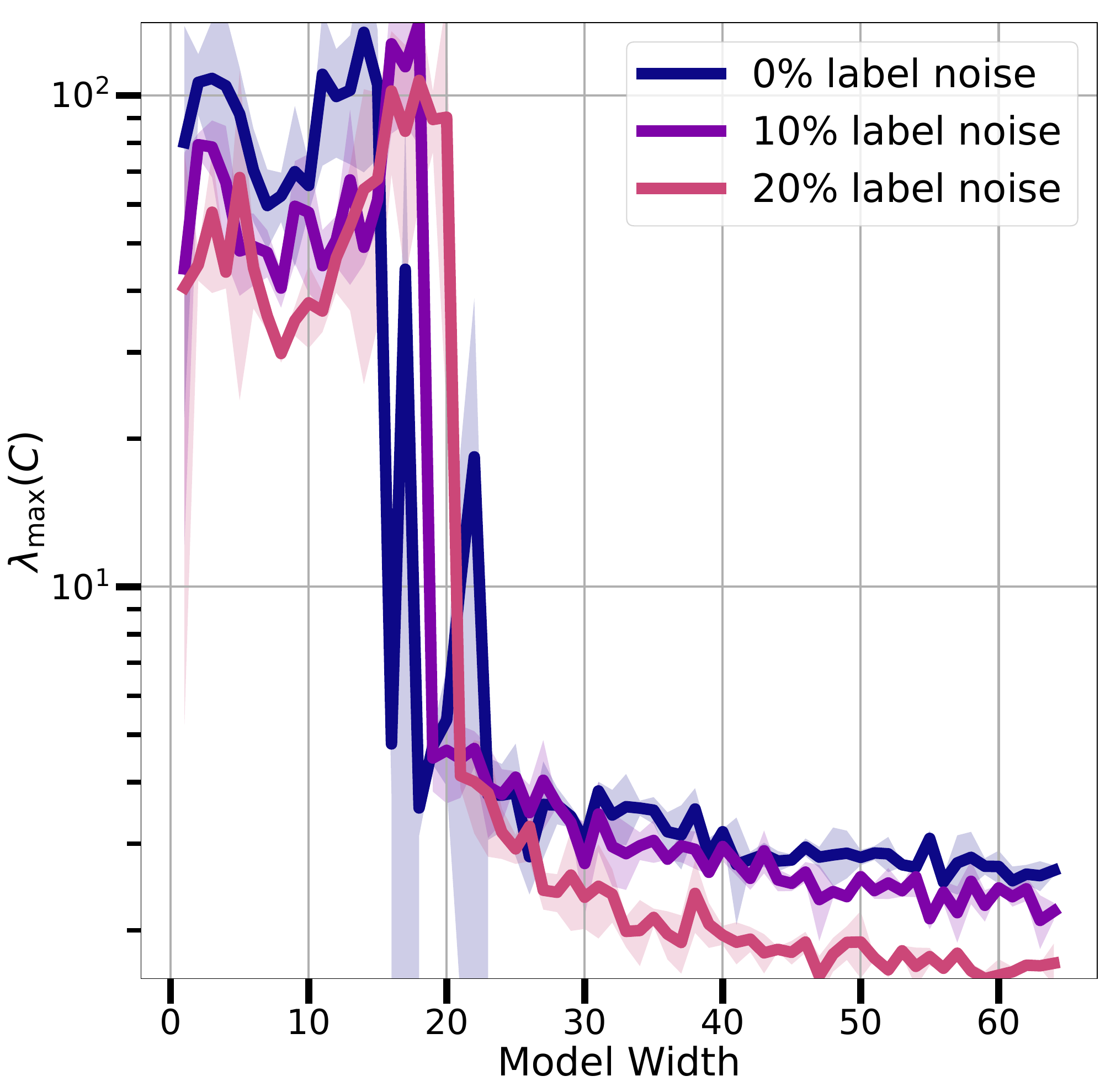}~
    \includegraphics[width=0.32\linewidth, trim={0cm 0cm 13cm 12cm}, clip]{./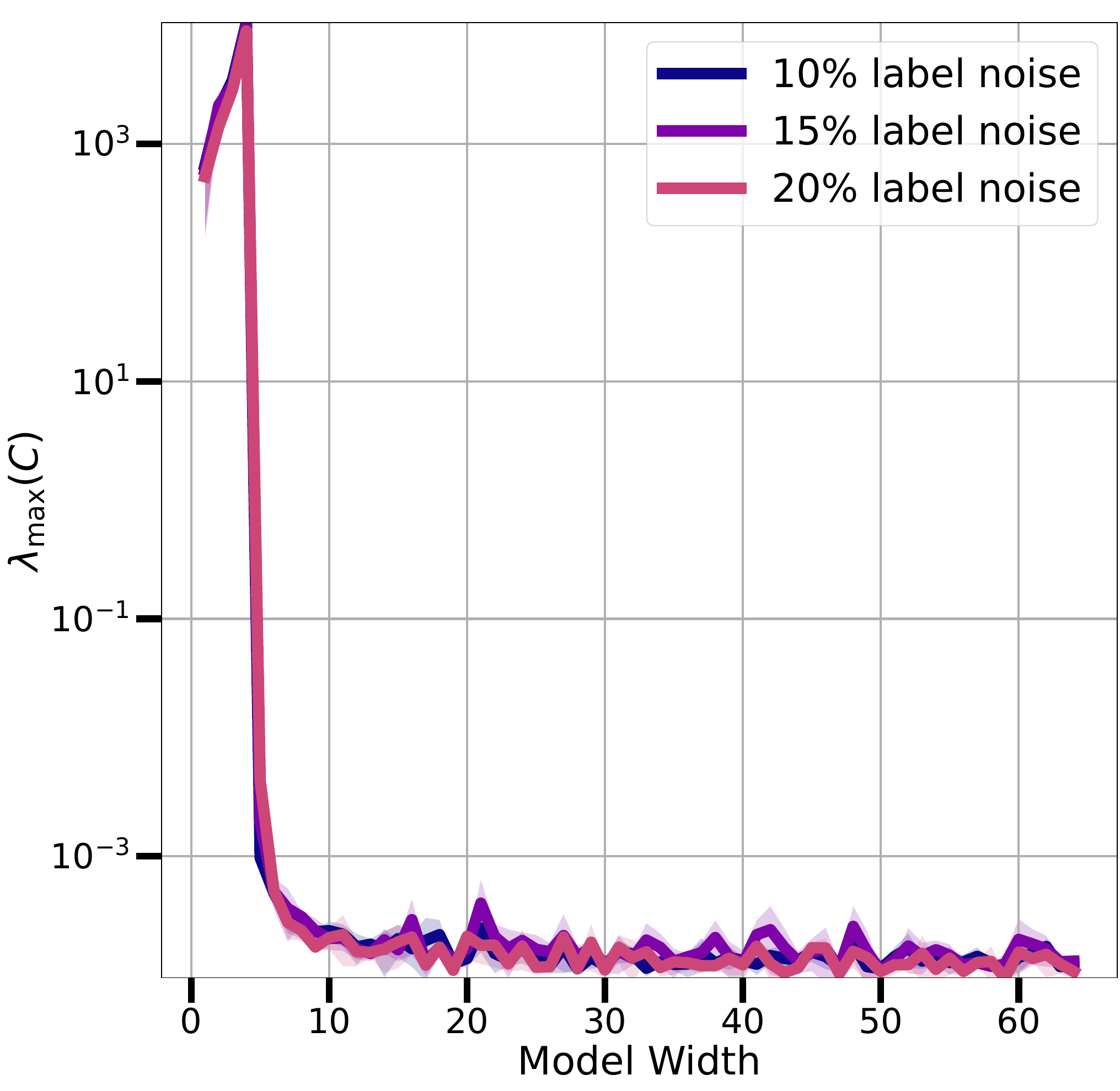}
    \caption{\textbf{Dominant noise-covariance eigenvalue.} (Top) From left to right: ConvNets trained on CIFAR-10 (left), CIFAR-100 (middle) and ResNets trained on CIFAR-10 (right). In all settings, the magnitude of stochastic noise strongly correlates with double descent, peaking at the interpolation threshold, and highlighting a nonlinear dependence on network width.}
    \label{fig:appendix:covariance}
\end{figure*}

Figure~\ref{fig:appendix:covariance} shows the largest principal component $\lambda_{\max}(C)$, as model size increases. Similarly to the mean curvature, stochastic noise strongly correlates with the empirical Lipschitz constant, decreasing considerably in the interpolation regime, and showing that overparameterization non-monotonically affects the dynamics of training.

After having established a clearer connection between training dynamics in proximity of $\bm{\theta}^*$ and our main bound, we discuss the role of training hyperparameters in affecting mean curvature.

\paragraph{Linear Stability of SGD} 

In proximity of a critical point $\bm{\theta}^*$, it is possible to derive stability conditions under which the point is attainable by SGD~\citep{wu2018sgd}. Essentially, under the quadratic approximation of Equation~\ref{eq:findings:taylor}, the dynamics of SGD are said to be linearly stable in a neighbourhood of $\bm{\theta}^*$ if 
$\exists~\gamma > 0$ for which $\mathbb{E}_{\mathcal{D}} \|\bm{\theta}_{t}\|^2 \le \gamma~\mathbb{E}_{\mathcal{D}}\|\bm{\theta}_0 \|^2$, for all $t$~\citep{hosoe2022second}. \citet{wu2018sgd} provide linear stability conditions for SGD in the following proposition.

\begin{proposition}{(\citet{wu2018sgd}, Theorem 1.)}
\label{thm:appendix:linear_stability}
    A critical point $\bm{\theta}^*$ is linearly stable for SGD with learning rate $\eta$ and batch size $B$ if 
    $$ \lambda_{\max} \Big( (I_p - \eta H)^2 + \frac{\eta^2(N - B)}{B (N -1)}\Sigma \Big) \le 1$$
    with $N = |\mathcal{D}|$ and $\Sigma = \mathbb{E}_{\mathcal{D}}(H^2) -  (\mathbb{E}_{\mathcal{D}}H)^2$.
\end{proposition}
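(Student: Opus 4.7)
The plan is to analyse the second moment of the SGD iterates in a quadratic neighbourhood of $\bm{\theta}^*$, reduce the one-step update to a linear operator on $\mathbb{R}^p$, and then match its symbol with the matrix appearing in the proposition. The strategy follows~\citet{wu2018sgd}, as the statement itself attributes.

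First, I would linearise. Writing $\tilde{\bm{\theta}}_t := \bm{\theta}_t - \bm{\theta}^*$ and using the Taylor expansion of Equation~\ref{eq:findings:taylor}, the per-sample gradient becomes $\nabla_{\bm{\theta}}\mathcal{L}(\bm{\theta}_t, \mathbf{x}_n, y_n) = H_n \tilde{\bm{\theta}}_t + o(\|\tilde{\bm{\theta}}_t\|)$, where $H_n$ denotes the per-sample Hessian at $\bm{\theta}^*$. Dropping the higher-order term (this is the ``linear'' qualifier) and substituting into Equation~\ref{eq:appendix:sgd} gives the random multiplicative recursion $\tilde{\bm{\theta}}_{t+1} = (I_p - \eta \bar{H}_B^{(t)})\,\tilde{\bm{\theta}}_t$, where $\bar{H}_B^{(t)} := \frac{1}{B}\sum_{b=1}^B H_{\xi_b^{(t)}}$ is the minibatch-averaged Hessian at step $t$, drawn independently of $\tilde{\bm{\theta}}_t$.

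Next, I would track $\rho_t := \mathbb{E}\|\tilde{\bm{\theta}}_t\|_2^2$. Conditioning on $\tilde{\bm{\theta}}_t$ and exploiting independence of the fresh minibatch, one gets $\rho_{t+1} = \mathbb{E}\bigl[\tilde{\bm{\theta}}_t^T A \tilde{\bm{\theta}}_t\bigr]$, where the positive semi-definite symmetric matrix $A := \mathbb{E}_{\bm{\xi}}\bigl[(I_p - \eta \bar{H}_B)^2\bigr]$ is the variance operator of the linearised dynamics. By the Rayleigh characterisation of $\lambda_{\max}(A)$, $\rho_{t+1} \le \lambda_{\max}(A)\,\rho_t$, so by induction $\rho_t \le \lambda_{\max}(A)^t \rho_0$. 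Hence $\lambda_{\max}(A) \le 1$ is sufficient (and, by initialising along the top eigenvector of $A$, also necessary) for the defining inequality of linear stability.

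The crux is evaluating $A$ in closed form. Expand $A = I_p - 2\eta H + \eta^2\,\mathbb{E}[\bar{H}_B^2]$, so the work reduces to computing $\mathbb{E}[\bar{H}_B^2] = \frac{1}{B^2}\sum_{a,b=1}^B \mathbb{E}[H_{\xi_a}H_{\xi_b}]$. Splitting into the $a=b$ and $a\ne b$ contributions and using that the minibatch is sampled uniformly without replacement from $N$ points, one obtains $\mathbb{E}[H_{\xi_a}^2] = \mathbb{E}_{\mathcal{D}}[H_i^2]$ and $\mathbb{E}_{a\ne b}[H_{\xi_a}H_{\xi_b}] = \bigl(N^2 H^2 - N\mathbb{E}_{\mathcal{D}}[H_i^2]\bigr)/\bigl(N(N-1)\bigr)$, the latter following from the identity $\sum_{i,j} H_i H_j = \bigl(\sum_i H_i\bigr)^2$ (which needs no commutativity). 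Collecting the $B$ diagonal and $B(B-1)$ off-diagonal terms and invoking $\Sigma = \mathbb{E}_{\mathcal{D}}[H_i^2] - H^2$ simplifies to $\mathbb{E}[\bar{H}_B^2] = H^2 + \frac{N-B}{B(N-1)}\,\Sigma$, whence $A = (I_p - \eta H)^2 + \frac{\eta^2(N-B)}{B(N-1)}\,\Sigma$, which matches the statement.

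The main obstacle is the combinatorial bookkeeping for sampling without replacement: the finite-population correction factor $(N-B)/[B(N-1)]$ has to come out exactly, since it is precisely what kills the noise term in the full-batch case $B=N$ (reducing $A$ to $(I-\eta H)^2$, i.e.\ deterministic gradient descent). A naive calculation treating the $\xi_b$ as i.i.d.\ with replacement would yield $1/B$ in place of $(N-B)/[B(N-1)]$ and systematically overestimate the noise contribution. Non-commutativity of the per-sample Hessians $H_i$ is only a superficial worry, since after taking expectations only symmetric second-moment matrices $H^2$ and $\mathbb{E}_{\mathcal{D}}[H_i^2]$ appear.
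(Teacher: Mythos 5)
The paper does not reprove this proposition; it cites it directly from \citet{wu2018sgd}, so there is no in-paper proof to compare against. Your reconstruction is correct and is essentially the argument given in \citet{wu2018sgd}: linearise the per-sample gradient to $H_n\tilde{\bm{\theta}}_t$, reduce the one-step dynamics to the random contraction $\tilde{\bm{\theta}}_{t+1} = (I_p - \eta\bar{H}_B)\tilde{\bm{\theta}}_t$, control $\mathbb{E}\|\tilde{\bm{\theta}}_t\|^2$ by the top eigenvalue of $A = \mathbb{E}_{\bm{\xi}}[(I_p - \eta\bar{H}_B)^2]$, and evaluate $A$ via the without-replacement second moment of $\bar{H}_B$; your finite-population bookkeeping (the $B$ diagonal versus $B(B-1)$ off-diagonal terms, and $\sum_{i\ne j}H_iH_j = (\sum_i H_i)^2 - \sum_i H_i^2$ without needing commutativity) gives exactly $\mathbb{E}[\bar{H}_B^2] = H^2 + \tfrac{N-B}{B(N-1)}\Sigma$, hence $A = (I_p - \eta H)^2 + \tfrac{\eta^2(N-B)}{B(N-1)}\Sigma$, matching the statement. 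One small imprecision: your parenthetical claim that the condition is \emph{also necessary} by ``initialising along the top eigenvector of $A$'' does not immediately follow, since $\rho_{t+1}\le\lambda_{\max}(A)\rho_t$ is only an upper bound and the iterate does not remain aligned with that eigenvector after one stochastic step; however, the proposition only asserts sufficiency (``if''), so this does not affect the result.
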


Additionally, \citet{wu2018sgd}, provide a necessary condition for Proposition~\ref{thm:appendix:linear_stability} to hold, by requiring $\lambda_{\max}(I_p - \eta H) \le 1$ and $\lambda_{\max}(\frac{\eta^2(N - B)}{B (N -1)}\Sigma) \le 1$ to hold separately, providing the conditions 
\begin{equation}
\label{eq:appendix:stability_necessary}
    \begin{cases}
        0 \le &\lambda_{\max}(H) \le \frac{2}{\eta}\\
        0 \le &\lambda_{\max}(\Sigma) \le \frac{1}{\eta} \sqrt{\frac{B(N-1)}{n -B}}
    \end{cases}
\end{equation}

The term $\lambda_{\max}(\Sigma)$, called non-uniformity, measures the mean squared deviation of curvature under sampling of mini-batches from $\mathcal{D}$. 

Thus, the choice of $\eta$ and $B$ affects reachability of critical points $\bm{\theta}^*$ under the dynamics of SGD. Particularly, the conditions in Equation~\ref{eq:appendix:stability_necessary} imply that large learning rates $\eta$ and small batch sizes $B$ will select critical points respectively with low curvature $\lambda_{\max}(H)$ and low non-uniformity $\lambda_{\max}(\Sigma)$. Hence, $\eta$ and $B$ control parameter space curvature around critical points attainable by the training dynamics and, via Theorem~\ref{thm:findings:curvature}, input-sensitivity.

In the next sections, we extend the empirical findings of section~\ref{sec:implications}.

\begin{figure*}[t]
    \centering
    \includegraphics[width=0.3\linewidth, trim={0cm 0cm 12cm 12cm}, clip]{./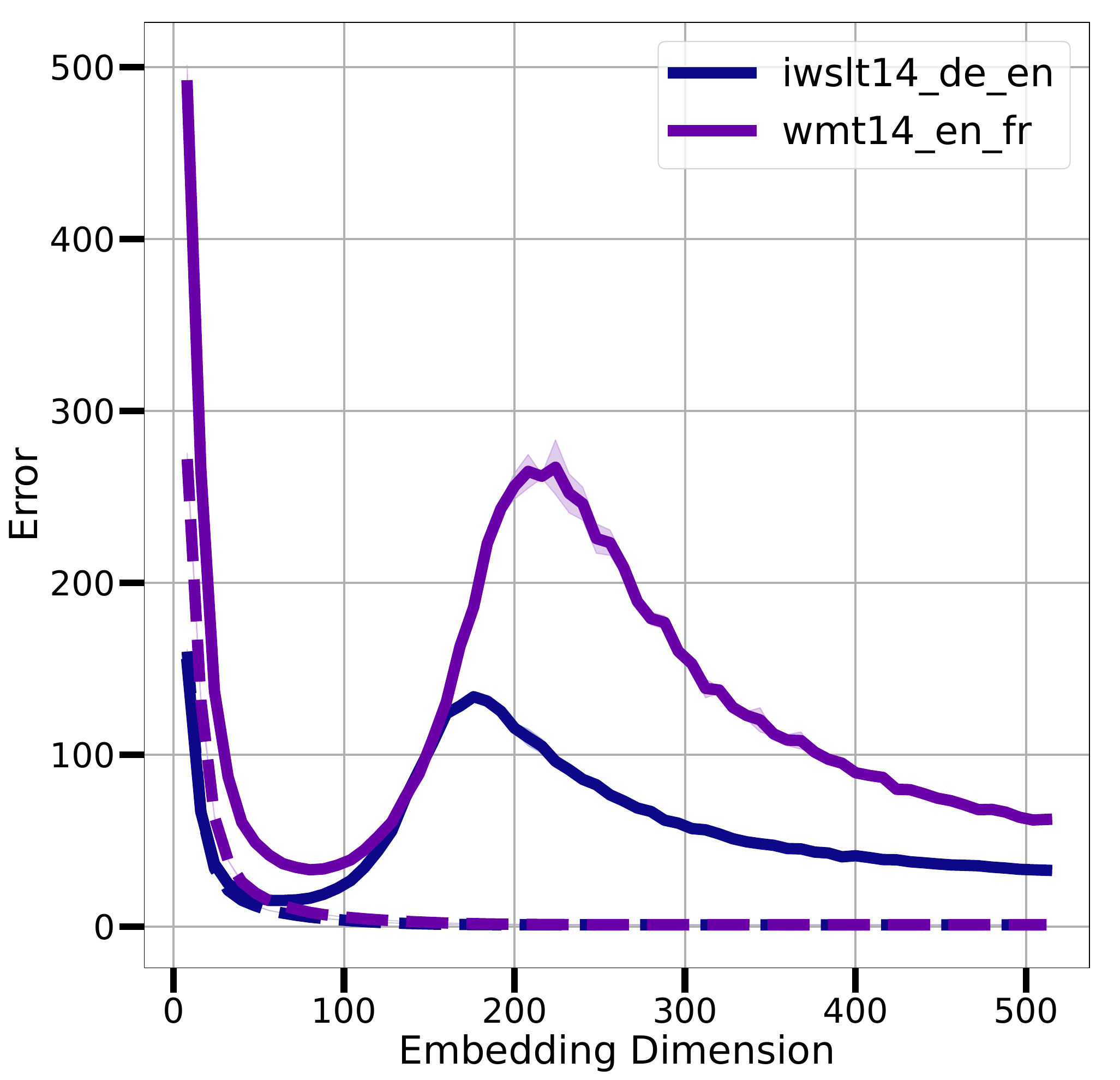}~
    \includegraphics[width=0.3\linewidth, trim={0cm 0cm 13cm 12cm}, clip]{./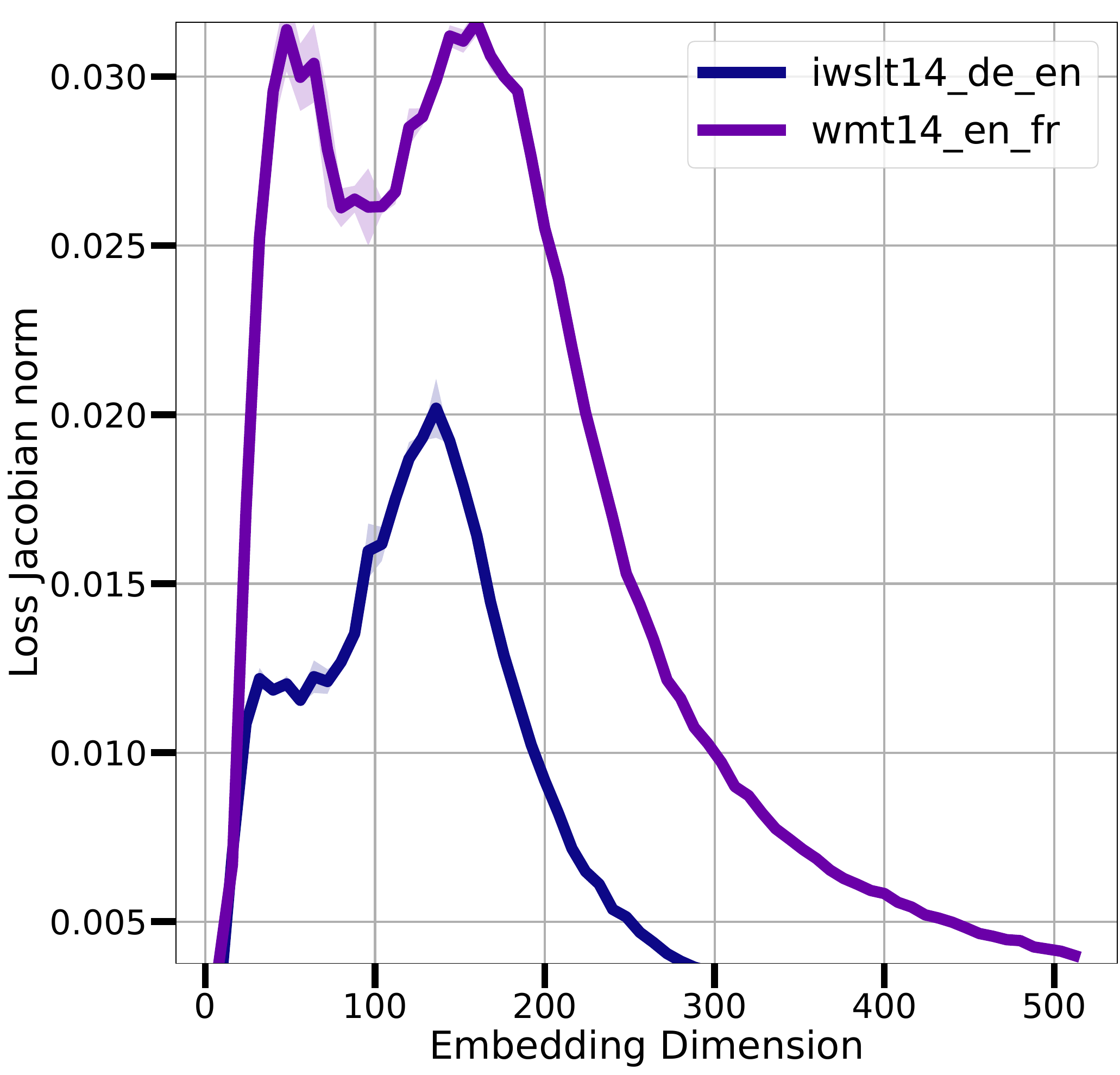}~
    \caption{\textbf{Input-space smoothness of Transformers}, as the embedding dimension and model width vary. Train error (dashed) and double descent for the test error for Transformers trained machine translation tasks (left) and input-space loss Jacobian norm (right).}
    \label{fig:appendix:transformers}
\end{figure*}

\subsection{Beyond Piece-wise Linear Networks}
\label{sec:appendix:transformers}

In this section, we extend our main finding beyond vision architectures and focus on natural language processing tasks. Specifically, we consider transformer architectures and train $8$-layer multi-head attention transformers~\citep{vaswani2017attention} on machine translation tasks, controlling the embedding dimension, as well as the width of hidden fully connected layers $\omega = 4h$. We report the test error in Figure~\ref{fig:appendix:transformers} (left). We compute Equation~\ref{eq:findings:operator} on $\nabla_\mathbf{x}\mathcal{L}$, where $\mathcal{L}$ is the per-token perplexity. We note that Equation~\ref{eq:findings:operator} can still be applied to the Jacobian $\nabla_\mathbf{x}\mathcal{L}$ -- which linearly approximates $\mathcal{L}$ at each point $\mathbf{x}$ -- and the expected operator norm should be intended as the Sobolev seminorm $\|\mathcal{L}\|_{\mathcal{D},1,2}$ of $\mathcal{L}$ on $\mathcal{D}$. Figure~\ref{fig:appendix:transformers} (right panel) extends our main finding, showing that $\nabla_{\mathbf{x}}\mathcal{L}$ depends non-monotonically on model size, peaking near the interpolation threshold, and extending our main result beyond vision architectures.

\subsection{Empirical Lipschitz Throughout Training}
\label{sec:appendix:epochwise}

In this section, we complement the results shown for selected model widths in Figure~\ref{fig:implications:epochwise}, by plotting the development of the empirical Lipschitz constant throughout training for all model sizes, and discuss its relationship to the test error. Extending our finding to additional model widths, Figure~\ref{fig:appendix:epochwise} shows that small models maintain a small empirical Lipschitz constant throughout training, while models near the interpolation threshold accumulate a large empirical Lipschitz constant after prolonged training. Finally, large models maintain a relatively low empirical Lipschitz constant, plateauing earlier as model size increases past the interpolating threshold.

\begin{figure}[t]
    \centering
    \includegraphics[width=0.3\linewidth]{./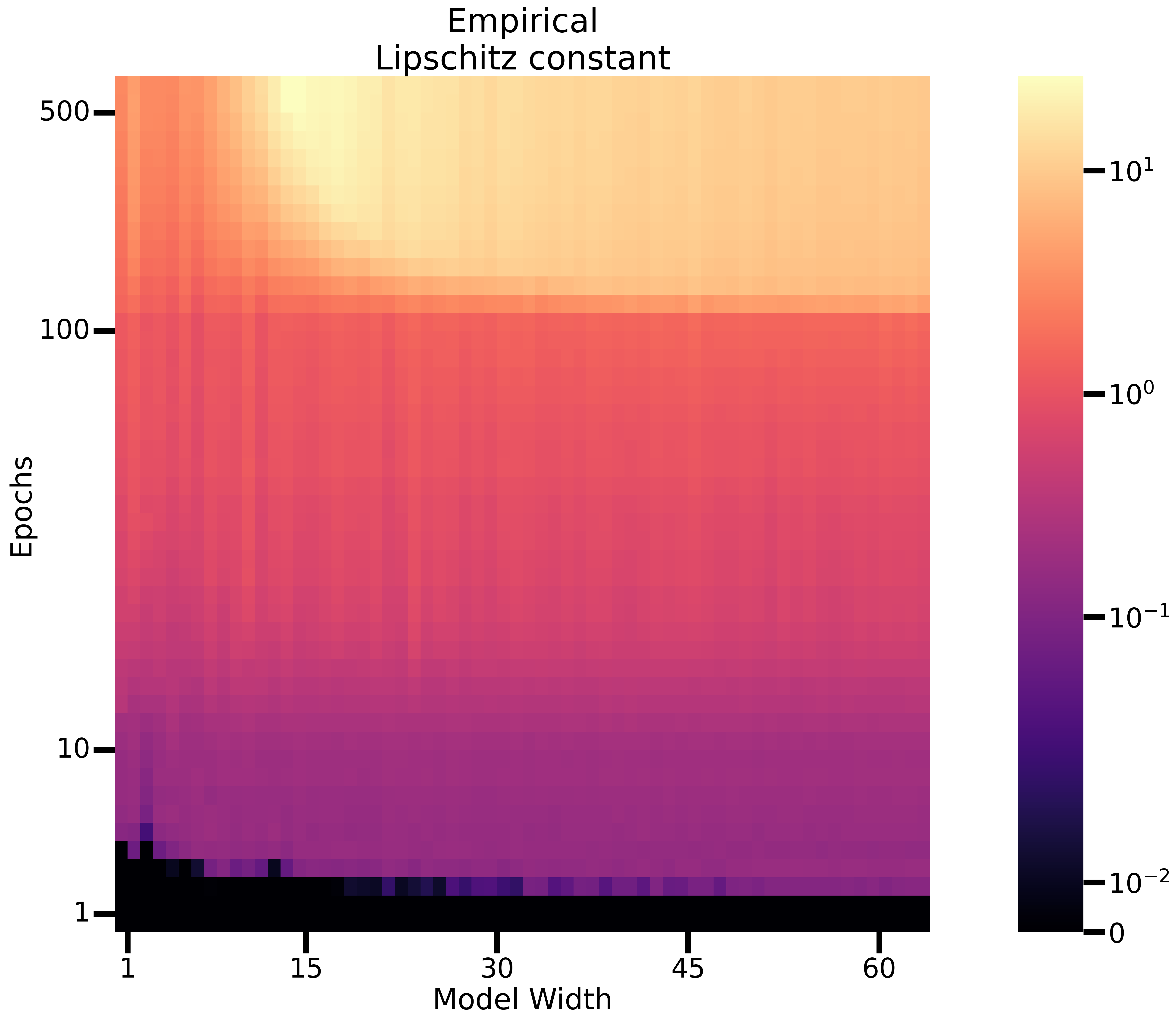}~
    \includegraphics[width=0.3\linewidth]{./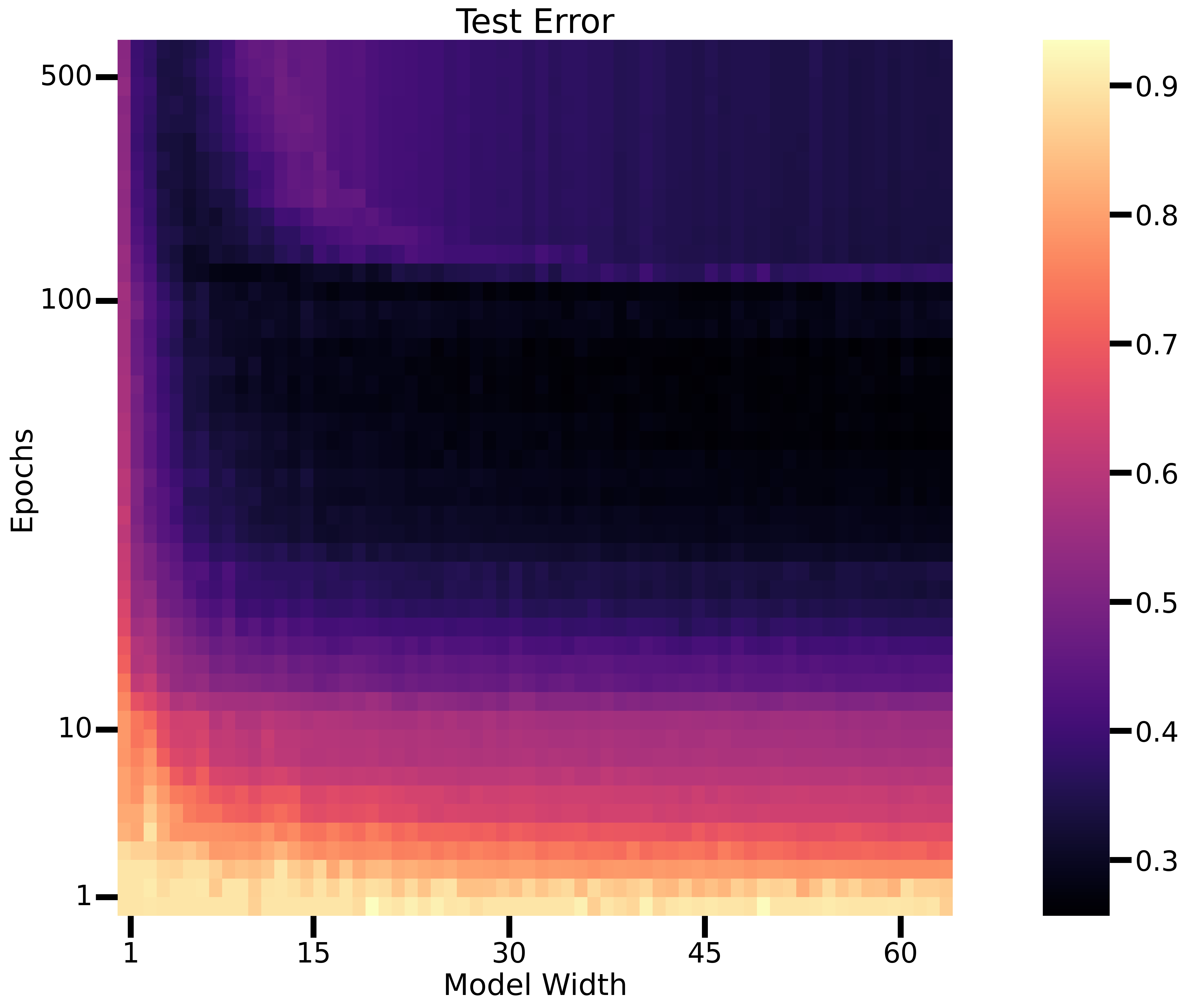} ~
    \includegraphics[width=0.3\linewidth]{./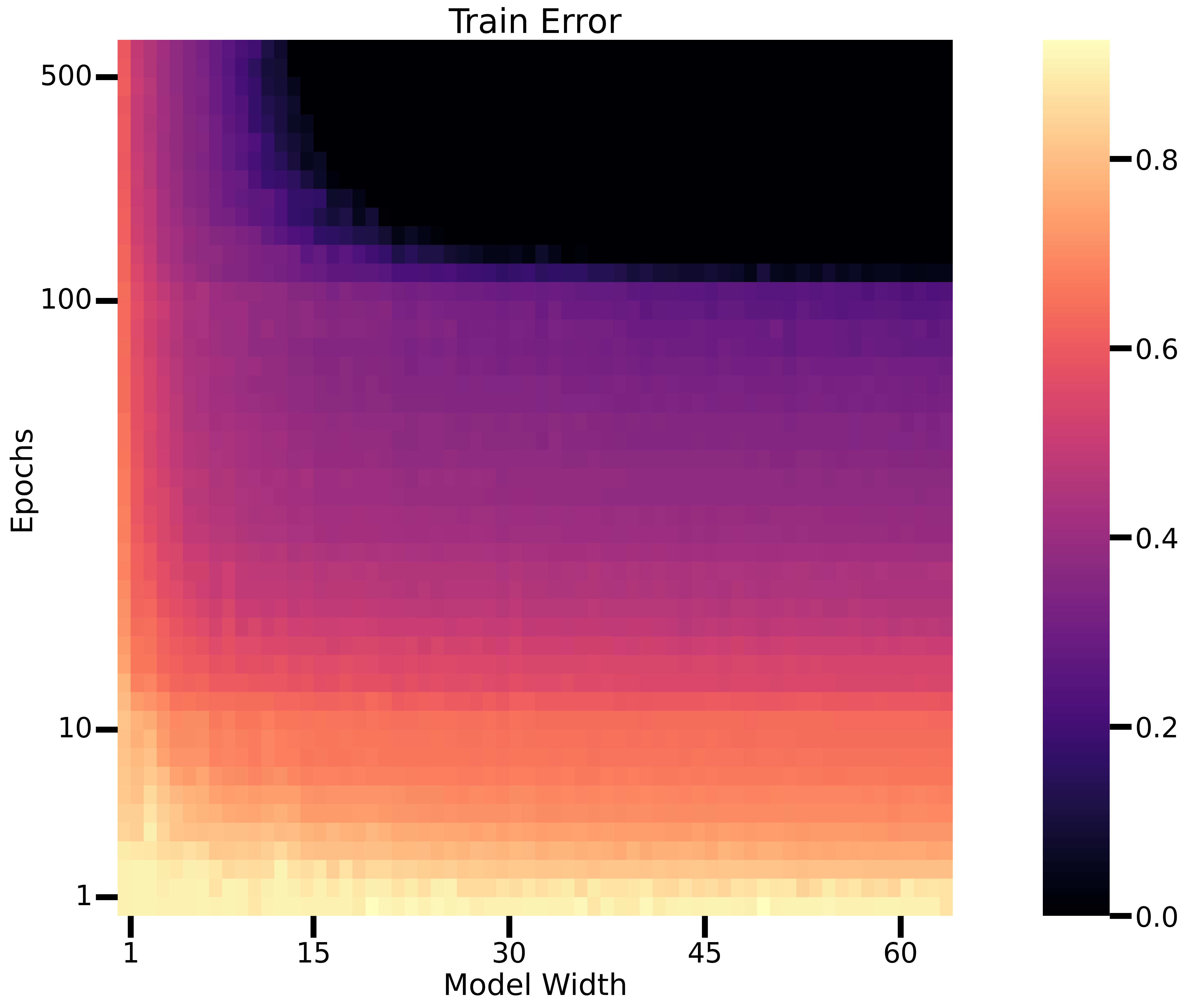} \\
    \includegraphics[width=0.3\linewidth]{./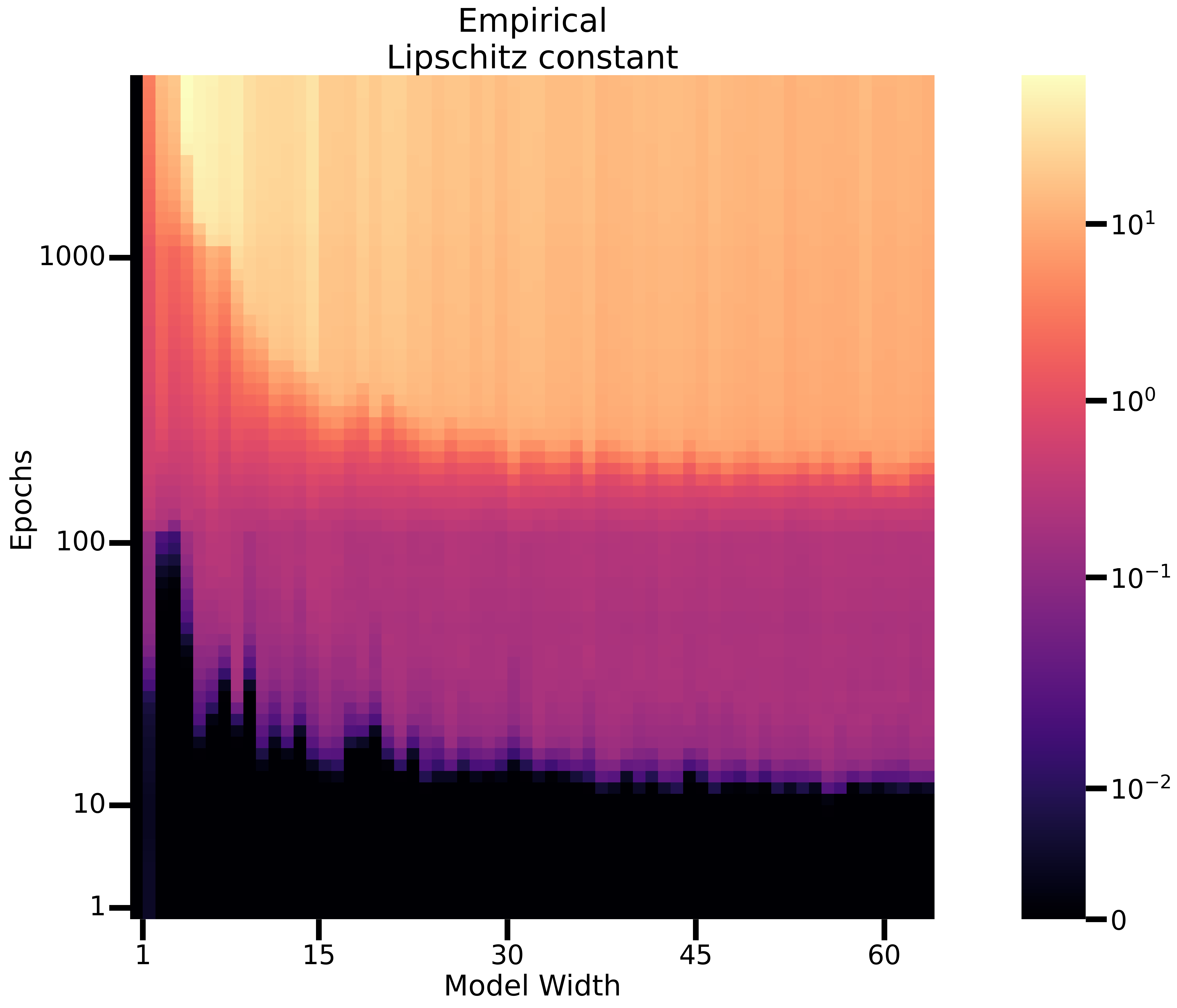}~
    \includegraphics[width=0.3\linewidth]{./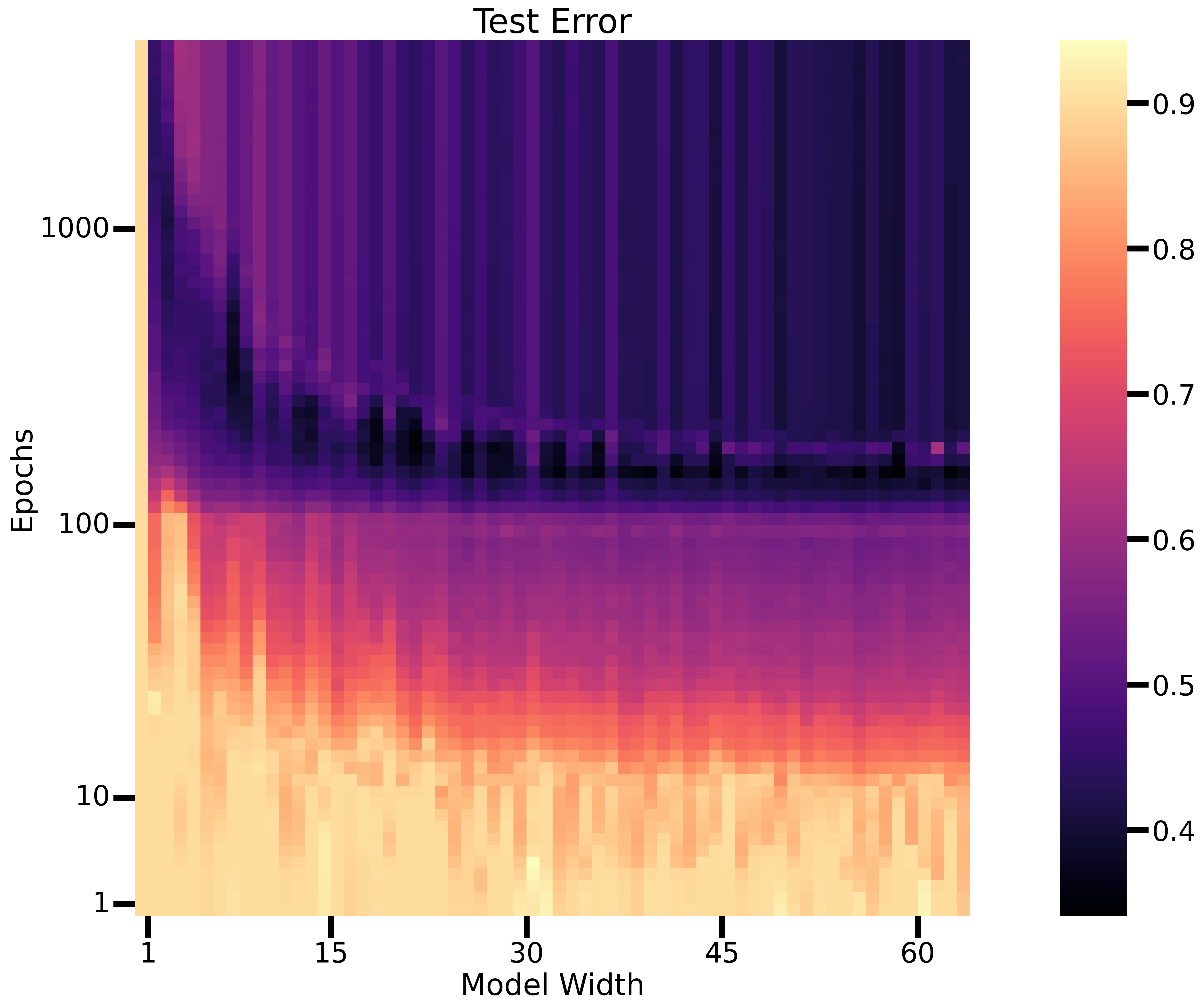} ~
    \includegraphics[width=0.3\linewidth]{./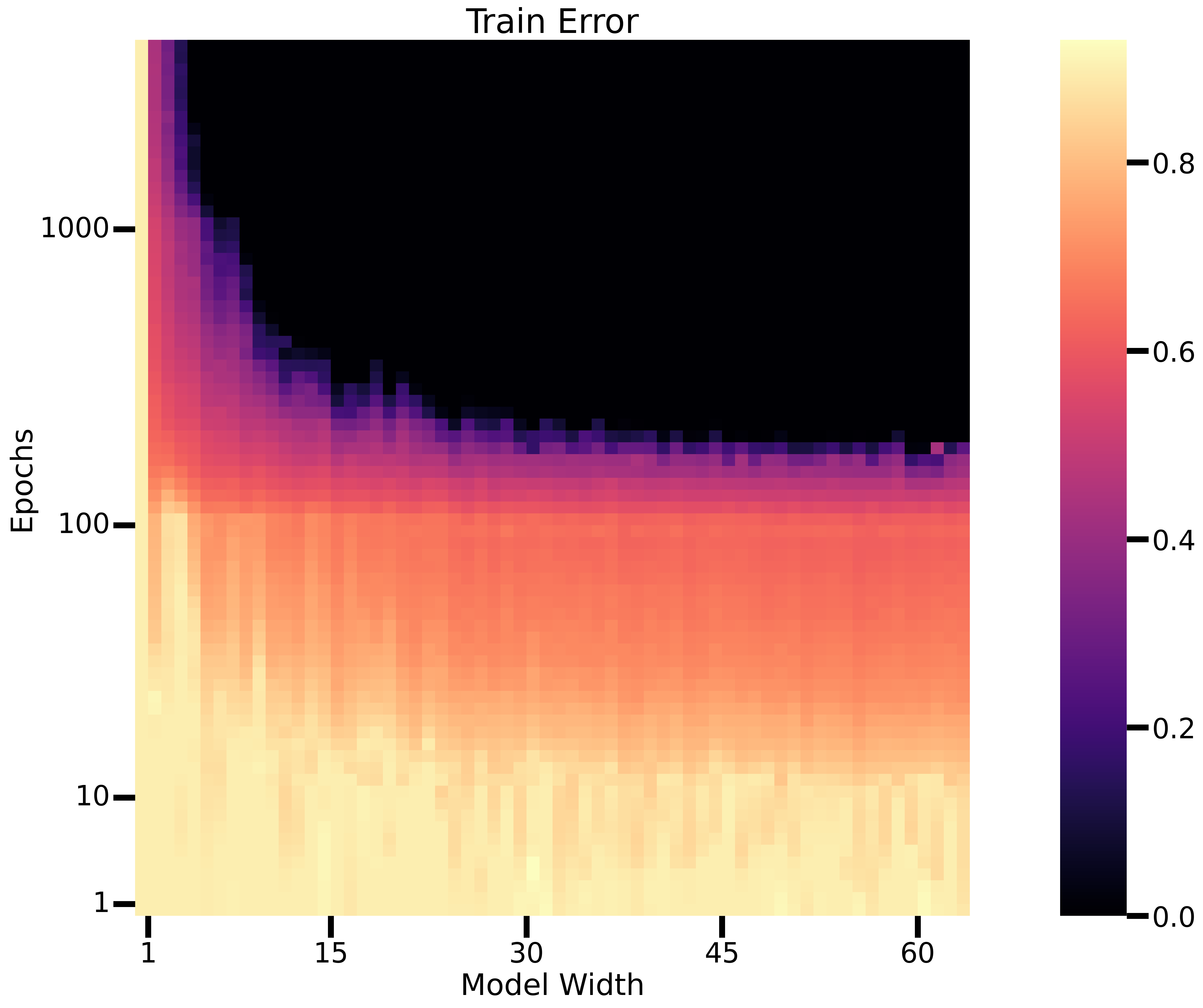}
    \caption{(Top left) \textbf{Empirical Lipschitz constant} (color) \textbf{as a function of training epochs} ($y-axis$) and model size ($x$-axis). (Top middle) \textbf{Test error} for ConvNets on CIFAR-10 with $20\%$ noisy training labels. (Top right) \textbf{Test error} for ConvNets on CIFAR-10 with $20\%$ noisy training labels. (Bottom) Analogous plots for ResNet18s trained on the same dataset.}
    \label{fig:appendix:epochwise}
\end{figure}

At the same time, with reference to the line plots in Figure~\ref{fig:implications:epochwise}, for all models the initial increase in empirical Lipschitz constant -- occurring during ``early'' training (up until epoch $100$ for ConvNets and $400$ for ResNets) -- is matched by a rapid decrease in test error. During mid-training (epoch $e$ $100 < e < 200$ for ConvNets, and $400 < e < 500$ for ResNets) the rate of increase of the Lipschitz constant changes according to model size. Small models plateau in their empirical Lipschitz constant, train and test error, and remain stable thereafter. Models near the interpolation threshold start slowly increasing the empirical Lipschitz constant as they slowly interpolate the training set, with corresponding increase in test error, showcasing the ``malign overfitting'' phenomenon~\citep{bartlett2020benign}, Strikingly, large models quickly interpolate the training set, causing relative increase in the empirical Lipschitz constant, inversely correlating with model size. Throughout this phase of ``accelerated interpolation'' the test error undergoes epoch-wise double descent~\citep{nakkiran2019deep}. Crucially, while for all models the empirical Lipschitz constant is monotonically increasing in epochs, the \textit{rate} at which the empirical Lipschitz constant grows correlates with \textit{epoch-wise} double descent for the test error. This observation suggests that tracking second order information of $\mathbf{f}_{\bm{\theta}}$ in input space may reveal important properties of interpolation. Indeed, input-space Hessian based measures~\citep{moosavi2019robustness,lejeune2019implicit} have been observed to correlate with model performance for fixed-sized models. Our observations suggest that input-space curvature may bear significance for understanding epoch-wise double descent. We leave this exciting direction to future work. 

\subsection{Validation of our Bound in the Interpolating Regime}
\label{sec:appendix:correlation}

\begin{figure*}[t]
    \centering
    \includegraphics[width=0.32\linewidth, trim={0cm 0cm 0cm 0cm}, clip]{./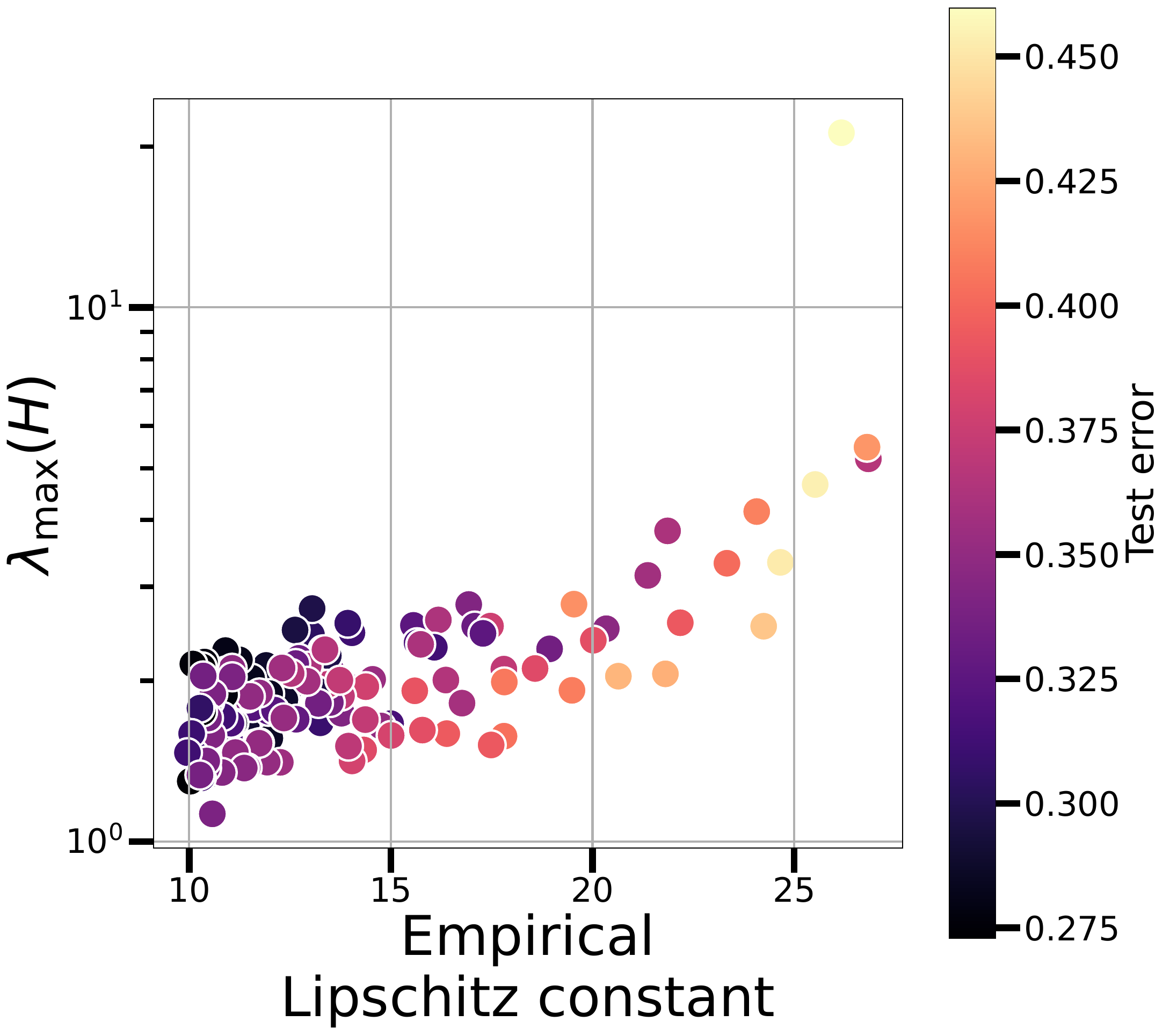}~
    \includegraphics[width=0.32\linewidth, trim={0cm 0cm 0cm 0cm}, clip]{./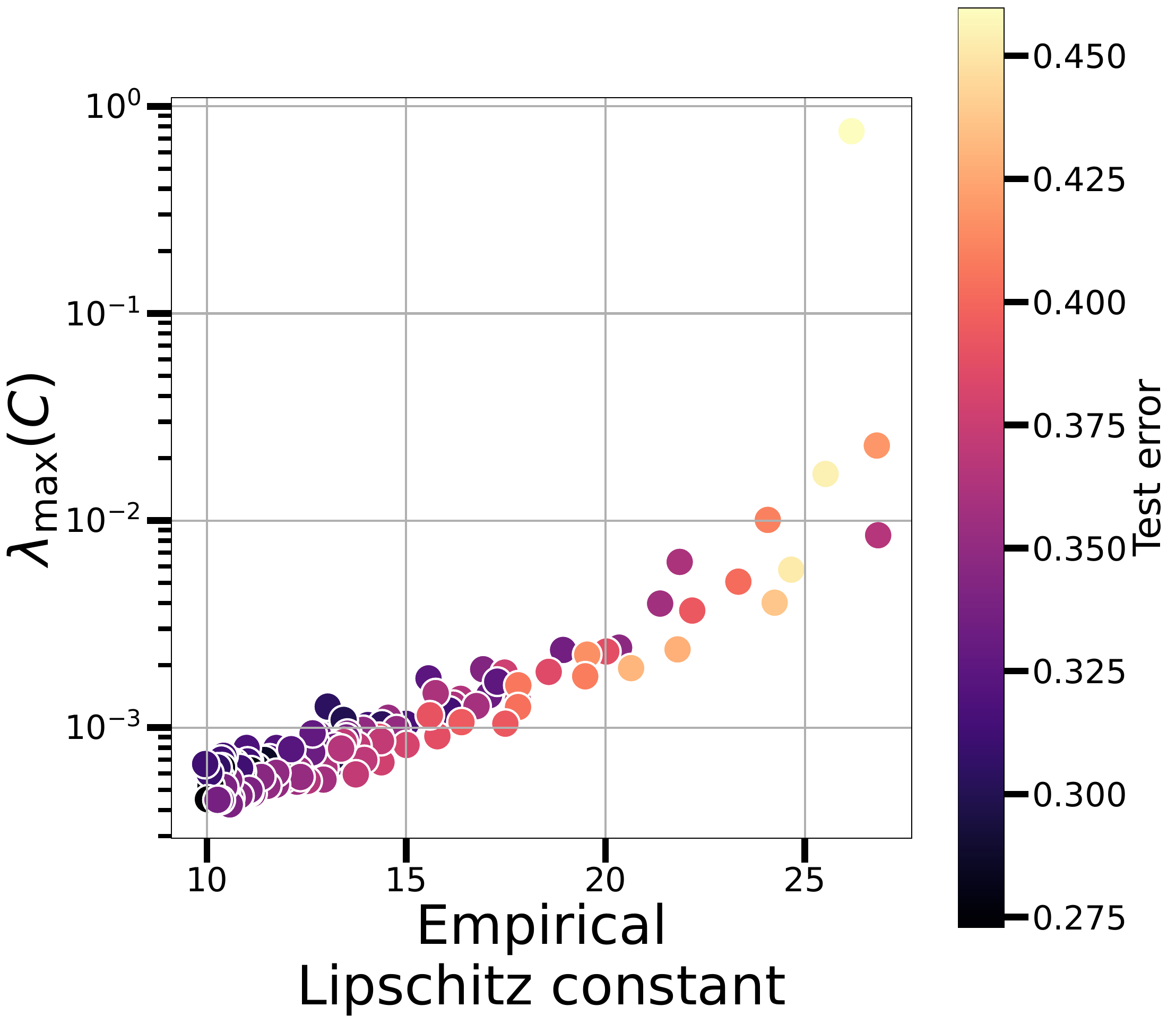}~
    \includegraphics[width=0.32\linewidth, trim={0cm 0cm 0cm 0cm}, clip]{./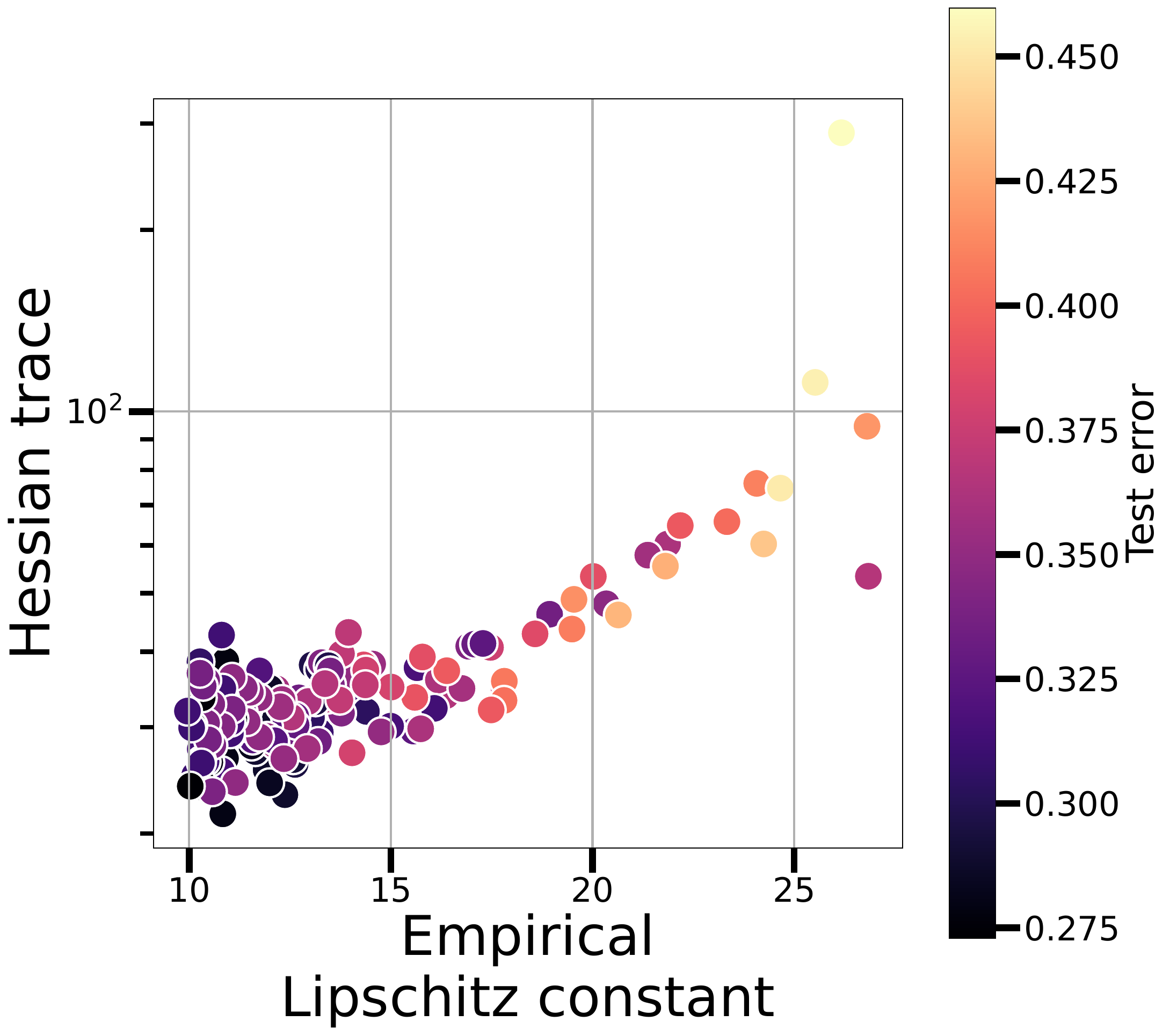}~ \\
    \includegraphics[width=0.32\linewidth, trim={0cm 0cm 0cm 0cm}, clip]{./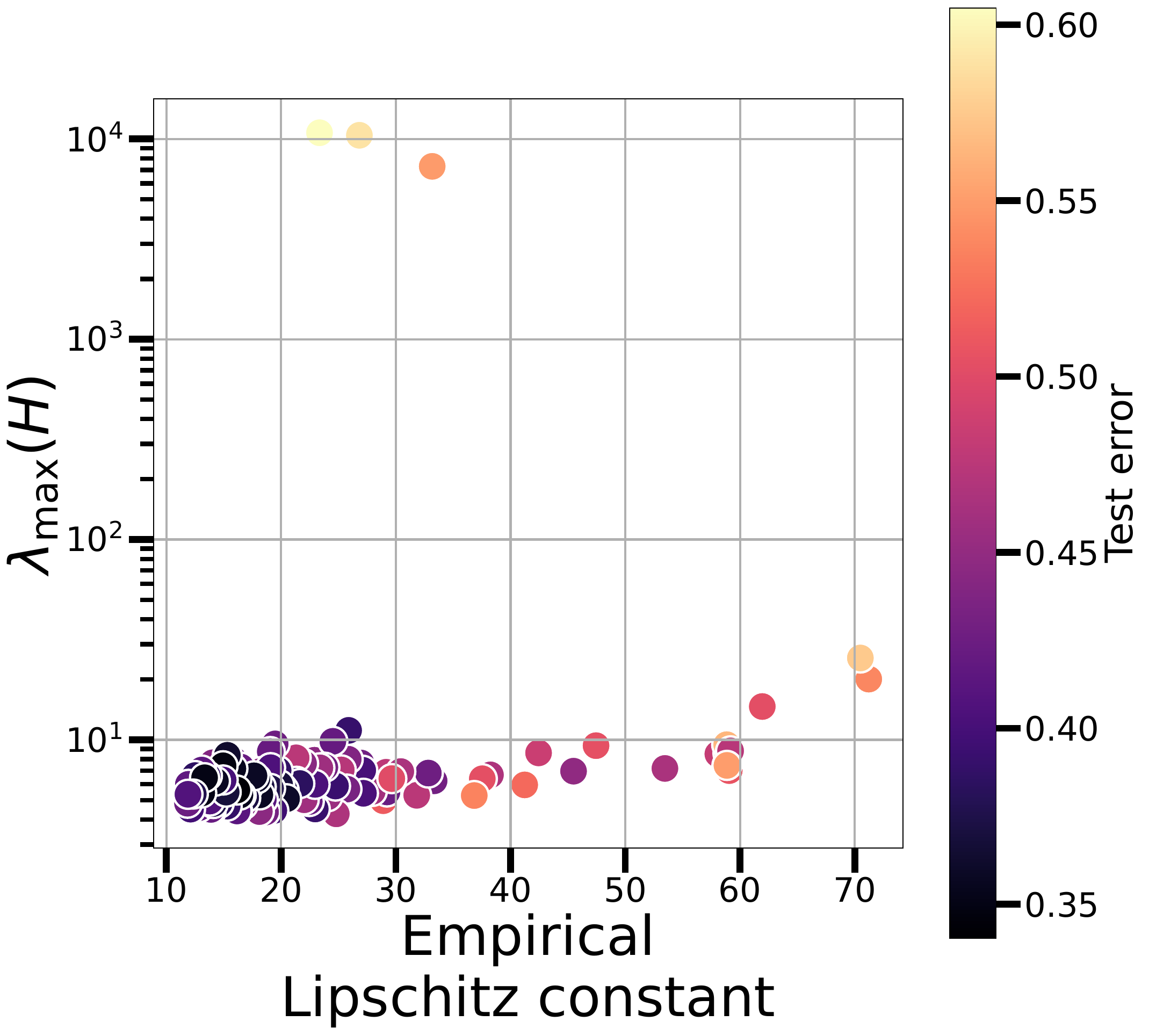}~
    \includegraphics[width=0.32\linewidth, trim={0cm 0cm 0cm 0cm}, clip]{./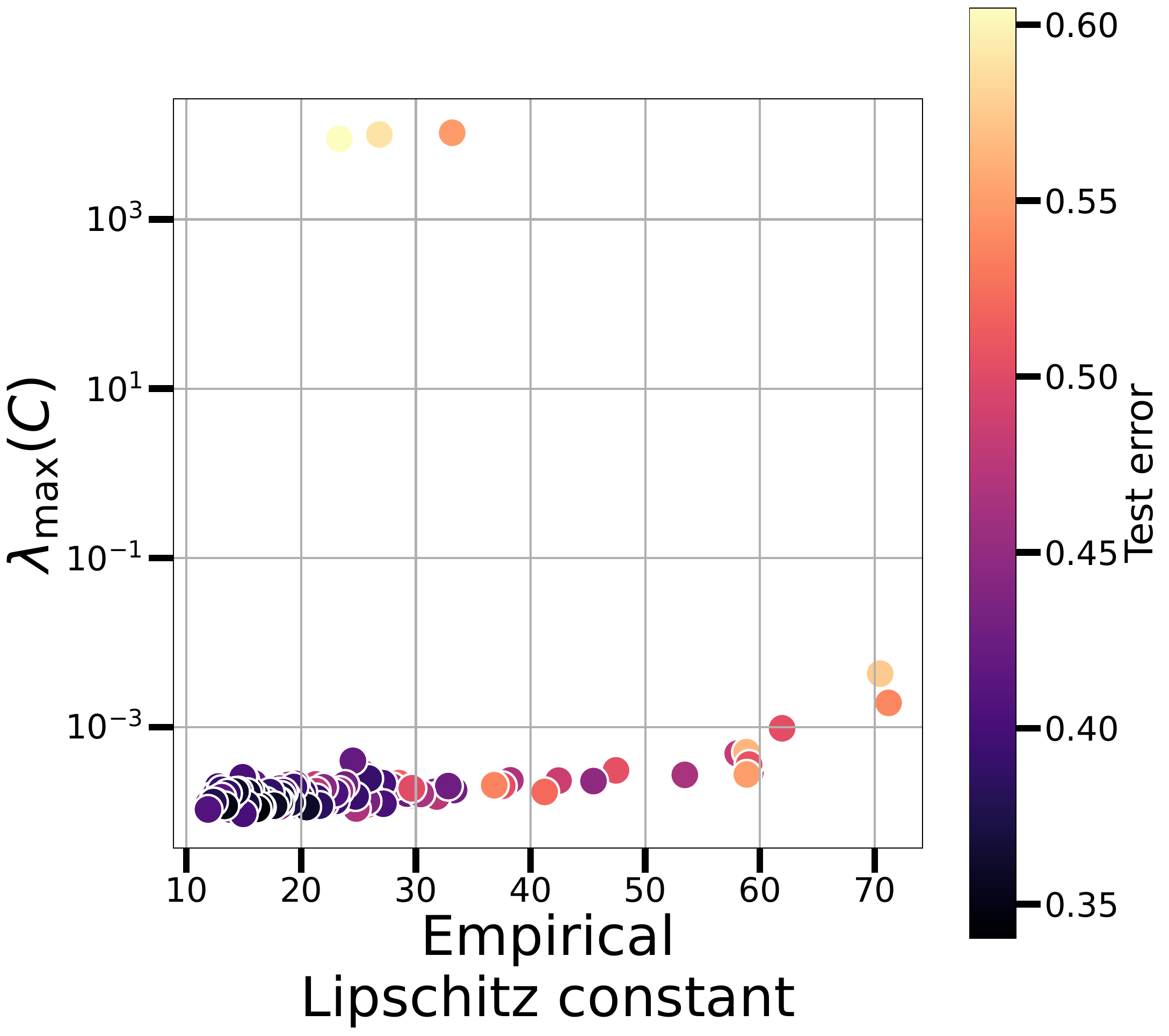}~
    \includegraphics[width=0.32\linewidth, trim={0cm 0cm 0cm 0cm}, clip]{./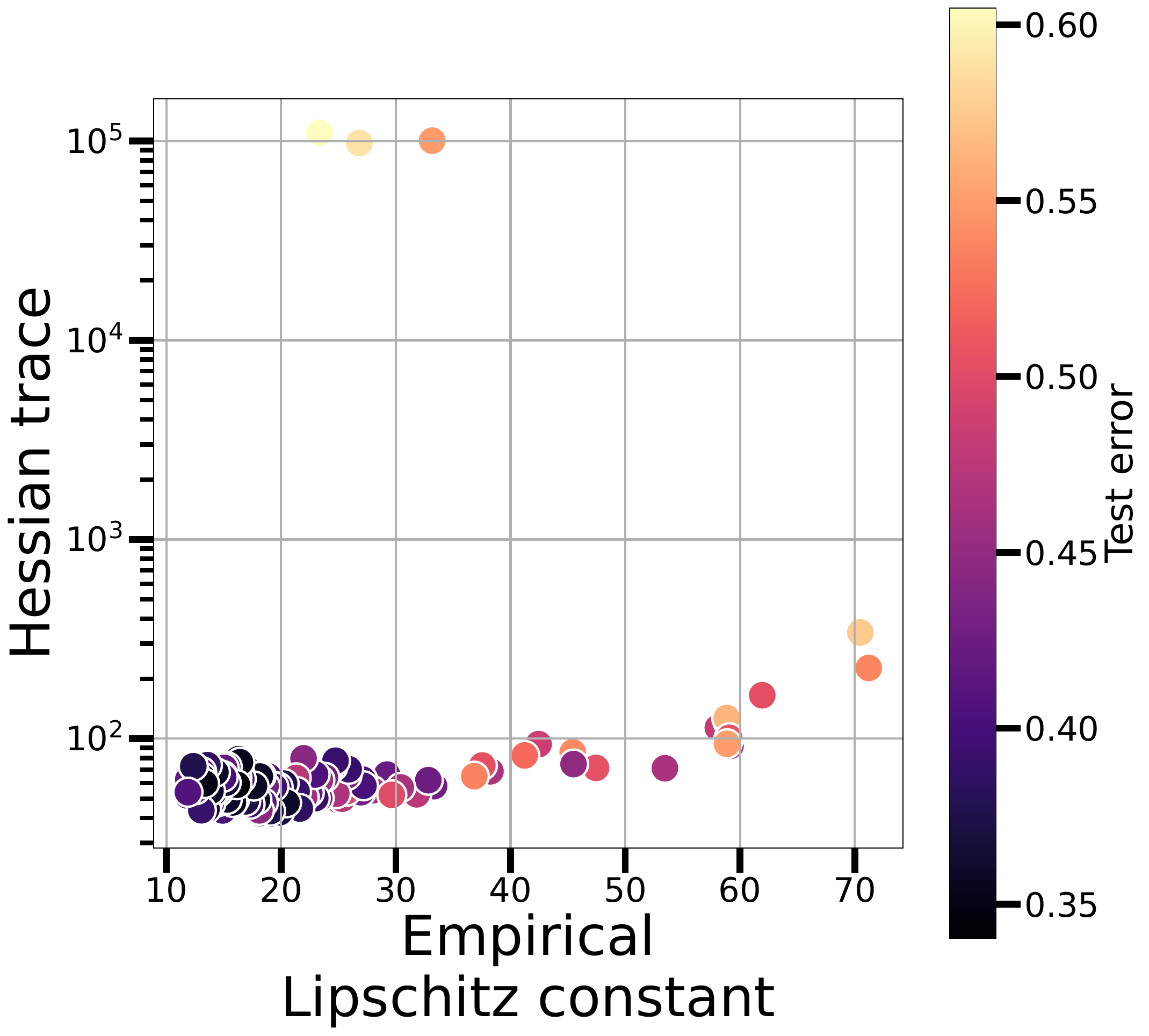}
    \caption{\textbf{Correlation between empirical Lipschitz constant and parameter-space curvature} in the interpolating regime. From left to right: maximum curvature (left), dominant noise-covariance eigenvalue (middle) and mean curvature (right), respectively for ConvNets trained on CIFAR-10 (top), and ResNets trained on CIFAR-10 (bottom). In all settings, mean and maximum parameter-space curvature strongly correlate with the empirical Lipschitz constant in the interpolating regime. Furthemore, models with higher empirical Lipschitz present higher mean and maximum curvatures, and incur in higher test error. All values are reported in $\log$-$y$ scale to better separate models.}
    \label{fig:findings:correlation}
\end{figure*}

Figure~\ref{fig:findings:correlation} summarizes our main findings, showing a strong correlation between the empirical Lipschitz constant and maximum parameter-space curvature of the loss landscape, mean parameter-space curvature, as well as the first principal component of gradient noise, with networks with large empirical Lipschitz constant incurring in high test error.

\section{Generating Random Validation Data}
\label{sec:appendix:random}

To generate random validation data for the experiments reported in Figure~\ref{fig:implications:ablation}, we define several distributions over RGB pixels, and sample each pixel independently. We consider the following distributions:
\begin{itemize}
    \item $\mathbf{x}_n \sim \mathcal{U}\big({[}\bm{\mu}_{\text{CIFAR}} - \bm{\sigma}_{\text{CIFAR}}, \bm{\mu}_{\text{CIFAR}} + \bm{\sigma}_{\text{CIFAR}}{]}\big)$ pixel-wise
    \item $\mathbf{x}_n \sim \mathcal{N}\big({[}\bm{\mu}_{\text{CIFAR}}, \mathcal{I}_3 \bm{\sigma}_{\text{CIFAR}}{]}\big)$ pixel-wise
    \item $\mathbf{x}_n \sim \mathcal{U}\big(S_{d-1}\big)$ (pixel-wise) hypersphere
    \item $\mathbf{x}_n + \bm{\epsilon}_n,$ with $\bm{\epsilon}$ strong random jitter
\end{itemize}
where $\bm{\mu}_{\text{CIFAR}}$ and $\bm{\sigma}_{\text{CIFAR}}$ respectively denote the per-channel mean and standard deviation computed on the CIFAR-10 training set. For each distribution, we generate a validation set of $50$k i.i.d.\ samples, and probe networks trained on the standard CIFAR-10 with $20\%$ corrupted labels.

For reference, we also plot the empirical Lipschitz constant estimated on the CIFAR-10 train and test split. For both out-of-sample and in-sample validation datasets, it can be observed how the empirical Lipschitz constant remains bounded, and closely follows the double descent trend for the test error (c.f.r.\ Figure~\ref{fig:findings:lipschitz}). Remarkably, the empirical Lipschitz constant on random validation data closely matches the one estimated on the training set, supporting the hypothesis of globally bounded function complexity.

\section{Proofs}
\label{sec:appendix:proofs}

In this section, we provide proofs for the formal statements presented in section~\ref{sec:findings}. We begin by deriving results on boundedness of model function input-space gradients via parameter-space gradients. Then, we prove results of section~\ref{sec:findings:curvature}. 

\subsection{Duality of Linear Layers}

We begin by providing a general form of Theorem~\ref{thm:findings:sobolev}.

\sobolev*
\begin{proof}
We recall that, by duality of inputs and weights in linear transformations, the partial derivatives $\frac{\partial\mathbf{f}}{\partial\mathbf{x}^{\ell-1}}$ and $\frac{\partial\mathbf{f}}{\partial\bm{\theta}^\ell}$ w.r.t.\ any layer of the form $\mathbf{x}^\ell = \phi(\bm{\theta}^\ell\mathbf{x}^{\ell-1})$ are tied by the upstream gradient $\frac{\partial \mathbf{f}}{\partial (\bm{\theta}^\ell\mathbf{x}^{\ell -1})}$. Indeed, by the chain rule
\begin{equation}
\label{eq:appendix:duality}
\begin{cases}
    \frac{\partial \mathbf{f}}{\partial \mathbf{x}^{\ell -1}} &= \frac{\partial \mathbf{f}}{\partial(\bm{\theta}^\ell\mathbf{x}^{\ell -1} + \mathbf{b}^\ell)}\bm{\theta}^\ell \\
    \frac{\partial \mathbf{f}}{\partial \bm{\theta}^\ell} &= \frac{\partial \mathbf{f}^T}{\partial(\bm{\theta}^\ell\mathbf{x}^{\ell -1} + \mathbf{b}^\ell)} {\mathbf{x}^{\ell -1}}^T
\end{cases}
\end{equation}

Let $\mathbf{f}: \mathbb{R}^d \times \mathbb{R}^p \to \mathbb{R}$ be an arbitrary function composing linear layers with (optional) nonlinearities $\phi: \mathbb{R} \to \mathbb{R}$, that are differentiable a.e. Furthermore, let $\mathbb{R}^{d_\ell}$ denote the codomain of layer $\ell$, i.e. $\mathbf{x}^\ell \in \mathbb{R}^{d_\ell}$.

Combining the two conditions in Equation~\ref{eq:appendix:duality} gives 
\begin{equation}
\label{eq:appendix:chain}
\begin{aligned}
    \frac{\partial \mathbf{f}}{\partial \bm{\theta}^\ell}\mathbf{x}^{\ell -1} &= \frac{\partial \mathbf{f}^T}{\partial (\bm{\theta}^\ell\mathbf{x}^{\ell -1} + \mathbf{b}^\ell)}\|\mathbf{x}^{\ell-1}\|_2^2 \\
    \frac{\partial \mathbf{f}}{\partial \mathbf{x}^{\ell-1}} &= \frac{{\mathbf{x}^{\ell -1}}^T}{\|\mathbf{x}^{\ell -1}\|_2^2}\frac{\partial \mathbf{f}^T}{\partial \bm{\theta}^\ell}\bm{\theta}^\ell \\
    \left\|\frac{\partial \mathbf{f}}{\partial \mathbf{x}^{\ell-1}}\right\|_2^2 &\le \left\|\frac{{\mathbf{x}^{\ell -1}}^T}{\|\mathbf{x}^{\ell -1}\|_2^2}\right\|_2^2\left\|\frac{\partial \mathbf{f}^T}{\partial \bm{\theta}^\ell}\right\|_2^2\left\|\bm{\theta}^\ell\right\|_2^2 \\
    \left\|\frac{\partial \mathbf{f}}{\partial \mathbf{x}^{\ell-1}}\right\|_2^2 \frac{\left\|\mathbf{x}^{\ell -1}\right\|_2^2}{\left\|\bm{\theta}^\ell\right\|_2^2} &\le \left\|\frac{\partial \mathbf{f}}{\partial \bm{\theta}^\ell}\right\|_2^2 \\
\end{aligned}
\end{equation}

Given a set of data points $\{\mathbf{x}_1, \ldots, \mathbf{x}_N\} \subset \mathbb{R}^d$, with corresponding activations $\{\mathbf{x}_1^\ell, \ldots, \mathbf{x}_n^\ell \} \subset \mathbb{R}^{d_\ell}$ then:
\vspace{.5em}
\begin{equation}
\label{eq:appendix:chain_finite}
\begin{aligned}
    \left\|\frac{\partial \mathbf{f}}{\partial \mathbf{x}^{\ell-1}}\right\|_2^2 \frac{\min\limits_n\left\|\mathbf{x}^{\ell -1}(\mathbf{x}_n)\right\|_2^2}{\left\|\bm{\theta}^\ell\right\|_2^2} &\le \left\|\frac{\partial \mathbf{f}}{\partial \bm{\theta}^\ell}\right\|_2^2 \\
\end{aligned}
\end{equation}

Particularly, given the training set $\mathcal{D}$, applying Equation~\ref{eq:appendix:chain_finite} to the gradients $\nabla_\mathbf{x}\mathbf{f}$ and $\nabla_{\bm{\theta}}\mathbf{f}$, and taking the expectation over $\mathcal{D}$ on both sides concludes the proof.

\end{proof}

We note that, while a similar bound was observed in \citet{ma2021linear} for the first layer gradients, the authors propose to bound $\nabla_{\bm{\theta}}\mathbf{f}$ via a uniform bound that linearly depends on model size, and thus cannot capture double descent. In this work, we improve upon their bounds,generalizing the result of~\citet{ma2021linear} to any layer beyond the first, and by explicitly studying Equation~\ref{eq:findings:sobolev_loss} in connection to parameter-space dynamics and double descent.

\subsection{Extension to loss functions}
\label{sec:appendix:sobolev_loss}

For losses $\mathcal{L}: \mathbb{R}^p \times \mathbb{R}^d \times \mathcal{Y} \to \mathbb{R}^+$ of the exponential family~\citep{brebisson2015exploration} like mean squared error and cross entropy, the following corollary holds.

\sobolevloss*

\begin{proof}
For each sample $(\mathbf{x}_n, y_n)$, the gradient $\frac{\partial \mathcal{L}}{\partial \mathbf{f}}$ takes the form $\mathbf{p}_n - \mathbf{e}_{y_n}$. For crossentropy, $\mathbf{p}_n$ denotes the softmax normalized logits, and $\mathbf{e}_{y_n}$ the one-hot encoded label $y_n$. For mean squared error, $\mathbf{p}_n = \mathbf{f}_{\bm{\theta}}(\mathbf{x}_n)$. 

When composing the loss with a model $\mathbf{f}$, we have 
\begin{equation}
\label{eq:appendix:residual}
\begin{aligned}
 \nabla_{\bm{\theta}} (\mathcal{L} \circ \mathbf{f}) &= (\mathbf{p}_n - \mathbf{e}_{y_n})\nabla_{\bm{\theta}}\mathbf{f} \\
 \nabla_{\mathbf{x}} (\mathcal{L} \circ \mathbf{f}) &= (\mathbf{p}_n - \mathbf{e}_{y_n})\nabla_{\mathbf{x}}\mathbf{f}
\end{aligned}
\end{equation}
Applying Theorem~\ref{thm:findings:sobolev} trivially concludes the proof.
\end{proof}

Importantly, the term $\|\mathbf{p}_n - \mathbf{e}_{y_n}\|$ is inversely proportional to the model confidence~\citep{brier1950verification} $\sigma = 1 - \frac{1}{N}\sum\limits_{n = 1}^N \|\mathbf{p}_n - \mathbf{e}_{y_n}\|$, which generally saturates for large models, typically yielding high confidence predictions at convergence. In Figure~\ref{fig:appendix:confidence} we empirically study how the quantity is affected by model size, to understand its impact on the bounds presented throughout section~\ref{sec:findings}. Model confidence is observed to monotonically depend on model size. This highlights the fact that the double descent trends observed throughout this paper are to be attributed to the model function, as shown throughout our experiments for the empirical Lipschitz constant.

\begin{figure}[t]
    \centering
    \includegraphics[width=0.3\linewidth, trim={0cm 0cm 13cm 12cm}, clip]{./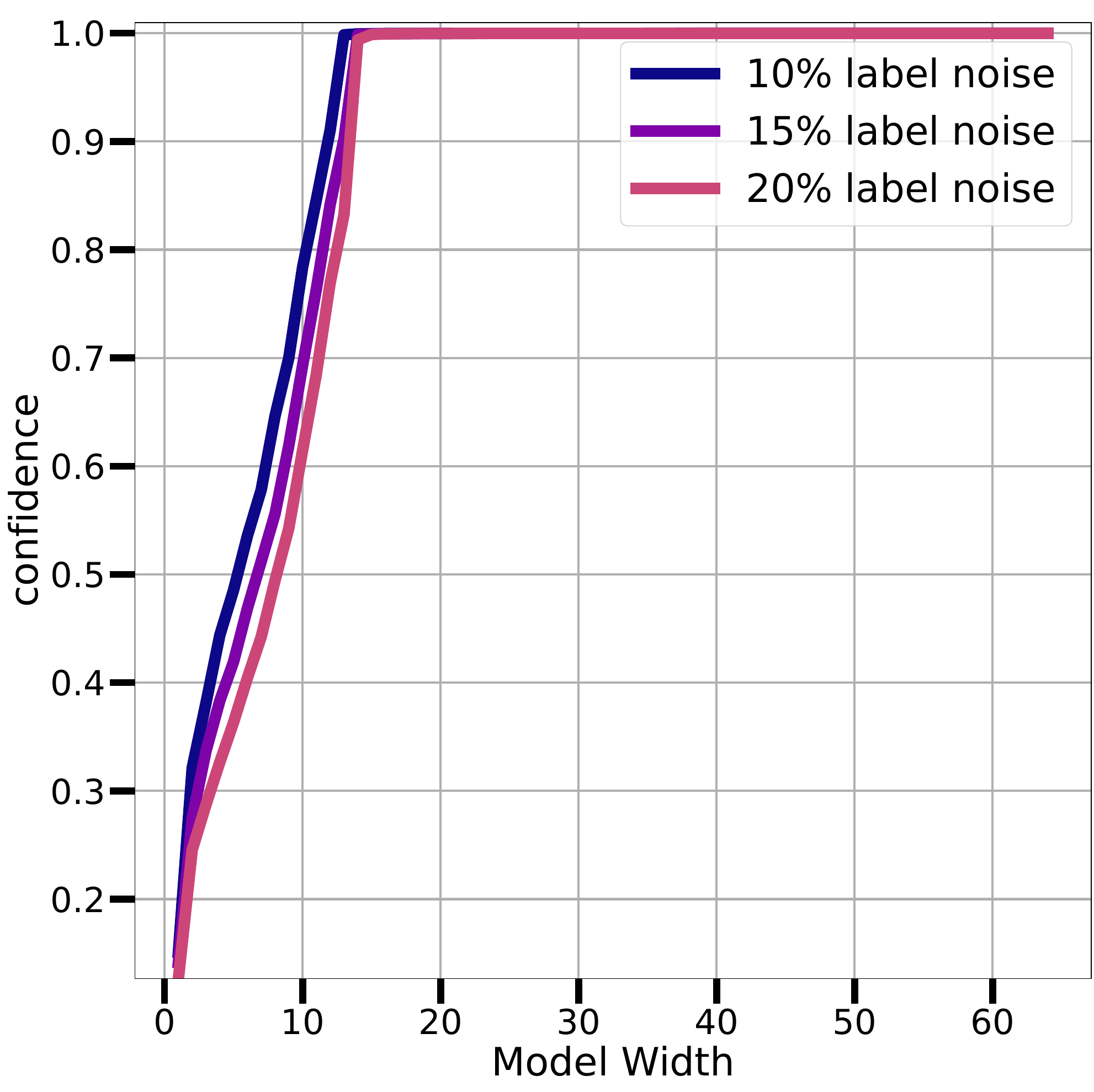}~
    \includegraphics[width=0.3\linewidth, trim={0cm 0cm 13cm 12cm}, clip]{./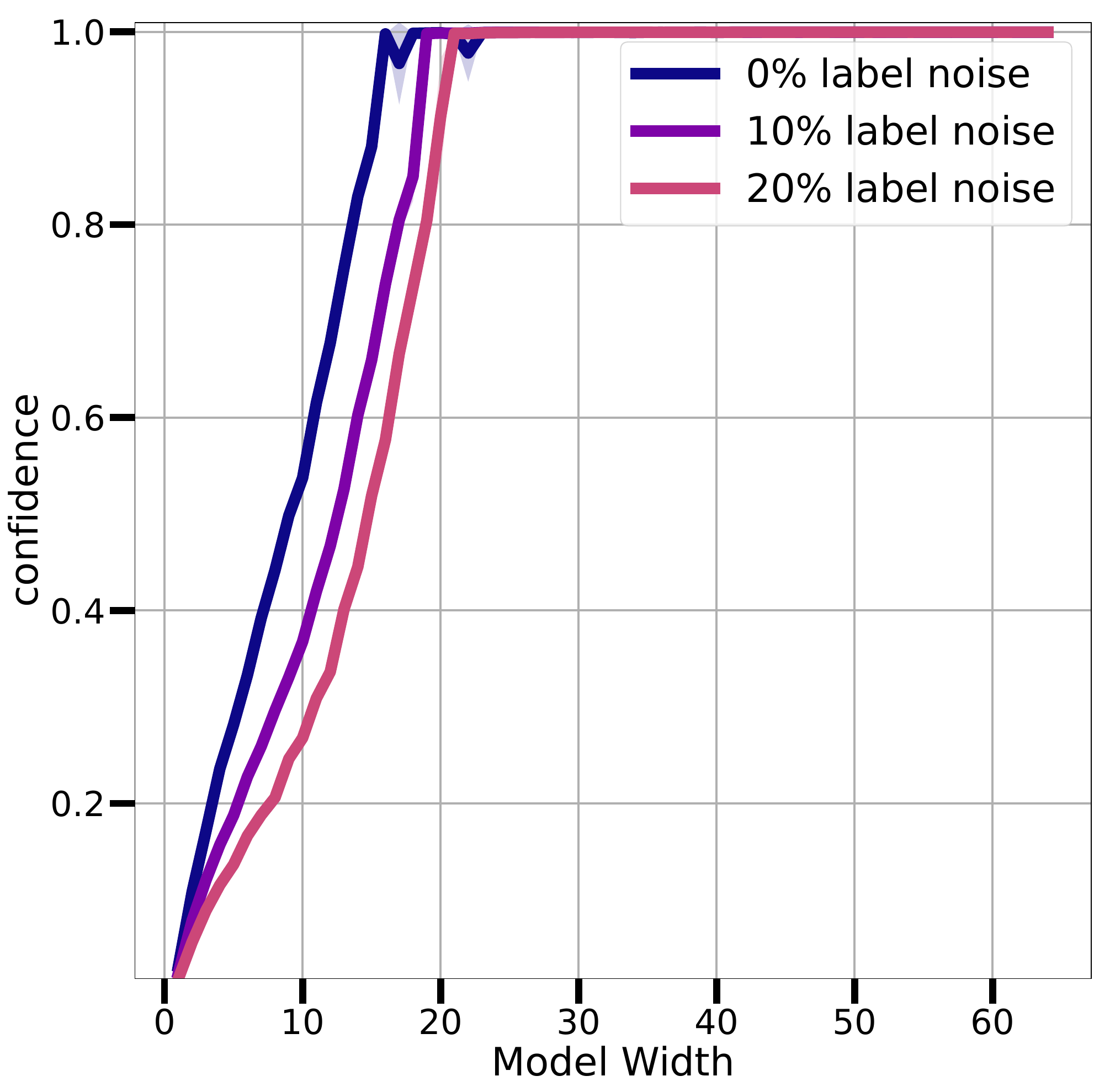}~
    \includegraphics[width=0.3\linewidth, trim={0cm 0cm 13cm 12cm}, clip]{./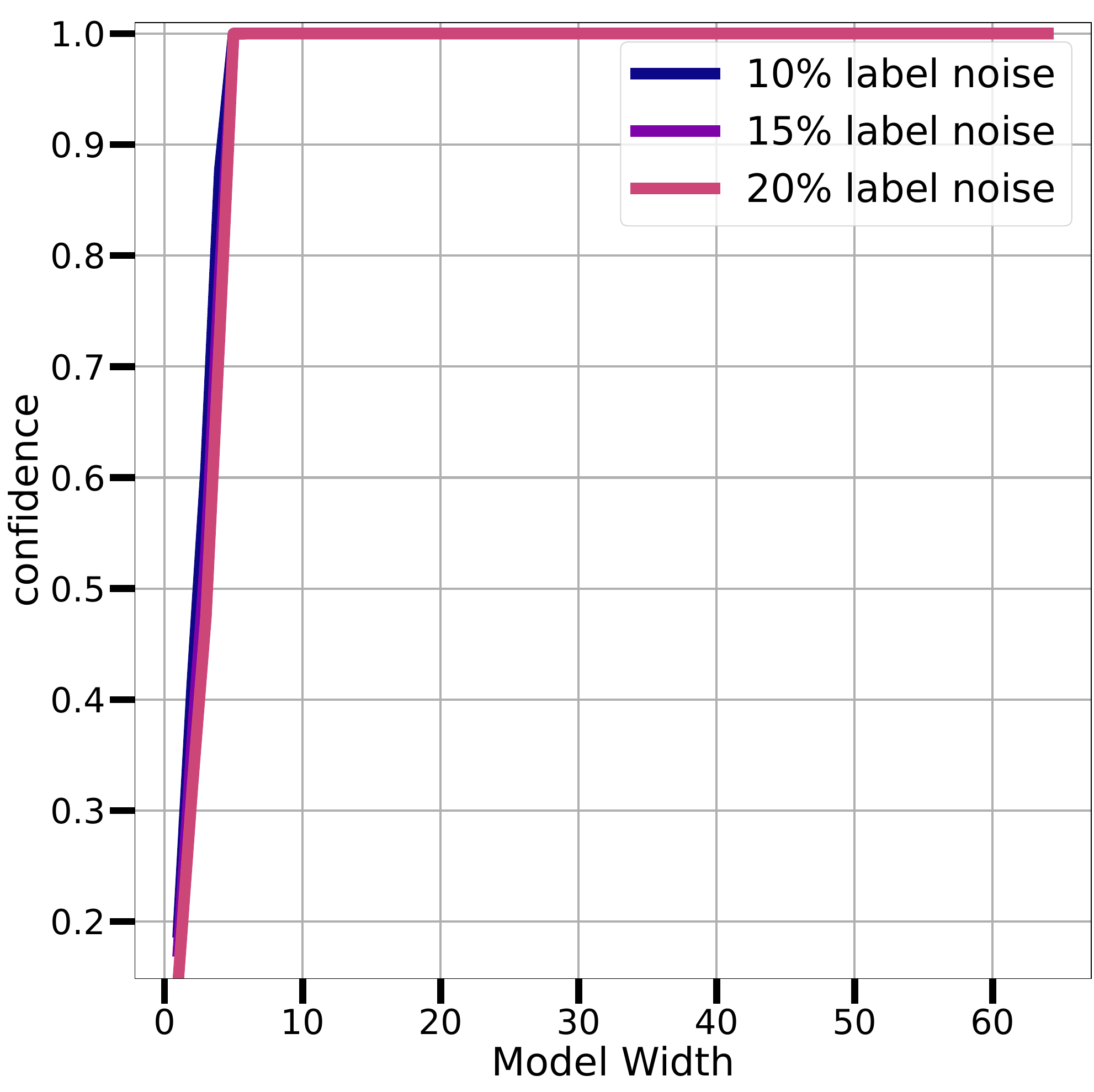}
    \caption{\textbf{Prediction confidence} as a function of model size, for ConvNets trained on CIFAR-10 (left), CIFAR-100 (middle) and ResNet18s trained on CIFAR-10. For all experimental settings, the model confidence monotonically depends on model size. By Equation~\ref{eq:appendix:residual}, this confirms that the non-monotonic trends reported in this work are caused by the model function $\mathbf{f}$.}
    \label{fig:appendix:confidence}
\end{figure}

Next, we provide proofs for section~\ref{sec:findings:curvature}.

\subsection{Connection to Parameter-Space Curvature}
\label{sec:appendix:curvature}

In this section, we prove formal statements connecting Theorem~\ref{thm:findings:sobolev_loss} to the dynamics of SGD in proximity of a critical point $\bm{\theta}^*$. For our proofs, we use the mean square error $\mathbb{E}_\mathcal{D}\mathcal{L} = \frac{1}{2N}\sum\limits_{n=1}^N (f_{\bm{\theta}}(\mathbf{x}_n) - y_n)^2$, and adopt a recent model of stochastic noise proposed by~\citet{liu2021noise}. The crux of the proof of Theorem~\ref{thm:findings:curvature} is bounding $\mathbb{E}_{\mathcal{D}}\|\nabla_{\bm{\theta}}\mathcal{L}\|$ with $\trace{(H)}$, which we can later connect to the noise uncentered covariance $S$.

\curvature*
\begin{proof}

The proof is broken down in two parts. First, we write out explicitly the expected Hessian $H$ of $\mathcal{L}$.

\begin{equation}
\label{eq:appendix:mse_hessian}
    H = \frac{1}{N}\sum\limits_{n=1}^N\frac{\partial^2}{\partial\bm{\theta}\partial{\bm{\theta}}^T} \mathcal{L}_n = \frac{1}{N}\sum\limits_{n=1}^N \mathcal{L}_n''\nabla_{\bm{\theta}}\mathbf{f}_n^T\nabla_{\bm{\theta}}\mathbf{f}_n  + \frac{1}{N}\sum\limits_{n=1}^N \mathcal{L}_n'\frac{\partial^2}{\partial\bm{\theta}\partial{\bm{\theta}}^T}\mathbf{f}_n
\end{equation}
with $\mathbf{f}_n := \mathbf{f}(\mathbf{x}_n,\bm{\theta})$, for $n = 1, \ldots, N$.

By noting that $\mathcal{L}''_n = 1, \forall n$, and that $\mathcal{L}'_n \propto \mathcal{L}_n \to 0$ as $\|\bm{\theta} - \bm{\theta}^*\|^2 \to 0$ for interpolating models, the expected loss Hessian amounts to the cross term
\begin{equation}
\label{eq:appendix:mse_outer}
    H = \frac{1}{N}\sum\limits_{n=1}^N\nabla_{\bm{\theta}}\mathbf{f}_n^T\nabla_{\bm{\theta}}\mathbf{f}_n + \mathcal{O}(\mathcal{L}(\bm{\theta}))
\end{equation}

Next, we connect $\|\nabla_{\bm{\theta}}\mathcal{L}\|_2^2$ to $H$. We note that $\frac{1}{N}\sum\limits_{n=1}^N\nabla_{\bm{\theta}}\mathcal{L} = \mathcal{L}'_n\nabla_{\bm{\theta}}\mathbf{f}_n$. Then, by definition of norm:
\begin{equation}
\label{eq:appendix:norm}
\begin{aligned}
    \mathbb{E}_\mathcal{D}\|\nabla_{\bm{\theta}}\mathcal{L}\|_2^2 &= \mathbb{E}_\mathcal{D}\nabla_{\bm{\theta}}\mathcal{L} \nabla_{\bm{\theta}}\mathcal{L}^T \\
                                            &= \mathbb{E}_\mathcal{D}\trace{(\nabla_{\bm{\theta}}\mathcal{L}^T \nabla_{\bm{\theta}}\mathcal{L})} \\
                                            &= \trace{\big(\frac{1}{N}\sum\limits_{n=1}^N{[}{\mathcal{L}_n'}^2~\nabla_{\bm{\theta}}\mathbf{f}_n^T \nabla_{\bm{\theta}}\mathbf{f}_n{]}  \big)} \\
                                            &\le 2(\max\limits_{1 \le n \le N}{\mathcal{L}_n'}^2) (\trace{(\frac{1}{N}\sum\limits_{n=1}^N\nabla_{\bm{\theta}}\mathbf{f}_n \nabla_{\bm{\theta}}\mathbf{f}_n^T)}) \\
                                            &= 2\mathcal{L}_{\max}(\bm{\theta}) \trace{(H)} \\
                                            &= 2\mathcal{L}_{\max}(\bm{\theta}) \laplace{\mathcal{L}(\bm{\theta})}
\end{aligned}
\end{equation}
\end{proof}
Having built a connection between $\nabla_{\bm{\theta}}\mathcal{L}$ and $H$, we can prove Corollary~\ref{thm:findings:covariance}.

\covariance*
\begin{proof}
    \begin{equation}
    \label{eq:appendix:covariance}
    \begin{aligned}
        \mathbb{E}_\mathcal{D}\|\nabla_{\bm{\theta}}\mathcal{L}\|_2^2 &= \mathbb{E}_\mathcal{D}\nabla_{\bm{\theta}}\mathcal{L} \nabla_{\bm{\theta}}\mathcal{L}^T \\
                                                                      &= \mathbb{E}_\mathcal{D}\trace{(\nabla_{\bm{\theta}}\mathcal{L}^T \nabla_{\bm{\theta}}\mathcal{L})} \\
                                                                      &= \trace{\big(\frac{1}{N}\sum\limits_{n=1}^N{[}{\mathcal{L}_n'}^2~\nabla_{\bm{\theta}}\mathbf{f}_n^T \nabla_{\bm{\theta}}\mathbf{f}_n{]}  \big)} \\
                                                                      &= \trace{(S)}
    \end{aligned}
    \end{equation}
\end{proof}






\end{document}